\theoremstyle{plain}
\newtheorem{theorem}{Theorem}
\newtheorem{lemma}{Lemma}
\theoremstyle{remark}
\newtheorem{remark}{Remark}
\theoremstyle{definition}
\newtheorem{definition}{Definition}
\newtheorem*{problem*}{Problem Statement}
\newtheorem*{objective*}{Objective}
\newcommand{\vect}[1]{\boldsymbol{#1}}
\newcommand{\dsqrbrac}[1]{\llbracket{#1}\rrbracket}
\newcommand{\vectnot}[3]{\vect{#1}_{#2}^{(#3)}}
\newcommand{\scalnot}[3]{{#1}_{#2}^{(#3)}}
\newcommand{\weight}[2]{{w}_{#1}^{(#2)}}
\newcommand{\particle}[2]{{\Theta}_{#1}^{(#2)}}
\newcommand{\tb}[1]{\textcolor{black}{#1}}
\newcommand\BibTeX{{\rmfamily B\kern-.05em \textsc{i\kern-.025em b}\kern-.08em
T\kern-.1667em\lower.7ex\hbox{E}\kern-.125emX}}
\begin{document}

\title{Mr.MSTE: Multi-robot Multi-Source Term Estimation with Wind-Aware Coverage Control}


\author{
    \IEEEauthorblockN{Rohit V. Nanavati$^{1,2}$, Tim J. Glover$^{2,*}$, Matthew J. Coombes$^{2}$, and Cunjia Liu$^{2}$}
    \IEEEauthorblockA{$^{1}$Department of Aerospace Engineering, Indian Institute of Technology Bombay, Powai, Mumbai 400076 India} 
    \IEEEauthorblockA{$^{2}$Aeronautical and Automotive Engineering Department, Loughborough University, Loughborough, LE113TU, UK, }
    \IEEEauthorblockA{${*}$ now at Defence Science and Technology Laboratory (Dstl), UK}
}

\maketitle


\begin{abstract}
This paper presents a Multi-Robot Multi-Source Term Estimation (MRMSTE) framework that enables teams of mobile robots to collaboratively sample gas concentrations and infer the parameters of an unknown number of airborne releases. The framework is built on a hybrid Bayesian inference scheme that represents the joint multi-source probability density and incorporates physics-informed state transitions, including source birth, removal, and merging induced by atmospheric dispersion. A superposition-based measurement model is naturally accommodated, allowing sparse concentration measurements to be exploited efficiently. To guide robot deployment, we introduce a wind-aware coverage control (WCC) strategy that integrates the evolving multi-source belief with local wind information to prioritize regions of high detection likelihood. Unlike conventional coverage control or information-theoretic planners, WCC explicitly accounts for anisotropic plume transport when modelling sensor performance, leading to more effective sensor placement for multi-source estimation. Monte Carlo studies demonstrate faster convergence and improved separation of individual source beliefs compared to traditional coverage-based strategies and small-scale static sensor networks. Real-world experiments with $\mathrm{CO}_2$ releases using TurtleBot platforms further validate the proposed approach, demonstrating its practicality for scalable multi-robot gas-sensing applications.
\end{abstract}

\begin{IEEEkeywords}
    Multi-agent systems, sensor-based planning, coverage control, multi-source term estimation
\end{IEEEkeywords}

\section{Introduction}   
    Emergency response to accidental or deliberate releases of chemical, biological, radiological, and nuclear (CBRN) contaminants into the atmosphere requires efficient characterization of atmospheric dispersion by estimating the key source-term parameters, such as release location, emission rate, and transport properties \cite{HUTCHINSON2017130,BURGUES2020141172}. Traditional approaches to source term estimation (STE) rely on deploying arrays of static sensors within the area of interest to measure contaminant concentrations and infer the underlying release parameters. However, such solutions are only practical when potential incidents are predictable and sensor infrastructure can be pre-deployed, such as at oil and gas sites, chemical warehouses, and nuclear facilities. To overcome these limitations, there has been growing interest in leveraging mobile robotics for source localization and estimation \cite{Francis2022}, offering significant advantages in scalability and flexibility. Mobile robots are particularly well suited to complex and hazardous scenarios, such as localizing unknown release sources in cluttered environments or responding to rapidly evolving dispersion processes in large-scale urban settings.

    Mobile sensing platforms require advanced path planning functions to deploy or guide robots in the search domain. Many studies focus on intelligent search algorithms for a single robot to perform source term estimation or directly seeking the source. Early approaches rely on concentration gradient-based techniques, commonly referred to as chemotaxis, which mimic biological responses to chemical signals \cite{Azuma2012,WeiLi2006}. Alternatively, information-theoretic approaches—often referred to as infotaxis or cognitive search—select sampling locations to maximise the expected reduction in uncertainty of the estimated source term \cite{RISTIC20161,voges2014reactive,hutchinson2018information,Park2021,Callum2022,rhodes2023autonomous}. The reduction of uncertainty is normally quantified using information measures (e.g., entropy, mutual information) within a sequential Bayesian estimation framework, which processes the noisy sensor measurements to establish the belief of the source term. Compared to chemotaxis-based methods, information-theoretic algorithms exhibit superior performance, particularly in turbulent environments where concentration gradients of CBRN particulates are erratic, and sensor measurements are intermittent \cite{martin2010effectiveness}. This trend is further supported by recent experimental validation in robotic source search studies such as \cite{hutchinson2018information,Park2021,Callum2022,Ercolani2022}. However, these methods are predominantly designed for single-source scenarios and do not naturally extend to multi-source settings, where inference is complicated by measurement superposition and ambiguity of the source number.

    \begin{figure}
        \centering
        \includegraphics[width=0.5\linewidth]{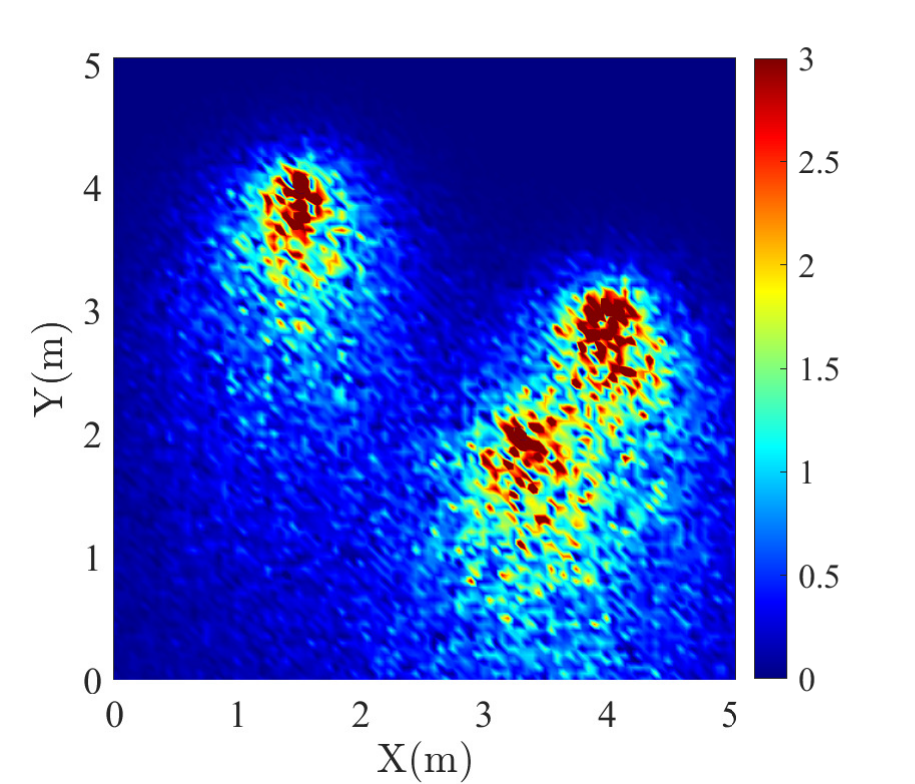}
        \caption{Example plot of gas concentrations distribution at different locations resulting from superposition of dispersion from three gas sources subject to a probability of detection}
        \label{fig:dummygroundtruth}
    \end{figure}
    
    The efficiency of source search and estimation tasks can be significantly improved by employing multiple sensing platforms that operate collaboratively. Multi-robot source localization has been actively investigated over the past decade, primarily focusing on coordination and information-sharing strategies, with most efforts limited to single-source scenarios. Coordinating multiple robots offers the advantage of covering larger areas rapidly and overcoming challenges that often arise in single-robot searches, such as getting stuck in the local plume or failing to identify additional release sources. However, such a potential for more complex multi-source scenarios remains largely unexplored.
     
    Extending current multi-robot search algorithms to handle multiple airborne sources is non-trivial. The primary challenge lies in designing robust inference algorithms capable of characterizing multi-source dispersion under a superposition-based measurement model. Specifically, the contaminant concentration measured at any location is a cumulative sum of emissions from all sources in the environment, modulated by their relative positions and emission characteristics, as shown in Fig. \ref{fig:dummygroundtruth}. Because the individual contributions of each source to the measured concentration are unknown, solving the inverse inference problem to obtain the joint multi-source probability density (JMPD) of the source term becomes highly challenging. Moreover, the path planning and coordination policies need to incorporate the evolving JMPD belief to guide the sensing actions. As a result, directly applying existing multi-robot search or coordination strategies, typically designed for single-source inference, is unlikely to be effective in multi-source settings without substantial reformulation. In the next subsection, we review related work in multi-source inference and multi-robot coordination, and identify the critical gaps addressed in this paper.
    
	\subsection{Related Literature}
    In the literature, many algorithms have been developed for multi-source term estimation using static sensor networks \cite{yee2010validation,Yee2012,WADE201345,SINGH2015402,DANIYAN2024107198}. These works primarily leverage the relatively large amounts of measurements provided by sensor arrays to identify the number of sources and estimate parameters associated with each source. Different Bayesian estimation techniques, such as reversible-jump Markov chain Monte Carlo (RJ-MCMC) or random finite set methods, are frequently employed to derive the multi-source probability density. However, as previously discussed, deploying a large static network within the search domain is often impractical, particularly in dynamic or unpredictable scenarios. 
    
    Shifting to single-robot-based searches for multi-sources, while offering flexibility, is constrained by the sparse measurements available per sampling instant, leading to extended mission times and a heavier reliance on computationally expensive informative path-planning strategies. In fact, single-robot multi-source term estimation solutions have predominantly been designed for radiological estimation, e.g., \cite{RISTIC20101225,BAI2023104529,LAZNA2024}, where dispersion patterns and sensor readings are less affected by atmospheric turbulence compared to the gas or particulate dispersion. To address these limitations, integrating the sampling flexibility of mobile sensing with the spatial diversity of measurements from multi-robot systems offers a promising approach.

    Several multi-robot planning strategies have been developed for the tasks of source term estimation or source seeking, such as \cite{PARK202072, RISTIC202013, Ristic2017, SONG2020103414, HAJIEGHRARY20161698, song2019multi, Ercolani2023, JANG2023120033, Bourne2020, Zhang10250987, cheng2024sniffysquad}. These strategies focus on collaborative decision making to achieve either centralised or distributed multi-robot sensing, tracking, and exploration. For instance, \cite{SONG2020103414, song2019multi} implemented a weighted social Bayesian estimation protocol to update each robot's belief about the source term. Following this, each robot selected a traditional Infotaxis reward-based control action from a limited set of discrete options. Similarly, \cite{RISTIC202013} proposed a decentralized formation control strategy, where each robot computed its own Infotaxis reward-based control action locally and then performed consensus with other robots to maintain the networked formation. In another approach, \cite{PARK202072, JANG2023120033} introduced a range of active and passive coordination strategies utilizing the Infotaxis-based reward function or its variants to determine the control actions for each robot within the network. By optimizing the overall estimated information gain, these strategies enabled robots to negotiate their individual actions sequentially to achieve collaboration. Also focusing on multi-robot collaboration, \cite{Bourne2020} proposed a multi-agent information theoretic path planning strategy called DeMAIT, achieving an efficient source term estimate by collectively maximizing the mutual information gain of the group at the next sensing instance.

     The above-mentioned research demonstrates that increasingly collaborative search strategies can be designed within an information-theoretic framework for multi-robot single-source scenarios. While promising, implementation of these approaches can be both computationally and communicationally demanding. Informative path planners (IPP) typically compute the expected information gain for each candidate action (or sampling location) from a finite set of options, requiring multiple runs of the Bayesian estimator to predict posterior beliefs at new locations. Incorporating a non-myopic planning horizon further increases computational complexity, particularly for multi-robot systems, where the augmented action set introduces additional overhead.
     
     To address these limitations, alternative approaches for multi-robot source seeking or STE have been explored. For example, \cite{Zhang10250987} proposed a gradient-based method where robots move along the gradient descent directions of an information-based loss function defined as the trace of the inverse of the Fisher information matrix. However, this method relies on an Extended Kalman Filter and an isotropic measurement model, which assumes that measurement values depend only on the source–sensor distance. In a recent work, \cite{NANAVATI2024102503} developed a coverage control based coordination policy, which considers the information rewards along the paths generated by a underlying coverage control based spatial planner. These paths drive the robots collectively to high likelihood regions to contain a single source, but the use of a traditional coverage control based spatial planner resulted in the necessity of maximizing the information gain over a time horizon to determine the sampling locations. Although this approach intends to reduce the dimensionality of the action set for multi-robots, its effectiveness is constrained by computational cost depending on the size of time horizon, size of search region, and robot speeds.

    Together, the current literature highlights diverse and innovative methods in which multi-robot systems can address the challenges of collaborative single-source term estimation or source seeking. However, significant gaps remain in developing robust multi-robot coordination strategies and Bayesian inference algorithms for complex multi-source scenarios. Notably, the only work found in this domain is \cite{TRAN2023665}, which proposed a collaborative sequential Monte Carlo (CSMC) estimation framework for multi-source localization and mapping using flocking robots. While innovative, this framework is primarily focused on gas distribution mapping based on Gaussian mixture models rather than physics-grounded atmospheric transport and dispersion models. Furthermore, the approach has not been rigorously validated using real gas release data, limiting its applicability in real-world scenarios. These limitations highlight that multi-robot and multi-source STE remains an unsolved problem and require further development of effective inference and coordination algorithms capable of practical applications.

	\subsection{Contributions}
    \tb{To address the challenges associated with information-theoretic planning and traditional coverage control identified in the previous subsection, this paper proposes a wind-aware coverage control(WCC) -based multi-robot multi-source term estimation (MRMSTE) framework. This approach enables the estimation of multiple source terms of bounded unknown number of release sources, using multiple sensing platforms. The main contributions of this work are threefold.}

    \tb{\begin{enumerate}
        \item \textbf{Hybrid Bayesian Inference for Multi-Source Term Estimation:} 
        We first propose a hybrid inference framework that jointly estimates multiple airborne source terms, and the number of sources. The framework is based on a JMPD formulation, which incorporates physics-informed state transition models for source birth, death, and merge events—an approach that extends beyond traditional multi-target tracking filters (e.g., \cite{Kreucher2005,DANIYAN2024107198}).
        \item \textbf{Wind-aware coverage control for multi-robot sensing:}
        To actively guide sensing, we then introduce a WCC strategy that integrates the evolving JMPD belief with average wind information over the search domain to prioritise regions of high detection likelihood. By embedding an anisotropic, plume-informed sensing performance function into a coverage control framework, WCC enables efficient multi-robot deployment and data collection without the combinatorial complexity of the sensing actions as in information-theoretic path planning. Additionally, the use of a coverage control-based design inherits collision-avoidance, eliminating the need for explicit collision avoidance mechanisms as in other path-planning strategies.
        \item \textbf{Comprehensive evaluation and real-world validation}:
        The proposed MRMSTE framework is evaluated through extensive Monte Carlo studies, including comparisons with traditional coverage control strategies and static sensor networks. We further validate the approach through real-world experiments using TurtleBot platforms equipped with CO$_2$ sensors, demonstrating practical multi-source estimation under realistic dispersion conditions and illustrating the clear advantages of the proposed WCC path planner over traditional coverage control solutions. To the best of our knowledge, this is the first real-world experimental demonstration of a working MRMSTE solution.  
    \end{enumerate}}

    \tb{Together, these contributions establish a unified inference-and-planning framework for scalable multi-robot estimation of multiple airborne sources under realistic atmospheric dispersion dynamics, establishing a foundation for future real-world applications.}
	
	\section{Preliminaries and Problem Formulation}
    The notation $\partial \mathcal{A}$ is used to refer to the boundary of a closed set $\mathcal{A}$. Additionally, all vectors are expressed using a ``boldsymbol". The notation ``$\vect{a}\cdot\vect{b}$" refers to the dot product between vectors $\vect{a}$ and $\vect{b}$. The notation $\dsqrbrac{1,L}$ denotes set of all the natural number from $1$ to $L$. Moreover, $\Xi_L$ denotes a set of $L!$ permutations of the set $\dsqrbrac{1,L}$. We now include the definition of the metric $\mathcal{O}$ which will be used to analyse the performance of the proposed multi-robot multi-source estimation algorithm in the upcoming sections.

    \begin{definition} \cite{Rahmathullah2017}\label{def:GOSPA}
            \tb{Consider two sets $X = [\scalnot{x}{}{1},...,\scalnot{x}{}{|X|}]$ and $Y = [\scalnot{y}{}{1},...,\scalnot{y}{}{|Y|}]$. The generalized optimal sub-pattern assignment (GOSPA) metric between $X$ and $Y$, denoted by $\mathcal{O}_{X,Y}$, for $|X|<|Y|$ as
        \begin{align}\label{eq:GOSPA}
            \mathcal{O}_{X,Y} = \left[\min_{\xi\in\Xi_{|Y|}}
                 \sum\limits_{i=1}^{|X|}d_c(\scalnot{x}{}{i},\scalnot{y}{}{\xi_i})^2+ c^2(|Y|-|X|)\right]^{\frac{1}{2}}\,,
        \end{align}
        where $d_c(\scalnot{x}{}{i},\scalnot{y}{}{\xi_i}) = \min(\left|\left|\scalnot{x}{}{{\xi_m}}-\scalnot{y}{}{m}\right|\right|,c)$ is a cut-off metric \cite{mahler2014} with $c>0$ denoting some cut-off threshold, $\xi$ is a candidate permutation sequence from $\Xi_{|Y|}$ and $\xi_i$ is the $i^{th}$ element of $\xi$ permutation.}
    \end{definition}

    \subsection{Motion and Measurement Model}\label{sec:measurementModel}
	Consider a set of $n$ homogeneous robots in a closed operating region denoted by $\mathcal{D} \subset \mathbb{R}^2$, whose boundaries are denoted by $\partial \mathcal{D}$. The operating region is assumed to be obstacle free. Let each robot be governed by a differential drive dynamics given by
        \begin{align}\label{eq:dynamics}
            \dot{\vect{p}}^{(i)} &= v^{(i)}\vec{e}_{\gamma^{(i)}},~~~
            \dot{\gamma}^{(i)} = \omega^{(i)}
        \end{align}
    where $\vect{p}^{(i)}$ denote the robot position vector, $\gamma^{(i)}\in[-\pi,\pi)$ denote the heading of the robot body X-axis as shown in Fig. \ref{fig:robot_dynamics}, and $\vec{e}_{\gamma^{(i)}} = \left[\cos\gamma^{(i)}~~\sin\gamma^{(i)}\right]$. The linear speed, $v^{(i)}$ and angular speed $\omega^{(i)}$ corresponding to the $i^{th}$ robot are its control inputs.    
    \begin{figure}[!ht]
        \centering
        \includegraphics[width=0.5\linewidth]{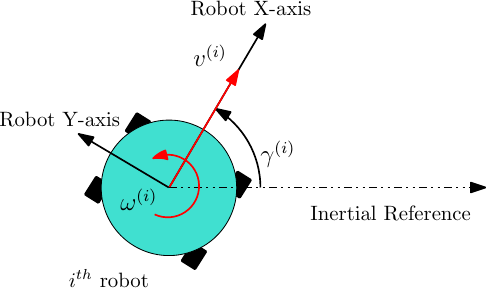}
        \caption{Robot and sensor heading angles.}
        \label{fig:robot_dynamics}
    \end{figure}

	\tb{For $M$ sources of gas release in the environment, let the concentration measurement obtained by the $i^{th}$ robot at $k^{th}$ instant, denoted by $\scalnot{z}{k}{i}:=\scalnot{z}{k}{i}(\vectnot{p}{k}{i})~\forall~i\in\dsqrbrac{1,n}$, be modelled as \cite{hutchinson2018information,Hutchinson2019JFR}}
    \begin{align} \label{eq:sensor}
        \scalnot{z}{k}{i} = \begin{cases}
            C_{M}(\vectnot{p}{k}{i}) + \nu_k & \text{if detection event} \\
            0 & \text{if non-detection event}
        \end{cases}\,.
    \end{align}
    The function $C(\vect{p}^{(i)})$ is the ground truth concentration at $\vect{p}^{(i)}$ modelled as
    \begin{align}
        C_{M}(\vect{p}^{(i)}) =
        \sum\limits_{m=1}^{M}&\left[\frac{Q_m}{2\pi D} \exp{\left(\frac{\vect{v}_w\cdot(\vect{p}^{(i)}-\vect{s}_m)}{2D}\right)} K_0\left(\frac{||\vect{p}^{(i)}-\vect{s}_m||_2}{\lambda}\right)\right]\,,
    \end{align}
    with source location $\vect{s}_{m}$, release rate $Q_m$ , average particle lifetime $\tau$, gas diffusivity $D$, wind speed $v_w$, wind direction $\psi$, $K_0$ denoting zero-order modified Bessel's equation, and 
    $$\vect{v}_w = v_w\vec{e}_{\psi} = v_w[\cos\psi~~\sin\psi]\,,~~\lambda = 2D\sqrt{\frac{\tau}{4D+v_w^2\tau}}\,.$$ Moreover, $\nu_k\sim \mathcal{N}(0,\sigma^2)$ is the measurement noise independent and identically distributed across space and time. For brevity and ease of the reader, the subscript $\star_{k}$ has been dropped at some places and an absence of the subscript refers to the current time step. Furthermore, as discussed in \cite{hutchinson2018information}, a successful gas detection by the onboard sensor is identified using a threshold concentration denoted by $z_{thr}$. Additionally, the probability of successfully detecting gas is denoted by $P_d$.
    
	\subsection{Coverage Control Preliminaries}
	The region $\mathcal{D}$ can be partitioned into $n$ cells (or dominance regions), denoted by $\mathcal{W} = \{\mathcal{W}_1,\ldots,\mathcal{W}_n\}$, corresponding to each robot's positions such that $\mathcal{D} = \bigcup_{i=1}^{n}\mathcal{W}^{(i)}$ and $\mathcal{W}^{(i)}\cap\mathcal{W}^{(j)} = \partial\mathcal{W}^{(i)}\cap\partial\mathcal{W}^{(j)}$ for $i\neq j$, that is, $\mathcal{W}$ is a set cells that are collectively exhaustive and mutually exclusive except for boundaries. Each robot is required to sample the environment at points within their cells such that the accuracy of the posterior estimate of the gas distribution is improved. To achieve a collective action, we choose to minimize a locational optimisation objective function as
	\begin{align}\label{eq:obj1}
        \begin{split}
            \min_{\substack{\vectnot{p}{}{i},\mathcal{W}}}\mathcal{H}(\vect{p}^{(i)},\mathcal{W}) = \min_{\substack{\vectnot{p}{}{i},\mathcal{W}}},\sum_{i=1}^{n}\int_{\mathcal{W}^{(i)}}f_i(\vect{q})\phi(\vect{q})d\vect{q}
        \end{split}
	\end{align}
	where $f_i(\vect{q})\triangleq f(\vect{p}^{(i)},\vect{q}):\mathbb{R}_+\to\mathbb{R}_+$ is a non-decreasing differentiable function that quantifies the ``sensing'' performance \cite{Cortes2004}. Furthermore, each point $\vect{q}\in\mathcal{D}$ is mapped to a density function $\phi$ defined as $\phi(.):\mathcal{D}\to\mathbb{R}_+$.  Generally, the density function is known \textit{apriori} in traditional coverage control. However, for the application of multi-source estimation, the density function relates to the current estimate of the source locations. It can be noted that minimizing \eqref{eq:obj1}, with $\phi$ representing the spatial distribution of potential source locations, would effectively drive robots towards regions of their respective partitions likely to contain a source. Similar design choices have been demonstrated to perform satisfactorily for a single-source scenario in \cite{NANAVATI2024102503}.

    \subsection{Problem Statement}
    In traditional coverage control designs, the function $f$ is typically selected as $f = || \vect{p}^{(i)} -  \vect{q}||^2$ in order to account for the radially symmetric decline in sensing performance as the distance from the robot increases. However, for a multi-source term estimation scenario, the dispersion of airborne containments in $\mathcal{D}$ is not radially symmetric, rather it is subject to prevailing wind speed and direction relative to the sensing platform. Therefore, further investigation in the design of an appropriate ``sensing performance'' function and associated Voronoi partition is necessary for minimizing \eqref{eq:obj1} in a meaningful manner. Therefore, we now formally state the objectives of this work:
    \begin{enumerate}[label=(\roman*)]
        \item Design a label-free belief update procedure that efficiently localizes the source of a pre-specified number of unknown airborne releases in the environment along with environmental parameters;\label{objective:1}
        \item Design a wind-aware ``sensing performance'' function and the resulting generalised Voronoi partitioning;
        \item Design linear and angular speed commands that minimize \eqref{eq:obj1} and drive $n$ differential-drive robots to sampling locations that aids effective estimation by the proposed estimation technique in \ref{objective:1}.
    \end{enumerate}
    \begin{remark}
        \tb{In this paper, we assume a centralized architecture for the inference and path planning framework to avoid additional design complications involving decentralized framework.}
    \end{remark}
    
    \section{Sequential Bayesian MSTE}\label{sec:BayesianEstimation}

    \subsection{Multi-source state}

    \tb{In this work, we assume that the sources within the region of interest $\mathcal{D}$ are sufficiently separated spatially, and that at least one active source presets with a known upper bound $M_\mathrm{max}$ for the true source count. Environmental parameters, such as wind speed and direction, particle lifetime, and diffusivity, are treated as known \textit{apriori}. In contrast, the individual source locations, emission rates, and the actual number of active sources are all unknown and must be jointly estimated from measurements.}
    
    \tb{We denote the $m$-th source term at time $k$ by $\vect{\theta}_{k,m} = [\vect{s}_{k,m}^{\intercal},\,Q_{k,m}]^\intercal \in \mathbb{R}^{3}$, $m\in \{1, \ldots, M_{\mathrm{max}} \}$. The collection of active sources is represented as an unordered finite set $\Theta_{k,M_{k}}=\{ \vect{\theta}_{k,1}, \ldots, \vect{\theta}_{k, M_{k}} \}$, where the number of sources, $M_{k} \in \{1, \ldots, M_{\mathrm{max}} \}$, is an unknown discrete variable to be estimated together with other source terms. The resulting system state is therefore the hybrid variable $\vect{X}_{k} = (\Theta_{k,M_{k}}, M_{k})$, whose continuous component $\Theta_{k,M_k}$ changes dimension according to the discrete mode $M_k$. Each element $\vect{\theta}_{k,m}$ can be interpreted as a component of a variable-structure state vector. Given the hybrid state $\vect{X}_{k}$, the goal of the multi-source estimation problem is to compute the posterior JMPD,
    \[
    p(\vect{X}_{k}|\vect{Z}_{1:k}) = p\left(\Theta_{k,M_{k}},M_{k}|\vect{Z}_{1:k}\right),
    \]
    where $\vect{Z}_{1:k}$ denotes all measurements collected up to time $k$ by the sensing robots. }
    
    \begin{remark}
    \tb{Representing $\Theta_{k,M_k}$ as an unordered set is mathematically equivalent to the symmetrised vector representation used in \cite{Kreucher2005}, and corresponds directly to the finite-set representation adopted in Random Finite Set (RFS) theory \cite{mahler2007statistical}. This formulation enforces permutation invariance and avoids artificial labelling of sources, which not only aligns naturally with the superposition-based observation model but is also essential for implementing the proposed merge and removal mechanisms in the state transition. }
    \end{remark}

    \tb{For completeness, we note that although $\Theta_{k,M_k}$ is formally an unordered set, many expressions in subsequent sections use an ordered vector representation (e.g., $\theta$, $\theta'$) when writing densities or transition kernels. These representations are implicitly symmetrised, so that all probability densities remain permutation-invariant. We also drop the cardinality subscript $M_k$ in $\Theta_{k,M_k}$ whenever the meaning is clear from context. }
    
    \subsection{State transition modelling} \label{sec:transition}

    \tb{With the hybrid state defined as $\vect{X}_k = (\Theta_k, M_k)$, the state transition density factorises as
    \begin{equation}
    \begin{aligned}
    p(\vect{X}_k \mid \vect{X}_{k-1})
    &= p(\Theta_k \mid M_k, \Theta_{k-1}, M_{k-1}) p(M_k \mid \Theta_{k-1}, M_{k-1}),
    \end{aligned}
    \label{eq:transition_factorisation}
    \end{equation}
    where the discrete kernel governs the state-dependent cardinality changes and the continuous kernel describes source term evolution conditional on the new cardinality. In this work, these transition kernels are carefully designed by integrating deterministic rules inspired by atmospheric dispersion physics (e.g., source indistinguishability and negligible emissions) with stochastic birth--death dynamics. The proposed transition models go beyond the standard RFS/JMPD frameworks that rely primarily on measurement-driven interactions \cite{DANIYAN2024107198}, to ensure a physically plausible dynamic representation for multi-source terms.}

    \tb{The discrete variable $M_k$ is constrained to change by at most one per step, $M_k \in \{M_{k-1}-1, M_{k-1}, M_{k-1}+1\}$, clipped to $1 \leq M_k \leq M_{\max}$. Its state-dependent kernel is
    \begin{equation}
        g_{k}(m \mid m', \theta') := p(M_k = m \mid \Theta_{k-1}=\theta', M_{k-1}=m'),\label{eq:transition}
    \end{equation}
    which is defined hierarchically as follows. First, we prioritize deterministic transition, such that $g_{k}(m \mid m', \theta') =1$ (others zero) if a merge occurs ($\min_{i\neq j} \|\vect{s}_{i}-\vect{s}_{j}\| < d_{\merge}$, reflecting spatial plume overlap) or, absent merge, a removal ($\min_i Q_{k-1,i} < Q_{\min}$, culling weak sources to align with sensor thresholds). Otherwise, we introduce stochastic birth/death events to form a Markov-jump transition model, in which $M_{k-1}$ evolves according to the tri-diagonal transition matrix $\mathbf{\Pi} \in \mathbb{R}^{M_{\max} \times M_{\max}}$ with entries defined as follow
    \begin{equation}
    \mathbf{\Pi} = 
    \begin{bmatrix}
    1-p_{\mathrm{b}} & p_{\mathrm{b}} & 0 & 0 & 0 \\
    p_{\mathrm{d}} & p_{\mathrm{s}} & p_{\mathrm{b}} & 0 & 0 \\
    0 & \ddots & \ddots & \ddots & 0 \\
    0 & 0 & p_{\mathrm{d}} & p_{\mathrm{s}} & p_{\mathrm{b}} \\
    0 & 0 & 0 & p_{\mathrm{d}} & 1-p_{\mathrm{d}}
    \end{bmatrix},
    \label{eq:Pi_example}
    \end{equation}
    where the $p_{\mathrm{b}}$, $p_{\mathrm{d}}$ and $p_{\mathrm{s}}=1-p_{\mathrm{b}}-p_{\mathrm{d}}$ are probabilities of birth, death and survival, respectively. In this case, the transition kernel reduces to $g_{k}(m \mid m', \theta') = \mathbf{\Pi}_{(m',m)}$.}
    
    \tb{This design is novel in airborne multi-source term estimation. Specifically, deterministic rules reflect physical processes (plume overlap and sensor-limited detection), while the underlying Markov jump provides controlled stochastic exploration of the unknown source count.}

    \tb{For the continuous state, given $M_{k-1}=m'$ and $M_k=m$, the transition density can be written as 
    \begin{equation}
    \begin{aligned} \label{eq:cont_transition}
    h_{k}(\theta \mid m, \theta',m') :=
    &\; p(\Theta_k = \theta \mid 
         M_k = m,\,
         \Theta_{k-1} = \theta', M_{k-1} = m').
    \end{aligned}
    \end{equation}
    This continuos transition density is designed as follows:
    \begin{enumerate}
        \item[(i)] Survival ($m=m'$): each of the $m$ sources evolves independently via Gaussian diffusion,
      \[
      h_{k}(\theta \mid m, \theta',m')  = \prod_{i=1}^{m} \mathcal{N} \bigl(\vect{\theta}_{k,i} ; \vect{\theta}'_{k-1,i}, \mathbf{\Sigma}_k\bigr),
      \]
      where $\mathbf{\Sigma}_k = \operatorname{diag}(\sigma_x^2,\sigma_y^2,\sigma_Q^2)$.
      \item[(ii)] Birth ($m=m'+1$): the $m'$ survival sources diffuse as above and one new source is born from the prior
      \[
      b(\vect{\theta}_{k,m}) = \mathcal{U}_{\mathcal{A}}(\vect{s})\cdot \Gamma(Q_0,{Q_{1}}),
      \]
      where $\Gamma(\cdot)$ represents Gamma distribution. This yields
      \[
      h_{k}(\theta \mid m, \theta',m')  = b(\vect{\theta}_{k,m})\prod_{i=1}^{m'} \mathcal{N} \bigl(\vect{\theta}_{k,i} ; \vect{\theta}'_{k-1,i}, \Sigma_k\bigr).
      \]
      \item[(iii)] Deterministic merge or removal ($m=m'-1$): first form the reduced set $\tilde{\theta}' \in \mathbb{R}^{3(m-1)}$ from $\theta'$ in the following hierarchical order: 1) replace the two closest sources $i,j$ (the pair achieving $d_{\min}<d_{\merge}$) by a single source with mass-weighted centroid
        \[
        \tilde{\vect{\theta}}_{j} = \Bigl( \frac{Q'_i \vect{s}'_i + Q'_j \vect{s}'_j}{Q'_i + Q'_j},\; Q'_i + Q'_j \Bigr).
        \]
        or, 2) remove the source with smallest $Q_{k-1,i}$ (or the one selected by removal rule). When merge or removal is triggered, the reduced set $\tilde{\theta}'$ replaces $\theta'$ to go through sources diffuse as in Step (i). Thus, in this case we have
          \[
          h_{k}(\theta \mid m, \theta', m') = \prod_{i=1}^{m-1} \mathcal{N} \bigl(\vect{\theta}_{k,i} ; \vect{\theta}'_{k-1,i}, \mathbf{\Sigma}_k\bigr).
          \]
        \end{enumerate}}

    \tb{This continuous transition preserves total emitted strength during merging, ensures smooth dynamics, and maintains permutation invariance, which is critical for particle filter performance in unordered-set representations.}

    \subsection{Observation model}
    
    \tb{The measurement model, introduced in Section~\ref{sec:measurementModel}, is based on superposition of individual plume contributions at one robot location. The likelihood of the current measurement set $\vect{Z}_{k} = \{z_{k}^{(j)}\}_{j=1}^{n}$ collected at known sensor positions $\{\vect{p}_{k}^{(j)}\}$ is
    \begin{align} 
     \mathcal{L}_{k}(\theta,m) &:= p(\vect{Z}_k \mid \Theta_k=\theta, M_k = m) \prod_{j=1}^{n} p(z_{k}^{(j)} \mid \theta, m),
      \label{eq:likelihood_full}
    \end{align}
where the single-sensor likelihood probability of a new measurement based on the current belief can be formulated as \cite{Hutchinson2019JFR,NANAVATI2024102503},
    \begin{align}\label{eq:likelihood}
        p(z_{k}^{(j)} \mid \theta, m) = 
        \begin{cases}
            p_s(z_{k}^{(j)} ,{\theta}, m) & \text{if } z_{k}^{(j)} \geq z_{\rm thr}\\
            p_{us}(\vect{p}_{k}^{(j)},{\theta},m) & \text{if } z_{k}^{(j)} < z_{\rm thr}
        \end{cases}
    \end{align}
    where
    \begin{align}
        p_s(\star) &= \mathcal{N}\left(\scalnot{z}{k}{j}-C_{m}(\vectnot{p}{k}{j};\theta);~~\sigma_\nu^2\right)\,,\\
        p_{us}(\star) &= 1- \dfrac{P_d}{2} +\dfrac{P_d}{2}\mathsf{erf}\left[\frac{z_{\rm thr} - C_{m}(\vectnot{p}{k}{j};\theta)}{\sqrt{2}\sigma_{\nu}}\right]\,.
    \end{align} }
    
    \tb{Both the transition kernels \eqref{eq:transition} and \eqref{eq:cont_transition} and the likelihood \eqref{eq:likelihood_full} are permutation-invariant with respect to the elements in $\theta$ or $\theta'$, ensuring the posterior JMPD inherits this symmetry of the unordered source set.}

    \subsection{Recursive Bayesian Estimation}

    \tb{The objective of Recursive Bayesian Estimation is to compute the posterior distribution over the hybrid state $\vect{X}_k = (\Theta_k, M_k)$ given all measurements up to time $k$. This can be represented by the posterior density 
    \begin{equation}
        f_{k}^{(m)}(\theta) := p(\Theta_{k}=\theta, M_{k} = m \mid \vect{Z}_{1:k}).
    \end{equation}}
    
    \tb{The exact Bayesian filter consists of the usual prediction and update steps applied to the hybrid state. The predicted hybrid density is obtained by marginalising over all possible previous cardinalities and configurations:
    \begin{equation}
    \begin{aligned} \label{eq:prediction_mode}
    &f_{k|k-1}^{(m)}(\theta)
    \nonumber\\ &= \sum_{m'=1}^{M_{\max}} 
          \int h_{k}(\theta \mid m, \theta', m')\,
               g_{k}(m \mid m', \theta') \,f_{k-1}^{(m')}(\theta')\, d \theta'.
    \end{aligned}
    \end{equation}
    The update step follows the Bayes' rule producing the mode-conditioned posteriors such that
    \begin{equation}
      f_k^{(m)}(\theta) = \frac{\mathcal{L}_{k}(\theta,m)\,f_{k|k-1}^{(m)}(\theta)}
                               {p(\vect{Z}_{k} \mid \vect{Z}_{1:k-1})}.
      \label{eq:posterior_mode}
    \end{equation}
    Equations~\eqref{eq:prediction_mode}--\eqref{eq:posterior_mode} define the exact recursive Bayesian estimator for the hybrid multi-source state. Closed-form solutions are intractable due to the variable dimension and non-linear/non-Gaussian nature of the model; we therefore resort to sequential Monte Carlo approximation in the following subsection.}
    
    \subsection{Particle Filter Implementation}

    \tb{Using the Monte Carlo representation of a probability distribution function, the posterior $p(\vect{X}_{k}\mid \vect{Z}_{1:k})$ can be approximated as 
     \begin{align}\label{eq:posteriorbelief}
         p({\Theta}_{k}=\theta, M_{k}=m|\vect{Z}_{1:k}) \approx \sum_{p=1}^{N_p}\weight{k}{p} \delta_{M_{k}^{(p)}}(m) \delta_{\particle{k}{p}}(\theta),
     \end{align}
     where $\{(M_{k}^{(p)},\particle{k}{p}), \weight{k}{p}\}_{p=1}^{N_p}$ is a set of $N_p$ weighted samples with weight $\sum_{p=1}^{N_p} \weight{k}{p} = 1$, $\delta_{\particle{k}{p}}(\cdot)$ the Dirac delta function located at $\particle{k}{p}$, and $\delta_{M_{k}^{(p)}}(m)$ the Kronecker delta that takes $1$ if and only if $m=M_{k}^{(p)}$. The particle filter recursively computes the set of new particles at time instant $k$, given the particle set $\{(M_{k-1}^{(p)},\particle{k-1}{p}), \weight{k-1}{p}\}_{p=1}^{N_p}$ at $k-1$ and most recent measurement $\vect{Z}_{k}$. }

     \tb{We adopt a bootstrap proposal in which particles are propagated through the hybrid state-transition model:
    \[
    \vect{X}_k^{(p)} \sim p(\vect{X}_k \mid \vect{X}_{k-1}^{(p)}).
    \]
    Concretely, for each particle,
    \begin{align}
        M_k^{(p)} & \sim p(M_k \mid \Theta_{k-1}^{(p)}, M_{k-1}^{(p)}),  \\
        \Theta_k^{(p)} & \sim p(\Theta_{k} \mid M_k^{(p)}, \Theta_{k-1}^{(p)}, M_{k-1}^{(p)}),
    \end{align}
    using the merge/removal/birth–death mechanism for the discrete cardinality and the corresponding continuous transitions defined in Section~\ref{sec:transition}.}

    \tb{Because the proposal equals the transition density, the importance weights update according to
    \begin{equation}
    \label{eq:bootstrap_update}
    \tilde w_k^{(p)}
    \propto
    w_{k-1}^{(p)} \,
    \mathcal{L}_k \big(\Theta_k^{(p)}, M_k^{(p)}\big),
    \end{equation}
    where $\mathcal{L}_k$ is the likelihood function defined in \eqref{eq:likelihood_full}. The updated weights are normalised as $w_k^{(p)} = \tilde w_k^{(p)} / \sum_p \tilde w_k^{(p)}$. The overall filtering algorithm is now summarized in Algorithm \ref{alg:one_step_pf} including the optional resampling step.}

\begin{algorithm}[t]
  \caption{Proposed SIR PF for multi-sources}
  \label{alg:one_step_pf}
  \begin{algorithmic}[1]
    \REQUIRE
      $\big\{ (\Theta_{k-1}^{(p)}, M_{k-1}^{(p)}),\, w_{k-1}^{(p)} \big\}_{p=1}^{N_p}$; $\vect{Z}_k$;

    \ENSURE Updated particle set at time $k$:
      $\big\{ (\Theta_{k}^{(p)}, M_{k}^{(p)}),\, w_{k}^{(p)} \big\}_{p=1}^{N_p}$.
    %
    \FOR{$p = 1$ \TO $N_p$}
      \STATE Sample new cardinality
        $M_k^{(p)} \sim g_k\big(m \mid M_{k-1}^{(p)}, \Theta_{k-1}^{(p)}\big)$ \\
        according to the merge/removal/birth--death rules.
      \STATE Sample new source set
        $\Theta_k^{(p)} \sim h_k\big(\theta \mid M_k^{(p)}, \Theta_{k-1}^{(p)}, M_{k-1}^{(p)}\big)$ \\
        with the corresponding diffusion/birth/merge dynamics.
      \STATE Set $\vect{X}_k^{(p)} = \big(\Theta_k^{(p)}, M_k^{(p)}\big)$.
    \ENDFOR
    %
    \FOR{$p = 1$ \TO $N_p$}
      \STATE Compute unnormalised weight
        $\tilde w_k^{(p)} = w_{k-1}^{(p)}\, \mathcal{L}_k\big(\Theta_k^{(p)}, M_k^{(p)}\big)$.
    \ENDFOR
    \STATE Normalise weights:
      $w_k^{(p)} = \tilde w_k^{(p)} \Big/ \sum_{q=1}^{N_p} \tilde w_k^{(q)}$.
    %
    \STATE Compute effective sample size
      $N_{\mathrm{eff}} = 1 \big/ \sum_{p=1}^{N_p} \big(w_k^{(p)}\big)^2$.
    \IF{$N_{\mathrm{eff}} < N_{\mathrm{thr}}$}
      \STATE Resample to obtain an unweighted set
        $\big\{ (\Theta_k^{(p)}, M_k^{(p)}),\, w_k^{(p)} = 1/N_p \big\}_{p=1}^{N_p}$.
    \ENDIF
  \end{algorithmic} 
\end{algorithm}

    \subsection{Multi-source estimate extraction}\label{sec:estimatecompute}
    \tb{After the particle filter update at time $k$, the posterior JMPD of the multi-source state is represented by a weighted but unordered set of particles of varying cardinality
     $\{\scalnot{\Theta}{k}{p},\scalnot{M}{k}{p},\weight{k}{p}\}_{p=1}^{N_p}$. Since the source components may inconsistently ordered across particles, direct averaging to extract the estimates is not meaningful. Exploiting the permutation-invariant feature of JMPD, we therefore impose a canonical labelling of particle components prior to computing point estimates.}

    \tb{First, we aggregate all source components from all particles into a single collection of source terms defined as $$\mathcal{Y}_k = \left\{\left.\scalnot{\theta}{k,i}{p}\in\mathbb{R}_{1\times 3}~\right|~\forall~p\in\dsqrbrac{1,N_p}\text{ and }~i\in\dsqrbrac{1,\scalnot{M}{k}{p}}\right\}\,.$$ 
    A $K$-means clustering with $K = M_{\max}$ is performed on $\mathcal{Y}_k$, yielding $M_{\max}$ centroids denoted by $\Pi = \{\mu_{k,1},\ldots,\mu_{k,M_{\max}}\}$. These centroids serve as canonical reference labels in the parameter space, thus, providing a consistent geometric coordinate system against which particle components can be matched.}

    \tb{For each particle $p$, we assign its unordered components to the canonical centroids $\{\mu_{k,j}\}_{j=1\to M_{\max}}$ by solving for the following optimization problem 
    $$\xi^p = \arg\min_{\xi\in\Xi_{M_{\max}}}
                 \sum\limits_{i=1}^{\scalnot{M}{k}{p}}||\scalnot{\theta}{k,i}{p}, - \mu_{k,\xi_i}||_2\,.$$ 
    where $\xi^p$ denotes an optimal assignment mapping from $p^{th}$ particle components to centroid indices. Each particle is then reordered in a canonical, label-consistent order, that is, $$\scalnot{\Theta}{k}{p}\to\left(\scalnot{\theta}{k,\xi^p_1}{p},\ldots,\scalnot{\theta}{k,\xi^p_{\scalnot{M}{k}{p}}}{p}\right)$$
    It is important to note that the particles with fewer than $M_{\max}$ components simply leave some labels unassigned.} 
    
    \tb{For each canonical label $j \in \{1,\dots,M_{\max}\}$, the  probability of existence at $k^{th}$ instant is computed as 
    $$P_{e,k}^{(j)} = \sum_{p}\mathsf{b}_{p,j}\weight{k}{p}$$
     where $\mathsf{b}_{p,j}\in\{0,1\}$ is a binary variable which assumes the value 1 only if the $p^{th}$ particle contains a parameter vector with the $j^{th}$ label. Let $\mathcal{J}$ denote the set for each canonical label with $P_{e,k}^{(j)} \geq 0.5$. Only source terms for which $j\in\mathcal{J}$ are reported as valid inferred sources. Therefore, the estimated number of sources at $k^{th}$ instant, denoted by $\bar{M}_k$, is $|\mathcal{J}|$. This avoids reporting spurious birth events and reduces sensitivity to transient particles of low posterior weight.}

    \tb{The parameter estimate ${\bar{\theta}}_{k,j}$ corresponding to the $j^{th}\in\mathcal{J}$ label is the posterior weighted mean computed as
    $$\bar{\theta}_{k,j} = \frac{\sum_{p}\mathsf{b}_{p,j}\weight{k}{p}\scalnot{\theta}{k,j}{p}}{\sum_{p}\mathsf{b}_{p,j}\weight{k}{p}}\,.$$}

    \tb{This yields the estimated location and emission rate of each source that is sufficiently likely to exist under the posterior distribution. The procedure provides a robust, permutation-invariant means of extracting coherent multi-source estimates from hybrid particle representations with varying cardinality.}

    \subsection{Coverage density construction}
    \tb{Note that after each environment sampling instance, the posterior belief of the source term is updated and consequently the density function $\phi(\star)$ should be updated. We can now mathematically express $\phi(.)$ at $\vect{q}$ after the $k^{th}$ belief update can be expressed as
    \begin{align}\label{eq:densityfunction}
        \phi(\vect{q}) = p(\vect{X}_k^{(p)}(x,y)|\vect{Z}_{k})\approx \mathcal{G}\left(\weight{k}{p},\vect{X}_k^{(p)}\right)
    \end{align}
    where $\mathcal{G}\left(\weight{k}{p},\vect{X}_k^{(p)}\right)$ is a Gaussian Mixture model constructed using the weighted samples $\{\weight{k}{p},\vect{X}_k^{(p)}\}$.}

    \tb{In next section, we propose and analyse a novel ``\textit{wind-aware} gas sensing Voronoi generation distance'' and the consequent generalised Voronoi-partition to perform locational optimization in the context of multi-source term estimation.}
    
	\section{Gas Sensing Performance Modelling}
	The sensing performance is usually modelled to decay with an increase in distance from the robot in a radially symmetric fashion for visual or Lidar sensors. A radial symmetric performance function may be true for applications where the primary objective is to maximize the monitoring/surveillance capabilities via range or visibility sensors. However, in this paper for observing the dispersion of airborne contaminates, we design the sensing performance function $f$ to account for an asymmetric plume structure due to the environmental wind condition as follows
    \begin{align}\label{eq:sensingperform}
        f_i(\vect{q}) = ||\vect{p}^{(i)}-\vect{q}||^2 +\alpha||\vect{p}^{(i)}-\vect{q}||\left(\vect{p}^{(i)}-\vect{q}\right)\cdot\vec{e}_{\psi}
    \end{align}
    where $-1<\alpha\leq0$ is a tuning parameter. The second term in \eqref{eq:sensingperform} acts as a additive penalisation based on the relative orientation of $\vect{q}$ w.r.t to the robot location and the average direction of wind flow in the search domain. Such a structure of the sensing function effectively captures the improved likelihood of sensing gas concentration downwind from potential sources. It is important to note that the V-distance proposed in \eqref{eq:sensingperform} is positive-definite function, that is, $f_i(\vect{q})>0~~\forall~\vect{q}\neq\vect{p}^{(i)}$ and satisfies
    $$||\vect{p}^{(i)}-\vect{q}||^2(1-|\alpha|)\leq f_i(\vect{q})\leq||\vect{p}^{(i)}-\vect{q}||^2(1+|\alpha|)$$
    for some $\alpha\in(-1,0]$. The tuning parameter $\alpha$ can be thought of as the relative importance associated with directional sensing degradation as compared to sensing degradation with larger distances. For $\alpha=0$, the proposed sensing function reduces down to the Euclidean distance metric resulting in the traditional Voronoi Diagram. 
    
    We now propose a generalised partition, denoted by $\mathcal{V}(\{\vect{p}^{(i)}\}_{1:n},f,\mathcal{D})\equiv\{\mathcal{V}^{(i)}\}_{1:n}$, on $\mathcal{D}$ using the performance function proposed in \eqref{eq:sensingperform} as a ``$\mathcal{V}$-assignment rule" (or ``$\mathcal{V}$-distance") along with the set of points $\{\vect{p}^{(i)}\}_{i=1}^{n}$ as
    \begin{align}\label{eq:Voronoi}
        \mathcal{V}^{(i)} &= \{\vect{q}~|~f_i(\vect{q})\leq f_j(\vect{q})~\forall~j\neq i,~\vect{q}\in\mathcal{D}\}
    \end{align}
    where $\mathcal{V}^{(i)}$ denotes the generalised dominance region corresponding to $i^{\rm th}$ robot based on the proposed sensing performance function. We now prove that $\mathcal{V}(\{\vect{p}^{(i)}\}_{1:n},f,\mathcal{D})\equiv\{\mathcal{V}^{(i)}\}_{1:n}$ results in a tessellation or a generalised Voronoi diagram on $\mathcal{D}$.
    \begin{lemma}\label{lemma:Voronoiset}
        The generalized partition $\mathcal{V}(\{\vect{p}^{(i)}\}_{1:n},f,\mathcal{D}) = \{\mathcal{V}_1,\ldots,\mathcal{V}_n\}$ for $n\geq2$ defined as per \eqref{eq:Voronoi} are non-empty, collectively exhaustive and mutually exclusive, except for their boundaries, on $\mathcal{D}$.
    \end{lemma}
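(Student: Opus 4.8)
\emph{Plan.} The plan is to verify the three asserted properties in turn, leaning throughout on the two-sided estimate $(1-|\alpha|)\|\vect{p}^{(i)}-\vect{q}\|^{2}\le f_i(\vect{q})\le(1+|\alpha|)\|\vect{p}^{(i)}-\vect{q}\|^{2}$ established for the $\mathcal V$-distance just above the statement. \emph{Non-emptiness} is immediate: the lower bound gives $f_j(\vect{q})\ge 0$ everywhere, with equality only at $\vect{q}=\vect{p}^{(j)}$, so $f_i(\vect{p}^{(i)})=0\le f_j(\vect{p}^{(i)})$ for every $j\neq i$ and hence $\vect{p}^{(i)}\in\mathcal V^{(i)}$; by continuity these inequalities are strict on a neighbourhood of $\vect{p}^{(i)}$, so each cell even has non-empty interior. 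For \emph{collective exhaustiveness}, fix $\vect{q}\in\mathcal D$: the finite set $\{f_1(\vect{q}),\dots,f_n(\vect{q})\}$ attains a minimum at some index $i^{\star}$, whence $\vect{q}\in\mathcal V^{(i^{\star})}$; this gives $\mathcal D\subseteq\bigcup_i\mathcal V^{(i)}$, the reverse inclusion being part of the definition. Continuity of each $f_i$ (it is assembled from $\|\cdot\|^{2}$, $\|\cdot\|$ and a dot product) makes every set $\{\vect{q}\in\mathcal D:f_i(\vect{q})\le f_j(\vect{q})\}$ closed, so each $\mathcal V^{(i)}=\bigcap_{j\neq i}\{\vect{q}\in\mathcal D:f_i(\vect{q})\le f_j(\vect{q})\}$ is closed.

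The substantive part is \emph{mutual exclusiveness up to boundaries}. For $i\neq j$ set $B_{ij}=\{\vect{q}\in\mathcal D:f_i(\vect{q})=f_j(\vect{q})\}$; since $\mathcal V^{(i)}\cap\mathcal V^{(j)}\subseteq B_{ij}$, it suffices to show $B_{ij}$ contains no open ball. I would argue by contradiction: if some open ball lay in $B_{ij}$, shrink it to an open ball $B$ disjoint from the two (distinct) sensor positions $\vect{p}^{(i)},\vect{p}^{(j)}$. On the connected open set $\mathbb{R}^{2}\setminus\{\vect{p}^{(i)},\vect{p}^{(j)}\}$ the difference $g:=f_i-f_j$ is real-analytic, because $\vect{q}\mapsto\|\vect{p}^{(k)}-\vect{q}\|$ and $\vect{q}\mapsto\|\vect{p}^{(k)}-\vect{q}\|\,(\vect{p}^{(k)}-\vect{q})\cdot\vec{e}_{\psi}$ are real-analytic away from $\vect{q}=\vect{p}^{(k)}$. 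Since $g\equiv 0$ on $B$, the identity theorem for real-analytic functions forces $g\equiv 0$ on all of $\mathbb{R}^{2}\setminus\{\vect{p}^{(i)},\vect{p}^{(j)}\}$, hence on $\mathbb{R}^{2}$ by continuity. But the two-sided estimate yields $g(\vect{p}^{(i)})=-f_j(\vect{p}^{(i)})\le-(1-|\alpha|)\|\vect{p}^{(i)}-\vect{p}^{(j)}\|^{2}<0$, using $|\alpha|<1$ and $\vect{p}^{(i)}\neq\vect{p}^{(j)}$ --- a contradiction. Hence $B_{ij}$, and \emph{a fortiori} $\mathcal V^{(i)}\cap\mathcal V^{(j)}$, is nowhere dense; in particular $\operatorname{int}\mathcal V^{(i)}\cap\operatorname{int}\mathcal V^{(j)}=\emptyset$, so the closed cells $\{\mathcal V^{(i)}\}$ cover $\mathcal D$ with pairwise-disjoint interiors and therefore meet only along their common boundaries, which is the claimed generalised Voronoi tessellation.

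I expect the last step --- showing that $g=f_i-f_j$ cannot vanish on an open set, equivalently that two generalised cells cannot overlap with non-empty interior --- to be the main obstacle. The real-analyticity argument settles it, the only care needed being to excise the sensor points before invoking analyticity and recover them afterwards by continuity; the fact that $g$ is not identically zero, which the argument requires, is exactly what the positive-definiteness estimate supplies. A more elementary alternative is to differentiate: with $\nabla_{\vect{q}}f_i=-(2r_i+\alpha s_i)\hat{\vect{x}}_i-\alpha r_i\vec{e}_{\psi}$ (where $r_i=\|\vect{p}^{(i)}-\vect{q}\|$, $\hat{\vect{x}}_i=(\vect{p}^{(i)}-\vect{q})/r_i$, $s_i=(\vect{p}^{(i)}-\vect{q})\cdot\vec{e}_{\psi}$), the bound $|2r_i+\alpha s_i|\ge(2-|\alpha|)r_i$ together with the ratio bound $\sqrt{(1-|\alpha|)/(1+|\alpha|)}\le r_i/r_j\le\sqrt{(1+|\alpha|)/(1-|\alpha|)}$ forced on $B_{ij}$ by the estimate can be used to show $\nabla g$ is non-zero on $B_{ij}$, which exhibits $B_{ij}$ as a one-dimensional $C^{1}$ curve directly and reduces to the classical affine-bisector diagram when $\alpha=0$; this same gradient information is what one would use to upgrade ``cells meet only along boundaries'' to the exact set identity $\mathcal V^{(i)}\cap\mathcal V^{(j)}=\partial\mathcal V^{(i)}\cap\partial\mathcal V^{(j)}$.
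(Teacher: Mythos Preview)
Your proof is correct and follows the same three-part structure as the paper (non-emptiness, collective exhaustiveness, mutual exclusiveness up to boundaries), but the treatment of the last part is genuinely different and more thorough. The paper simply observes that $\mathcal{V}^{(i)}\cap\mathcal{V}^{(j)}=\{\vect{q}\in\mathcal{D}:f_i(\vect{q})=f_j(\vect{q})\}$ and then asserts this equals $\partial\mathcal{V}^{(i)}\cap\partial\mathcal{V}^{(j)}$, without arguing that the equality locus has empty interior; you supply precisely this missing step via real-analyticity of $f_i-f_j$ on $\mathbb{R}^2\setminus\{\vect{p}^{(i)},\vect{p}^{(j)}\}$ and the identity theorem, with positive-definiteness of $f$ furnishing the required point where $f_i\neq f_j$. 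For non-emptiness the arguments are also slightly different in flavour: the paper builds a small ball $\bar{\mathcal{V}}^{(i)}=\{\vect{q}\in\mathcal{B}^{(i)}:f_i(\vect{q})\le\tilde{\epsilon}\}$ inside $\mathcal{V}^{(i)}$, whereas you directly place $\vect{p}^{(i)}\in\mathcal{V}^{(i)}$ using $f_i(\vect{p}^{(i)})=0\le f_j(\vect{p}^{(i)})$ and then invoke continuity; both reach the same conclusion that each cell has positive measure. What your approach buys is rigour on the boundary claim and a cleaner route to the set identity $\mathcal{V}^{(i)}\cap\mathcal{V}^{(j)}=\partial\mathcal{V}^{(i)}\cap\partial\mathcal{V}^{(j)}$ (your gradient alternative would give this most directly); what the paper's approach buys is brevity, at the cost of leaving the empty-interior step implicit.
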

    \begin{proof}
        It can be trivially seen from \eqref{eq:Voronoi} that each $\vect{q}\in\mathcal{D}$ is assigned to at least one cell as the assignment is performed using a minimum $\mathcal{V}$-distance criterion. Therefore, the resultant cells $\mathcal{V}^{(i)}~\forall~i\in\dsqrbrac{1,n}$ are collectively exhaustive, that is, $\bigcup_{i=1}^{n}\mathcal{V}^{(i)} = \mathcal{D}$. Furthermore, consider the set $\bar{\mathcal{V}}^{(i)}$ defined as
        $$\bar{\mathcal{V}}^{(i)} = \{\vect{q}~|~f_i(\vect{q})\leq\tilde{\epsilon},~\vect{q}\in\mathcal{B}^{(i)}\}$$ where $\tilde{\epsilon}>0$ and $\mathcal{B}^{(i)}$ is a ball of radius $\delta>0$ around $\vect{p}^{(i)}$. For an arbitrarily small $\tilde{\epsilon}>0$, there exist a $\delta$ such that $$\varnothing\neq\bar{\mathcal{V}}^{(i)}=\mathcal{B}^{(i)}\subset\mathcal{V}^{(i)}~\forall~i\in\dsqrbrac{1,n}.$$ Hence, the dominance regions $\mathcal{V}^{(i)}$ are non-empty for all $i\in\dsqrbrac{1,n}$ with a positive Lebesque measure.
        
        Moreover, let $\mathcal{E}_{ij}$ be defined as $\mathcal{E}_{ij} = \mathcal{V}^{(i)}\bigcap\mathcal{V}^{(j)}~\forall~i\neq j$\,. Therefore, it can be noted from the \eqref{eq:Voronoi} that, for $i\neq j$, $$\mathcal{V}^{(i)}\bigcap\mathcal{V}^{(j)}  = \{\vect{q}~|~f_i(\vect{q})= f_j(\vect{q})~\forall~j\neq i,~\vect{q}\in\mathcal{D}\}\,.$$
        Therefore, the Voronoi cells $\mathcal{V}^{(i)}$ are mutually exclusive except their boundaries, that is, $\mathcal{V}^{(i)}\bigcap\mathcal{V}^{(j)} = \partial\mathcal{V}^{(i)}\bigcap\partial\mathcal{V}^{(j)}$ for all $i\neq j$. It is shown in Chapter 3 of \cite{boots2009spatial} that the results of Lemma \ref{lemma:Voronoiset} imply that the proposed partition $\mathcal{V}(\{\vect{p}^{(i)}\}_{1:n},f,\mathcal{D})$ forms a tessellation and a generalized Voronoi diagram on $\mathcal{D}$. This completes the proof.  
    \end{proof}
     An example of the proposed partitioning, $\mathcal{V}$, in contrast to the traditional Voronoi partitions is presented in Fig. \ref{fig:voronoi_example}. The partitions presented in Fig. \ref{fig:Voronoia0} correspond to the traditional Euclidean Voronoi partition, whereas, Figs. \ref{fig:Voronoia50}-\ref{fig:Voronoia99} present different partitioning resulting from varying $\alpha$. As can be seen from fig. \ref{fig:voronoi_example}, increasing the value of $\alpha$ limits and extends each robots dominance region away and along from the wind flow direction, respectively. In the context of the proposed sensing performance function, the dominance region $\mathcal{V}^{(i)}$ is representative of $i^{th}$ robot being the most effective sensing platform in the robot network for a $\vect{q}\in\mathcal{V}^{(i)}$.
    
    \begin{figure}
        \centering
        \captionsetup[subfigure]{justification=centering}
        \begin{subfigure}{0.245\linewidth}
            \centering
            \includegraphics[width=\linewidth]{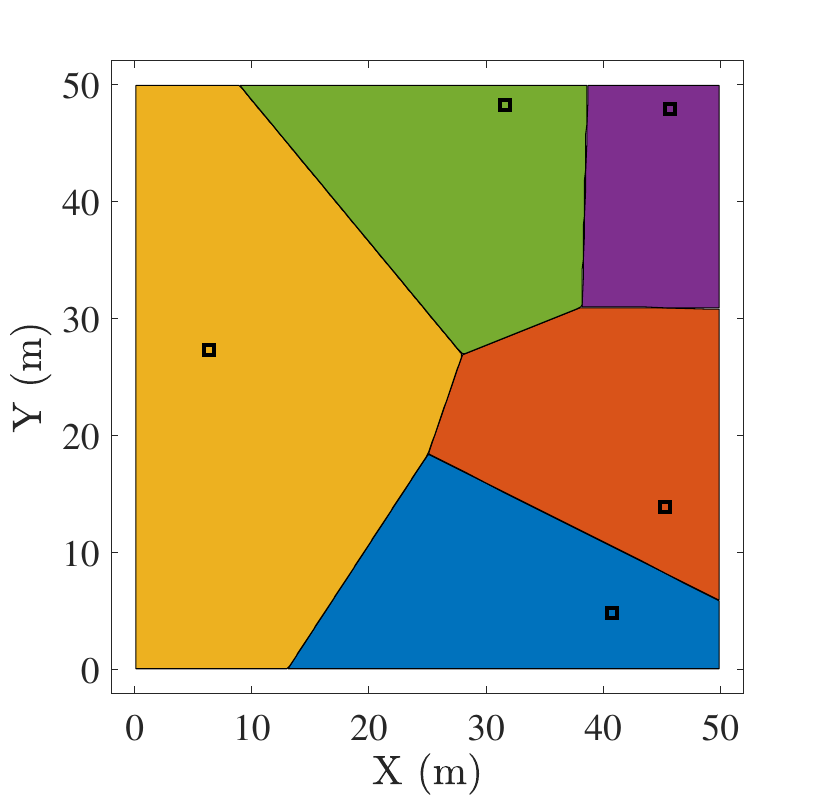}
            \caption{$\alpha=0$.}
            \label{fig:Voronoia0}
        \end{subfigure}
        \begin{subfigure}{0.245\linewidth}
            \centering
            \includegraphics[width=\linewidth]{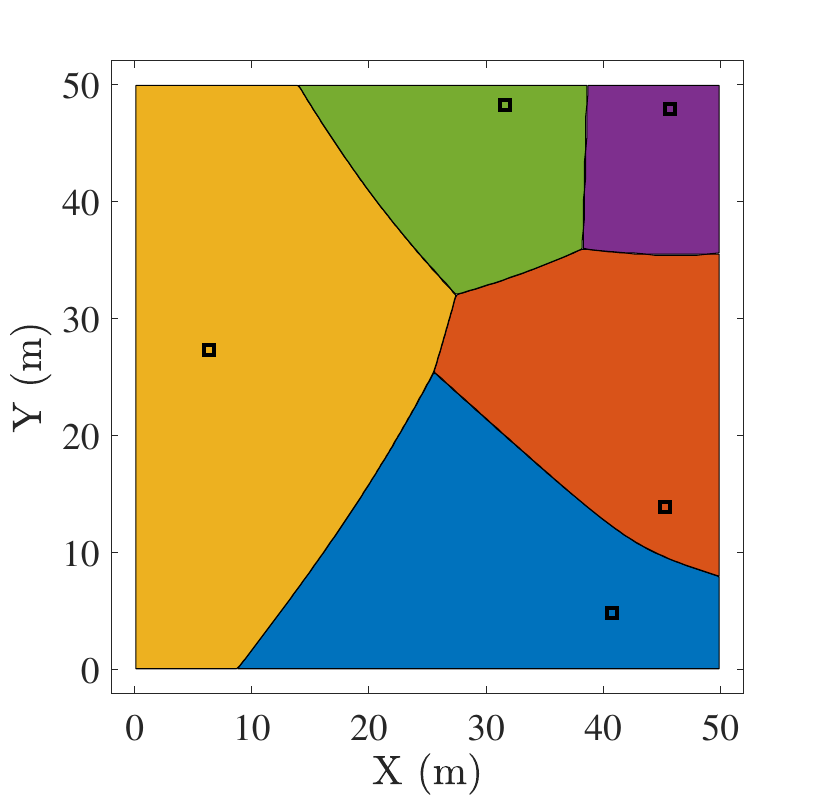}
            \caption{$\alpha=-0.5$.}
            \label{fig:Voronoia50}
        \end{subfigure}
        \begin{subfigure}{0.245\linewidth}
            \centering
            \includegraphics[width=\linewidth]{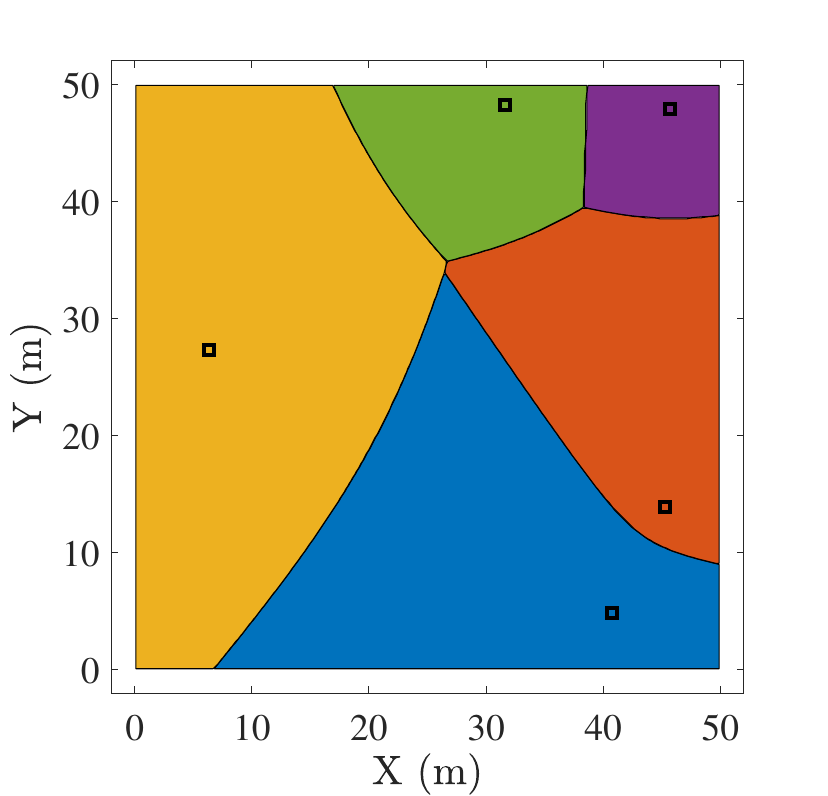}
            \caption{$\alpha=-0.75$.}
            \label{fig:Voronoia75}
        \end{subfigure}
        \begin{subfigure}{0.245\linewidth}
            \centering
            \includegraphics[width=\linewidth]{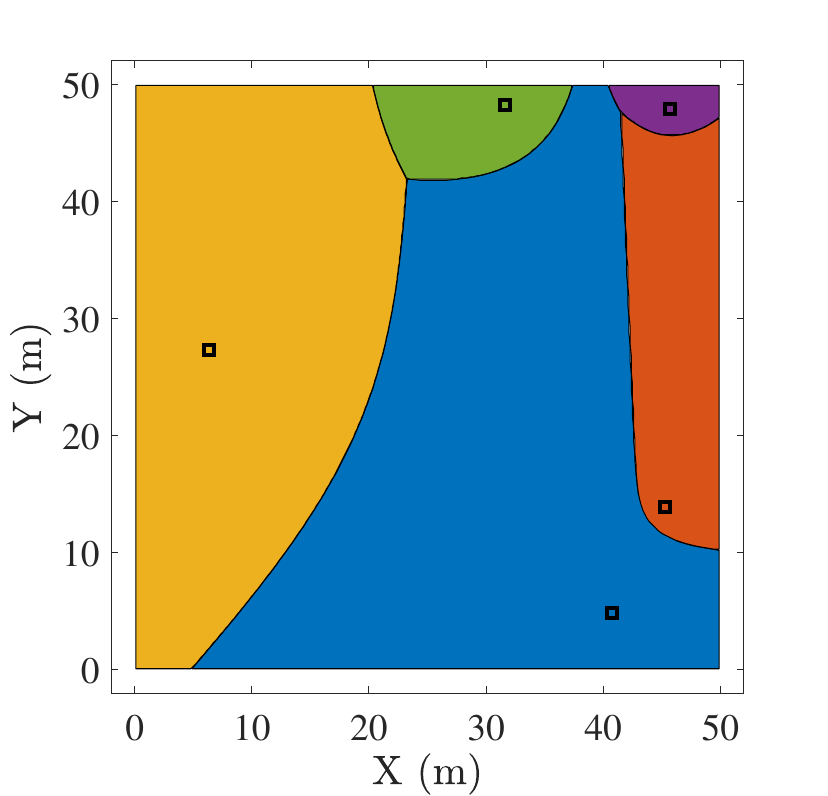}
            \caption{$\alpha=-0.99$.}
            \label{fig:Voronoia99}
        \end{subfigure}
        \caption{Illustration of the proposed generalised Voronoi tessellation for different values of $\alpha$. The wind heading is $\psi=-90^\circ$ and a square marker is used to denote the robot locations.}
        \label{fig:voronoi_example}
    \end{figure}

	\section{Path Planning and Control Policy Design}
	
	In this section, we introduce the multi-robot coordination algorithm that enables collaborative sampling to estimate the unknown parameters of multi-sources in the operational region $\mathcal{D}$. \tb{We assume that the number of available agents is larger than or equal to the maximum number of leaks in the search domain, that is, $n\geq M_{\max}$.}
 
    It should be noted that under the the proposed generalised Voronoi tessellation, $\mathcal{V}(\{\vect{p}^{(i)}\}_{1:n},f,\mathcal{D})$, the dominance regions $\mathcal{V}^{(i)}$ are determined once $\vect{p}^{(i)}~\forall~i\in\dsqrbrac{1:n}$ are fixed. In other words, for $\mathcal{W}=\mathcal{V}$, the locational optimization objective function \eqref{eq:obj1} is only a function of robot position. This result is formally stated and proved in Theorem \ref{thm:proveobj2}.
	\begin{theorem}\label{thm:proveobj2}
	    Let $\mathcal{H}_\mathcal{V}$ be defined as 
     \begin{align}\label{eq:obj2}
         \mathcal{H}_\mathcal{V}(\vect{p}^{(i)}) &= \sum_{i=1}^{n}\int_{\mathcal{V}^{(i)}}f_i(\vect{q})\phi(\vect{q})d\vect{q}\,,
     \end{align}
     where $\mathcal{V}^{(i)}\in\mathcal{V}(\{\vect{p}^{(i)}\}_{1:n},f,\mathcal{D})$ are the regions of dominance defined as per \eqref{eq:Voronoi}. For a given density function $\phi(.)$ defined over $\mathcal{D}$, the objective function defined in \eqref{eq:obj1}, $\mathcal{H}$, and $\mathcal{H}_\mathcal{V}$ have the same minimizer. 
	\end{theorem}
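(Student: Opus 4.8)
The plan is to exploit the standard coverage-control observation that, for a \emph{fixed} set of generator points, the cost $\mathcal{H}$ is minimised over partitions precisely by the $\mathcal{V}$-partition, so that the joint minimisation over $(\vect{p}^{(i)},\mathcal{W})$ collapses onto the minimisation of $\mathcal{H}_\mathcal{V}$ over the robot positions alone.

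First I would fix an arbitrary admissible configuration $\{\vect{p}^{(i)}\}_{1:n}$ and an arbitrary admissible partition $\mathcal{W}=\{\mathcal{W}^{(i)}\}_{1:n}$ of $\mathcal{D}$. Since the cells $\mathcal{W}^{(i)}$ are collectively exhaustive and pairwise overlap only on their boundaries, I would rewrite
\[
\mathcal{H}(\vect{p}^{(i)},\mathcal{W}) = \sum_{i=1}^{n}\int_{\mathcal{W}^{(i)}}f_i(\vect{q})\phi(\vect{q})\,d\vect{q} = \int_{\mathcal{D}}f_{\iota(\vect{q})}(\vect{q})\,\phi(\vect{q})\,d\vect{q},
\]
where $\iota(\vect{q})\in\dsqrbrac{1,n}$ is the index of the cell containing $\vect{q}$ (the set on which $\iota$ is ambiguous is Lebesgue-null, hence contributes nothing). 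Because $\phi\geq 0$ and $f_{\iota(\vect{q})}(\vect{q})\geq \min_{j}f_j(\vect{q})$ pointwise, this immediately gives the lower bound
\[
\mathcal{H}(\vect{p}^{(i)},\mathcal{W}) \;\geq\; \int_{\mathcal{D}}\Big(\min_{j\in\dsqrbrac{1,n}}f_j(\vect{q})\Big)\phi(\vect{q})\,d\vect{q},
\]
valid for every admissible $\mathcal{W}$.

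Next I would show this bound is attained by the generalised Voronoi partition $\mathcal{V}(\{\vect{p}^{(i)}\}_{1:n},f,\mathcal{D})$. By Lemma~\ref{lemma:Voronoiset} the cells $\mathcal{V}^{(i)}$ form an admissible partition, and by the definition \eqref{eq:Voronoi} we have $f_i(\vect{q})=\min_{j}f_j(\vect{q})$ for every $\vect{q}\in\mathcal{V}^{(i)}$; hence $\mathcal{H}_\mathcal{V}(\vect{p}^{(i)})$ equals the right-hand side above. Combining the two displays yields $\mathcal{H}(\vect{p}^{(i)},\mathcal{W})\geq \mathcal{H}_\mathcal{V}(\vect{p}^{(i)})$ for all $\mathcal{W}$, with equality at $\mathcal{W}=\mathcal{V}(\{\vect{p}^{(i)}\}_{1:n},f,\mathcal{D})$, i.e. $\min_{\mathcal{W}}\mathcal{H}(\vect{p}^{(i)},\mathcal{W})=\mathcal{H}_\mathcal{V}(\vect{p}^{(i)})$ for every $\vect{p}^{(i)}$. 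Minimising both sides over $\vect{p}^{(i)}$ then shows $\mathcal{H}$ and $\mathcal{H}_\mathcal{V}$ share the same minimising robot configuration, and that the optimal partition for $\mathcal{H}$ at that configuration is the associated $\mathcal{V}$-partition.

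The step requiring the most care is the measure-zero claim for the overlap/boundary sets --- both the ambiguity set $\{\vect{q}:\iota(\vect{q})\text{ not unique}\}$ and the sets $\mathcal{V}^{(i)}\cap\mathcal{V}^{(j)}$ (needed so that the sum defining $\mathcal{H}_\mathcal{V}$ does not double-count mass where cells meet). I would handle this by noting that each difference $f_i-f_j$ is real-analytic on $\mathcal{D}\setminus\{\vect{p}^{(i)},\vect{p}^{(j)}\}$ (a polynomial term plus an analytic norm term) and is not identically zero when $\vect{p}^{(i)}\neq\vect{p}^{(j)}$ --- for instance $f_i\to 0$ while $f_j\not\to 0$ as $\vect{q}\to\vect{p}^{(i)}$ --- so its zero set has zero Lebesgue measure, and a finite union of such sets is still null. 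Everything else is a pointwise monotonicity argument under the non-negative weight $\phi$, which is routine.
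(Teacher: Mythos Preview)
Your proof is correct and follows essentially the same route as the paper: for fixed robot positions, a pointwise comparison shows any admissible partition $\mathcal{W}$ satisfies $\mathcal{H}(\vect{p}^{(i)},\mathcal{W})\geq \mathcal{H}_\mathcal{V}(\vect{p}^{(i)})$, with equality at the generalised Voronoi partition, whence the joint minimiser of $\mathcal{H}$ coincides with the minimiser of $\mathcal{H}_\mathcal{V}$. Your treatment is somewhat more careful than the paper's --- in particular your real-analyticity argument for the null measure of the tie sets $\{f_i=f_j\}$ fills a gap the paper leaves implicit.
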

     \begin{proof}
         For some $\vect{p}^{(i)}$'s, let us choose an arbitrary tessellation $\mathcal{W} = \{\mathcal{W}_1,\ldots,\mathcal{W}_n\}$ such that $\mathcal{W}\neq\mathcal{V}$. 
         Consider an arbitrary point $\vect{q}\in\mathcal{D}$ such that it lies dominance region of the $i^{th}$ and $j^{th}$ robot as per the tessellation $\mathcal{V}$ and $\mathcal{W}$, respectively. Therefore, one can write $f_j(\vect{q})\phi(\vect{q})\geq f_i(\vect{q})\phi(\vect{q})\,.$
         Since, $\mathcal{W}$ is not a Voronoi tessellation over $\mathcal{D}$, $f_j(\vect{q})\phi(\vect{q})>f_i(\vect{q})\phi(\vect{q})$ must hold over some measurable set of $\mathcal{D}$. Thus,
         	\begin{align*}
        		\mathcal{H}(\vect{p}^{(i)},\mathcal{W}) &= \sum_{i=1}^{n}\int_{\mathcal{W}^{(i)}}f_i(\vect{q})\phi(\vect{q})d\vect{q} > \mathcal{H}_\mathcal{V}(\vect{p}^{(i)})\,.
        	\end{align*}
         Therefore, it can be concluded that the objective function \eqref{eq:obj1} is minimized with respect to the tessellation $\mathcal{W}$ only if the $\mathcal{W}=\mathcal{V}$. Therefore, minimizing \eqref{eq:obj1} is equivalent to minimizing $\mathcal{H}_\mathcal{V}$ defined in \eqref{eq:obj2}. This completes the proof.
     \end{proof}
    \begin{remark}\label{rem:Hvpostivedef}
        The objective function $\mathcal{H}_\mathcal{V}>0$ for $\vectnot{p}{}{i}\in\mathcal{D}$ as  $\phi>0$ for all $\vect{q}\in\mathcal{D}$ and $f_i(\vect{q})>0$ for all $\vect{q}\neq\vectnot{p}{}{i}$.
    \end{remark}
     We now derive a gradient descent control policy that locally minimizes the objective function $\mathcal{H}_\mathcal{V}$ with respect to the robot locations.
     
	\begin{theorem}\label{thm:criticalpoints}
	Let the terms $\hat{\mathcal{M}}_i$, $\bar{\mathcal{M}}_i$ and $\hat{\mathcal{C}}_i$ be defined as
    \begin{align}
        \cos\eta_i &= \frac{(\vectnot{p}{}{i}-\vect{q})\cdot\vec{e}_{\psi}}{||\vectnot{p}{}{i}-\vect{q}||}\\
        \hat{\mathcal{M}}_i &= \int_{\mathcal{V}_i}[2+\alpha\cos\eta_i]\phi(\vect{q})d\vect{q}\\
        \bar{\mathcal{M}}_i &= \int_{\mathcal{V}_i}||\vect{p}^{(i)} - \vect{q}||\phi(\vect{q})d\vect{q}\\
        \hat{\mathcal{C}}_i &= \frac{1}{\hat{\mathcal{M}}_i}\int_{\mathcal{V}_i}\vect{q}[2+\alpha\cos\eta_i]\phi(\vect{q})d\vect{q}\,.
    \end{align}
    Under the sensing performance function defined in \eqref{eq:sensingperform}, the critical position of the $i^{th}$ robot in the resulting generalised centroidal Voronoi configuration satisfies 
	\begin{align}
    	\vect{p}^{(i)}_\star &= \hat{\mathcal{C}}_{i} - \left[\frac{\alpha\bar{\mathcal{M}}_{i}}{\hat{\mathcal{M}}_{i}}\right]\vec{e}_{\psi}\label{eq:critical_points_p}\,.
    \end{align}
	\end{theorem}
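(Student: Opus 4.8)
The plan is to compute the gradient of $\mathcal{H}_\mathcal{V}$ with respect to a single generator $\vect{p}^{(i)}$ and locate its zero. First I would invoke the standard differentiation-under-the-integral argument for centroidal Voronoi configurations (as in \cite{Cortes2004}): although both $\mathcal{V}^{(i)}$ and the neighbouring cells $\mathcal{V}^{(j)}$ depend on $\vect{p}^{(i)}$, applying the generalised Leibniz rule produces an interior term $\int_{\mathcal{V}^{(i)}}\partial_{\vect{p}^{(i)}}f_i(\vect{q})\,\phi(\vect{q})\,d\vect{q}$ plus boundary terms supported on $\partial\mathcal{V}^{(i)}\cap\partial\mathcal{V}^{(j)}$ and on $\partial\mathcal{D}$. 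On each shared face the flux contributed to $\mathcal{H}_\mathcal{V}$ by cell $i$ is weighted by $f_i(\vect{q})\phi(\vect{q})$ and the equal-and-opposite flux from cell $j$ is weighted by $f_j(\vect{q})\phi(\vect{q})$; since the $\mathcal{V}$-partition is defined by $f_i(\vect{q}) = f_j(\vect{q})$ on that face (Lemma~\ref{lemma:Voronoiset}), these cancel, and the portion of $\partial\mathcal{V}^{(i)}$ lying on $\partial\mathcal{D}$ is stationary. Hence $\partial\mathcal{H}_\mathcal{V}/\partial\vect{p}^{(i)} = \int_{\mathcal{V}^{(i)}}\partial_{\vect{p}^{(i)}}f_i(\vect{q})\,\phi(\vect{q})\,d\vect{q}$.

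Next I would compute $\partial_{\vect{p}^{(i)}}f_i$ explicitly. Writing $\vect{r} = \vect{p}^{(i)}-\vect{q}$ and $r = \|\vect{r}\|$, we have $f_i = r^2 + \alpha\, r\,(\vect{r}\cdot\vec{e}_\psi)$, so using $\nabla_{\vect{p}^{(i)}} r^2 = 2\vect{r}$, $\nabla_{\vect{p}^{(i)}} r = \vect{r}/r$ and $\nabla_{\vect{p}^{(i)}}(\vect{r}\cdot\vec{e}_\psi) = \vec{e}_\psi$, the product rule gives
\begin{align*}
\nabla_{\vect{p}^{(i)}} f_i = 2\vect{r} + \alpha\!\left[\tfrac{\vect{r}}{r}(\vect{r}\cdot\vec{e}_\psi) + r\,\vec{e}_\psi\right] = (2+\alpha\cos\eta_i)\,\vect{r} + \alpha\, r\,\vec{e}_\psi,
\end{align*}
where in the last step I substitute $\vect{r}\cdot\vec{e}_\psi = r\cos\eta_i$ from the definition of $\eta_i$. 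Substituting back,
\begin{align*}
\frac{\partial\mathcal{H}_\mathcal{V}}{\partial\vect{p}^{(i)}} = \int_{\mathcal{V}^{(i)}}\!\Big[(2+\alpha\cos\eta_i)(\vect{p}^{(i)}-\vect{q}) + \alpha\|\vect{p}^{(i)}-\vect{q}\|\,\vec{e}_\psi\Big]\phi(\vect{q})\,d\vect{q}.
\end{align*}

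Finally I would set this gradient to zero and regroup the integrals into the named quantities: the coefficient of $\vect{p}^{(i)}$ is $\int_{\mathcal{V}_i}(2+\alpha\cos\eta_i)\phi\,d\vect{q} = \hat{\mathcal{M}}_i$, the $\vect{q}$-weighted term is $\hat{\mathcal{M}}_i\hat{\mathcal{C}}_i$, and the last term is $\alpha\bar{\mathcal{M}}_i\vec{e}_\psi$, so the stationarity condition reads $\hat{\mathcal{M}}_i\vect{p}^{(i)} - \hat{\mathcal{M}}_i\hat{\mathcal{C}}_i + \alpha\bar{\mathcal{M}}_i\vec{e}_\psi = 0$. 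Dividing by $\hat{\mathcal{M}}_i$ — which is legitimate because $2+\alpha\cos\eta_i \geq 2-|\alpha| > 1 > 0$ for $\alpha\in(-1,0]$ and $\phi>0$ on $\mathcal{D}$, so $\hat{\mathcal{M}}_i>0$ — yields exactly \eqref{eq:critical_points_p}. The main obstacle is the rigorous justification of the boundary-term cancellation in the first step; everything after that is a direct vector-calculus computation and bookkeeping. I would state the Leibniz-rule step carefully, citing the coverage-control literature, and remark that the positivity bound on $2+\alpha\cos\eta_i$ (already implicit in the positive-definiteness discussion following \eqref{eq:sensingperform}) guarantees $\hat{\mathcal{M}}_i$ is a valid normaliser.
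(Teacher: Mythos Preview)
Your proposal is correct and follows essentially the same approach as the paper: apply the Leibniz rule to $\mathcal{H}_\mathcal{V}$, argue that the boundary contributions on shared faces cancel (using $f_i=f_j$ there and opposite normal velocities) while the $\partial\mathcal{D}$ portion is stationary, then compute $\nabla_{\vect{p}^{(i)}}f_i=(2+\alpha\cos\eta_i)(\vect{p}^{(i)}-\vect{q})+\alpha\|\vect{p}^{(i)}-\vect{q}\|\vec{e}_\psi$ and regroup into $\hat{\mathcal{M}}_i(\vect{p}^{(i)}-\vect{p}^{(i)}_\star)$. If anything, you are slightly more explicit than the paper in invoking $f_i=f_j$ on $\mathcal{E}_{ij}$ and in justifying $\hat{\mathcal{M}}_i>0$ before dividing.
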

	\begin{proof}
    In order to find the condition of optimality that minimizes the cost function $\mathcal{H}_\mathcal{V}$, we set the partial derivative of \eqref{eq:obj2} with respect to $\vect{p}^{(i)}$ to zero. Differentiating \eqref{eq:obj2} and applying the Leibniz rule \cite{Flanders1973}, we get
	\begin{align}\label{eq:partialder0}
	\frac{\partial \mathcal{H}_\mathcal{V}}{\partial \vect{p}^{(i)}} &= \frac{\partial }{\partial \vect{p}^{(i)}}\int_{\mathcal{V}^{(i)}}f_i(\vect{q})\phi(\vect{q})d\vect{q} + \frac{\partial }{\partial \vect{p}^{(i)}}\sum_{i\neq j}\int_{\mathcal{V}^{(j)}}f_j(\vect{q})\phi(\vect{q})d\vect{q}\,,\nonumber  \\
	     &= \int_{\mathcal{V}^{(i)}}\frac{\partial f_i(\vect{q})}{\partial \vect{p}^{(i)}}\phi(\vect{q})d\vect{q} + \int_{\partial\mathcal{V}^{(i)}} f_i(\vect{q})\phi(\vect{q})\left[\frac{\partial \vect{q}}{\partial \vect{p}^{(i)}}\cdot \tilde{\mathsf{n}}^{(i)}(\vect{q})\right]dl\nonumber\\
      &+ \sum_{j\neq i}\int_{\partial\mathcal{V}^{(j)}} f_j(\vect{q})\phi(\vect{q})\left[\frac{\partial \vect{q}}{\partial \vect{p}^{(i)}}\cdot \tilde{\mathsf{n}}^{(j)}(\vect{q})\right]dl\,,
	\end{align}
    where $dl$ is the infinitesimal length and $\tilde{\mathsf{n}}^{(i)}(\vect{q})$ is the outward normal vector to the curve at $\vect{q}\in\partial\mathcal{V}^{(i)}$. The dot product $\left[\frac{\partial \vect{q}}{\partial \vect{p}^{(i)}}\cdot \tilde{\mathsf{n}}^{(i)}(\vect{q})\right]$ is the velocity of the curve at $\vect{q}\in\partial\mathcal{V}^{(i)}$ along $\tilde{\mathsf{n}}^{(i)}(\vect{q})$. Note that $\partial \mathcal{H}_\mathcal{V}/\partial \vect{p}^{(i)}$ is a column vector of size two. In order to evaluate the second and third term of \eqref{eq:partialder0}, we need to express the curve velocity at different parts of the boundaries $\partial \mathcal{V}^{(i)}$. Therefore, we express $\partial \mathcal{V}^{(i)} = \mathcal{E}_{i\mathcal{D}}\bigcup\mathcal{E}_{ij};$ where $\mathcal{E}_{i\mathcal{D}} = \partial\mathcal{V}^{(i)}\bigcap\partial\mathcal{D}$ and $\mathcal{E}_{ij} = \partial\mathcal{V}^{(i)}\bigcap\partial\mathcal{V}^{(j)}~\forall~j\neq i$. Therefore, we can rewrite \eqref{eq:partialder0} as
    \begin{align}\label{eq:partialder1}
	   \frac{\partial \mathcal{H}_\mathcal{V}}{\partial \vect{p}^{(i)}}
	     &= \int_{\mathcal{V}^{(i)}}\frac{\partial f_i(\vect{q})}{\partial \vect{p}^{(i)}}\phi(\vect{q})d\vect{q} + \int_{\mathcal{E}_{i\mathcal{D}}} f_i(\vect{q})\phi(\vect{q})\left[\frac{\partial \vect{q}}{\partial \vect{p}^{(i)}}\cdot \tilde{\mathsf{n}}^{(i)}(\vect{q})\right]dl\nonumber\\
      &+ \sum_{j\neq i}\int_{\mathcal{E}_{ij}} f_i(\vect{q})\phi(\vect{q})\left[\frac{\partial \vect{q}}{\partial \vect{p}^{(i)}}\cdot \tilde{\mathsf{n}}^{(i)}(\vect{q})\right]dl + \sum_{j\neq i}\int_{\mathcal{E}_{ij}} f_j(\vect{q})\phi(\vect{q})\left[\frac{\partial \vect{q}}{\partial \vect{p}^{(i)}}\cdot \tilde{\mathsf{n}}^{(j)}(\vect{q})\right]dl\,.
	\end{align}
    It should be noted that $\mathcal{E}_{ij} = \varnothing$ for all robots $j$ that are not the Voronoi  neighbours of robot $i$. Furthermore, it can be easily concluded that for all $j$ that are Voronoi neighbours to $i$ as per \eqref{eq:Voronoi}, the following holds $$\left[\frac{\partial \vect{q}}{\partial \vect{p}^{(i)}}\cdot \tilde{\mathsf{n}}^{(i)}(\vect{q})\right] = -\left[\frac{\partial \vect{q}}{\partial \vect{p}^{(i)}}\cdot \tilde{\mathsf{n}}^{(j)}(\vect{q})\right]\,,$$
    since the points $\vect{q}\in\mathcal{E}_{ij}$ move at the same speed in opposite directions. Therefore, the sums in third and fourth term of \eqref{eq:partialder1} cancel out. Moreover, for an infinitesimal motion of the $i^{th}$ robot, it can be concluded that $\partial\vect{q}^{(i)}/\partial\vect{p}^{(i)} = 0~\forall~\vect{q}\in\mathcal{E}_{i\mathcal{D}}$. Therefore, the second term in \eqref{eq:partialder1} also equals to zero. Hence, the partial derivative $\partial\mathcal{H}_\mathcal{V}/\partial \vect{p}^{(i)}$ can be simplified to
    \begin{align}\label{eq:Hpartial_der_p}
	   \frac{\partial \mathcal{H}_\mathcal{V}}{\partial \vect{p}^{(i)}}
	     &= \int_{\mathcal{V}^{(i)}}\frac{\partial f_i(\vect{q})}{\partial \vect{p}^{(i)}}\phi(\vect{q})d\vect{q}\,.
	\end{align}
    The partial derivative of $f$ with respect to $\vect{p}^{(i)}$ can be written, using \eqref{eq:sensingperform}, as 
    \begin{align}\label{eq:fpartial}
        \frac{\partial f_i(\vect{q})}{\partial \vect{p}^{(i)}} &= [2+\alpha\cos\eta_i](\vect{p}^{(i)}-\vect{q}) + \alpha||\vect{p}^{(i)}-\vect{q}||\vec{e}_{\psi}\,.
    \end{align}
    Substituting \eqref{eq:fpartial} in \eqref{eq:Hpartial_der_p} and simplifying, we get
    \begin{align}
        \frac{\partial \mathcal{H}_\mathcal{V}}{\partial \vect{p}^{(i)}}
	     &= \int_{\mathcal{V}^{(i)}}(\vect{p}^{(i)}-\vect{q})\left[2+\alpha\cos\eta_i\right]\phi(\vect{q})d\vect{q}\nonumber\\
      &~~+ \int_{\mathcal{V}^{(i)}}\vec{e}_{\psi}\alpha||\vect{p}^{(i)}-\vect{q}||\phi(\vect{q})d\vect{q}\nonumber\\
      & = \hat{\mathcal{M}}_i(\vect{p}^{(i)}-\hat{\mathcal{C}}_{i}) + \alpha\bar{\mathcal{M}}_i\vec{e}_{\psi}\nonumber\\
      & = \hat{\mathcal{M}}_i\left(\vect{p}^{(i)}-\hat{\mathcal{C}}_{i} + \left[\frac{\alpha\bar{\mathcal{M}}_i}{\hat{\mathcal{M}}_i}\right]\vec{e}_{\psi}\right)\nonumber\\
      & = \hat{\mathcal{M}}_i\left(\vect{p}^{(i)}-\vectnot{p}{\star}{i}\right)\label{eq:Hpartial_der_pfinal}
    \end{align}
    Equating \eqref{eq:Hpartial_der_pfinal} to zero, we can see that the critical position of the $i^{th}$ robot is given by \eqref{eq:critical_points_p}. This completes the proof.
	\end{proof}
    \begin{remark}\label{rem:convexD}
        As the Voronoi cell $\mathcal{V}^{(i)}$ is not convex, the critical point $\vectnot{p}{\star}{i}$ may not lie inside it. However, $\vectnot{p}{\star}{i}\in\mathcal{D}$ throughout the mission as $V^{(i)}\subseteq\mathcal{D}$ and $\mathcal{D}$ is a convex region. 
    \end{remark}
    In order to drive the robots to the respective critical locations, we design the linear and angular speeds of the $i^{th}$ robot as
    \begin{subequations}\label{eq:controlInputs}
    \begin{align}
        v^{(i)} &= \begin{cases}
            0 & \text{if }0<\scalnot{v}{\parallel}{i}\\
            -k_v\scalnot{v}{\parallel}{i} & \text{if }-v_{max}\leq k_v\scalnot{v}{\parallel}{i}\leq 0\\
            v_{max} & \text{if }k_v\scalnot{v}{\parallel}{i}<-v_{max}
        \end{cases}\\[2mm]
        \omega^{(i)} &= \begin{cases}
            k_\omega \tan^{-1}\left(\dfrac{-\scalnot{v}{\perp}{i}}{-\scalnot{v}{\parallel}{i}}\right) & \text{if }||\vectnot{p}{}{i}-\vectnot{p}{\star}{i}||\neq0\\
            0 & \text{if }||\vectnot{p}{}{i}-\vectnot{p}{\star}{i}||=0
        \end{cases}
    \end{align}
    \end{subequations}
    where $\scalnot{v}{\parallel}{i}=\vec{e}_{\scalnot{\gamma}{}{i}}\cdot(\vectnot{p}{}{i}-\vectnot{p}{\star}{i})$, $\scalnot{v}{\perp}{i}=\vec{e}_{\perp\scalnot{\gamma}{}{i}}\cdot(\vectnot{p}{}{i}-\vectnot{p}{\star}{i})$, and the positive constants, $k_v$ and $k_{\omega}$, are proportional gains. 
    
    Consider the sets $\mathcal{Q}$ and $\mathcal{E}$ defined as 
    \begin{align}
        \mathcal{Q}&\triangleq\{\vectnot{p}{}{i}~|~\scalnot{v}{\parallel}{i}\geq0 \text{ and }||\vectnot{p}{}{i}-\vectnot{p}{\star}{i}||\neq0 \}\\
        \mathbb{E}&\triangleq\{\vectnot{p}{\star}{i}~|~\scalnot{v}{}{i},\scalnot{\omega}{}{i}=0\}\,.
    \end{align}
    From \eqref{eq:controlInputs}, $\omega^{(i)}\neq0$ for all $\vectnot{p}{}{i}\in\mathcal{Q}$. Therefore, for all $\vectnot{p}{}{i}\in\mathcal{Q}$, the robot heading (unit vector $\vec{e}_{\scalnot{\gamma}{}{i}}$) rotates over time to eventually cause $\scalnot{v}{\parallel}{i}<0$, consequently, resulting in $\vectnot{p}{}{i}\notin\mathcal{Q}$ after some time. 
    Thus, the set $\mathcal{Q}$ does not represent an invariant set as the robots haven't converged on their respective critical points. Moreover, it can noted that for all $\vectnot{p}{}{i}\in\mathbb{E}$, both $\scalnot{v}{}{i}$ and $\scalnot{\omega}{}{i}$ are zero, resulting in $\mathbb{E}$ being the largest invariant set in $\mathcal{D}$.    

    \begin{theorem}\label{thm:convergence}
	The trajectories resulting from the control commands given by \eqref{eq:controlInputs} asymptotically drive the each robots to their critical point \eqref{eq:critical_points_p} while minimizing $\mathcal{H}_\mathcal{V}$ in a non-increasing manner.
	\end{theorem}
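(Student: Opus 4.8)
The plan is to take the coverage cost $\mathcal{H}_\mathcal{V}$ itself as a common Lyapunov function for the closed-loop system \eqref{eq:dynamics}--\eqref{eq:controlInputs} and then invoke LaSalle's invariance principle. By Remark~\ref{rem:Hvpostivedef} we have $\mathcal{H}_\mathcal{V}>0$ and $\mathcal{H}_\mathcal{V}$ is bounded below by $0$; since $\mathcal{D}$ is compact and, by Remark~\ref{rem:convexD}, the flow stays in $\mathcal{D}^n$, its sub-level sets are compact, so LaSalle will apply once $\dot{\mathcal{H}}_\mathcal{V}\leq 0$ is established. First I would differentiate $\mathcal{H}_\mathcal{V}$ along the trajectories. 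Because $\mathcal{H}_\mathcal{V}$ depends only on the positions $\{\vect{p}^{(i)}\}$ and not on the headings $\gamma^{(i)}$, the chain rule together with the gradient identity $\partial\mathcal{H}_\mathcal{V}/\partial\vect{p}^{(i)}=\hat{\mathcal{M}}_i(\vect{p}^{(i)}-\vect{p}^{(i)}_\star)$ from Theorem~\ref{thm:criticalpoints} and the kinematics $\dot{\vect{p}}^{(i)}=v^{(i)}\vec{e}_{\gamma^{(i)}}$ give
\[
\dot{\mathcal{H}}_\mathcal{V}=\sum_{i=1}^{n}\hat{\mathcal{M}}_i\,v^{(i)}\big[(\vect{p}^{(i)}-\vect{p}^{(i)}_\star)\cdot\vec{e}_{\gamma^{(i)}}\big]=\sum_{i=1}^{n}\hat{\mathcal{M}}_i\,v^{(i)}\,\scalnot{v}{\parallel}{i},
\]
using the definition $\scalnot{v}{\parallel}{i}=\vec{e}_{\gamma^{(i)}}\cdot(\vect{p}^{(i)}-\vect{p}^{(i)}_\star)$.

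Second, I would argue that every summand is non-positive. Since $-1<\alpha\leq 0$ and $|\cos\eta_i|\leq 1$, the integrand obeys $2+\alpha\cos\eta_i\geq 2+\alpha>1$; with $\phi>0$ and $\mathcal{V}^{(i)}$ of positive measure (Lemma~\ref{lemma:Voronoiset}) this yields $\hat{\mathcal{M}}_i>0$. Inspecting the three branches of \eqref{eq:controlInputs}: if $\scalnot{v}{\parallel}{i}>0$ then $v^{(i)}=0$ and the term vanishes; if $\scalnot{v}{\parallel}{i}\leq 0$ then either $v^{(i)}=-k_v\scalnot{v}{\parallel}{i}\geq 0$ or (in the saturated branch, which still forces $\scalnot{v}{\parallel}{i}<0$) $v^{(i)}=v_{max}>0$, so in every case $v^{(i)}\scalnot{v}{\parallel}{i}\leq 0$. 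Hence $\dot{\mathcal{H}}_\mathcal{V}\leq 0$, i.e. $\mathcal{H}_\mathcal{V}$ is non-increasing along the closed-loop trajectories, which proves the monotonicity part of the claim and also shows $\mathcal{H}_\mathcal{V}$ converges to a limit.

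Third, I would apply LaSalle: the trajectories approach the largest invariant set contained in $\{\dot{\mathcal{H}}_\mathcal{V}=0\}$. By the case analysis, $\dot{\mathcal{H}}_\mathcal{V}=0$ forces $v^{(i)}\scalnot{v}{\parallel}{i}=0$ for each $i$, and since $v^{(i)}\neq 0$ whenever $\scalnot{v}{\parallel}{i}<0$, this means each robot position lies in $\mathcal{Q}\cup\mathbb{E}$. On $\mathcal{Q}$ one has $\omega^{(i)}\neq 0$ --- if $\scalnot{v}{\parallel}{i}>0$ the heading $\vec{e}_{\gamma^{(i)}}$ rotates until $\scalnot{v}{\parallel}{i}<0$, and if $\scalnot{v}{\parallel}{i}=0$ with $\vect{p}^{(i)}\neq\vect{p}^{(i)}_\star$ then $\scalnot{v}{\perp}{i}\neq 0$ and again $\omega^{(i)}\neq 0$ --- so no complete trajectory remains in $\mathcal{Q}$ and it lies in no invariant set. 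Hence the largest invariant set is contained in $\mathbb{E}$, where every robot satisfies $\scalnot{v}{\parallel}{i}=\scalnot{v}{\perp}{i}=0$ and therefore $\vect{p}^{(i)}=\vect{p}^{(i)}_\star$ (these being the components of $\vect{p}^{(i)}-\vect{p}^{(i)}_\star$ along two orthogonal directions), while $\mathbb{E}$ is itself invariant. Consequently the closed-loop trajectories asymptotically drive every robot to its critical configuration \eqref{eq:critical_points_p}, completing the proof.

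The step I expect to be the main obstacle is this invariant-set characterization, because it couples the position and heading dynamics: $\scalnot{v}{\parallel}{i}=0$ alone does not certify an equilibrium, so the rotational dynamics must be used to steer out the spurious rest configurations in $\mathcal{Q}$ and the perpendicular-alignment case, which in turn requires the steering law to produce $\omega^{(i)}\neq 0$ at every configuration where the robot is not pointing toward $\vect{p}^{(i)}_\star$ (one should, for instance, read $\tan^{-1}$ in \eqref{eq:controlInputs} as a four-quadrant arctangent so that even the ``pointing directly away'' configurations are driven out). A secondary technical point worth care is the piecewise smoothness of $\mathcal{H}_\mathcal{V}$ (at combinatorial changes of the generalised Voronoi adjacency) and of \eqref{eq:controlInputs} (at its switching surfaces): a fully rigorous treatment then needs a nonsmooth version of LaSalle's theorem, or an almost-everywhere differentiability argument, and it relies on the gradient identity of Theorem~\ref{thm:criticalpoints} being the genuine total partial derivative of $\mathcal{H}_\mathcal{V}$ (valid because $\vect{p}^{(i)}_\star$ already incorporates the motion of all cell boundaries), which is exactly what makes $\mathcal{H}_\mathcal{V}$ a legitimate Lyapunov function for the coupled multi-robot system.
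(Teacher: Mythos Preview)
Your proposal is correct and follows essentially the same approach as the paper: take $\mathcal{H}_\mathcal{V}$ as a Lyapunov function, use the gradient identity from Theorem~\ref{thm:criticalpoints} to obtain $\dot{\mathcal{H}}_\mathcal{V}=\sum_i\hat{\mathcal{M}}_i v^{(i)}\scalnot{v}{\parallel}{i}\leq 0$ by case analysis on \eqref{eq:controlInputs}, and invoke LaSalle after arguing that $\mathcal{Q}$ is not invariant so the largest invariant set is $\mathbb{E}$. The only minor difference is that the paper makes the positive invariance of $\mathcal{D}$ explicit (if $\vect{p}^{(i)}\in\partial\mathcal{D}$ heading outward then $\scalnot{v}{\parallel}{i}>0$ since $\vect{p}^{(i)}_\star\in\mathcal{D}$, hence $v^{(i)}=0$), whereas you defer this to Remark~\ref{rem:convexD}; otherwise your argument is, if anything, more careful than the paper's in verifying $\hat{\mathcal{M}}_i>0$ and in discussing the heading dynamics on $\mathcal{Q}$.
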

    \begin{proof}
        Consider the Lyapunov function candidate $\mathcal{H}_\mathcal{V}(\vectnot{p}{}{i})$ for all $\vect{q}\in\mathcal{D}$ (refer to Remark \ref{rem:Hvpostivedef}). Thus, the time derivative of $\mathcal{H}_\mathcal{V}(\vectnot{p}{}{i})$ can be expressed as
        \begin{align}
            \frac{d\mathcal{H}_\mathcal{V}}{dt} &= \sum_{i=1}^{n}\left[\frac{\partial \mathcal{H}}{\partial \vectnot{p}{}{i}}\cdot\frac{d\vectnot{p}{}{i}}{dt}\right] = \sum_{i=1}^{n}\hat{\mathcal{M}}_i\scalnot{v}{}{i}\scalnot{v}{\parallel}{i}\,,
        \end{align}
        which can be further simplified using \eqref{eq:controlInputs} as
        \begin{align}\label{eq:HvDot}
            \frac{d\mathcal{H}_\mathcal{V}}{dt} &= \begin{cases}
            0 & \text{if }0<\scalnot{v}{\parallel}{i}\\
            -k_v\hat{\mathcal{M}}_i\left[\scalnot{v}{\parallel}{i}\right]^2 & \text{if }-v_{max}\leq k_v\scalnot{v}{\parallel}{i}\leq 0\\
            -\hat{\mathcal{M}}_i\dfrac{\left[v_{max}\right]^2}{k_v} & \text{if }k_v\scalnot{v}{\parallel}{i}<-v_{max}
        \end{cases}\,.
        \end{align}
        At some time $t$, let $\vectnot{p}{}{i}\in\partial\mathcal{D}$ and heading outward from $\mathcal{D}$, the relative velocity $\scalnot{v}{\parallel}{i}>0$ as $\vectnot{p}{\star}{i}\in\mathcal{D}$ (refer to Remark \eqref{rem:convexD}). Thus, as per \eqref{eq:controlInputs}, $\scalnot{v}{}{i}=0$ for such a scenario. Hence, the region $\mathcal{D}$ is invariant under the proposed control design of \eqref{eq:controlInputs}. Furthermore, it can be shown that $\mathcal{H}_\mathcal{V}$ is continuously differentiable with respect to $\{\vectnot{p}{}{i}\}$. Thus, as $\dot{\mathcal{H}}_{\mathcal{V}}\leq0$ in $\mathcal{D}$ from \eqref{eq:HvDot} and $\vectnot{p}{}{i}\in\mathbb{E}$ is the largest invariant set, it can be concluded using LaSalle's in-variance principle that all $\vectnot{p}{}{i}(0)\in\mathcal{D}$ will asymptotically converge to $\vectnot{p}{\star}{i}$ of the generalised Voronoi partition and minimize $\mathcal{H}_\mathcal{V}$ in a non-increasing manner under the control action \eqref{eq:controlInputs}.
    \end{proof}

    The sensing platforms perform environmental sampling when each robot in the network converges to their corresponding stationary points $\vectnot{p}{\star}{i}$. Moreover, the search is assumed to be concluded when the normed uncertainty of the estimated source term, computed as 
    \begin{align}\label{eq:normedSigma}
        \sigma_{\Theta_k} = \sqrt{\sum_{j\in\mathcal{J}}\sum_{p=1}^{N_p}\mathsf{b}_{p,j}\weight{k}{p}\left|\left|\scalnot{\theta}{k,j}{p} - \bar{\theta}_{k,j}\right|\right|^2}\,,
    \end{align} is below a certain threshold, $\sigma_{th}$. The Overall proposed multi-robot multi-source term estimation strategy is summarised in Algorithm \ref{algo:strategy}.
    
    \begin{algorithm}[!ht]
        \DontPrintSemicolon
        \KwIn{\small
              Initial robot position: $\vectnot{p}{0}{i}$,
              Input Domain: $\mathcal{D}$,
              Sampling instant counter: $k=1$,
              }
              \BlankLine
        \KwOut{\small Source Term Estimate: $\bar{\theta}_{k}$ and $\bar{M}_k$}
        \BlankLine
        Initialise particle filter with $\{\weight{0}{p},\vectnot{X}{0}{p}\}$\;
        Perform environment sampling action for $\scalnot{z}{k}{:}$\;
        Belief Update: $p(\vect{X}_{k}|\vect{Z}_{k}) = \{\weight{k}{p},\vectnot{X}{k}{p}\}$ as per Algorithm \ref{alg:one_step_pf}\;
        \BlankLine
        \While{$\sigma_{\Theta_k}>\sigma_{th}$}{
        Compute the density function $\phi_k$ as per \eqref{eq:densityfunction}\;
        Construct the generalised Voronoi cell $\mathcal{V}_i$ using Eq. \eqref{eq:Voronoi}\;
        Compute $\scalnot{v}{}{1:n},\scalnot{\omega}{}{1:n}$ using \eqref{eq:controlInputs}\;
        \While{$\sum_{i}(\scalnot{v}{}{i}+|\scalnot{\omega}{}{i}|)\approx0$}{
            Construct the generalised Voronoi cell $\mathcal{V}_i$ using Eq. \eqref{eq:Voronoi}\;
            Compute $\scalnot{v}{}{1:n},\scalnot{\omega}{}{1:n}$ using \eqref{eq:controlInputs}\;
            Update robot positions,~~$\vectnot{p}{t}{1:n}$ as per \eqref{eq:dynamics}\;
        }
        Set $k = k+1$\;
        Perform environment sampling action for $\scalnot{z}{k}{:}$\;
        Belief Update: $p(\vect{X}_{k}|\vect{Z}_{k}) = \{\weight{k}{p},\vectnot{X}{k}{p}\}$ as per Algorithm \ref{alg:one_step_pf}\; 
        Compute $\sigma_{\Theta_k}$ using \eqref{eq:normedSigma}\;
        }
        Compute source term estimate as per Section \ref{sec:estimatecompute} \;
        \caption{Multi-robot Multi-Source Term Estimation (MR-MSTE) Path Planner.}
        \label{algo:strategy}
    \end{algorithm}

\begin{figure*}[!ht]
    \centering
    \captionsetup[subfigure]{justification=centering}
    \begin{subfigure}{0.4\linewidth}
        \includegraphics[width=\linewidth]{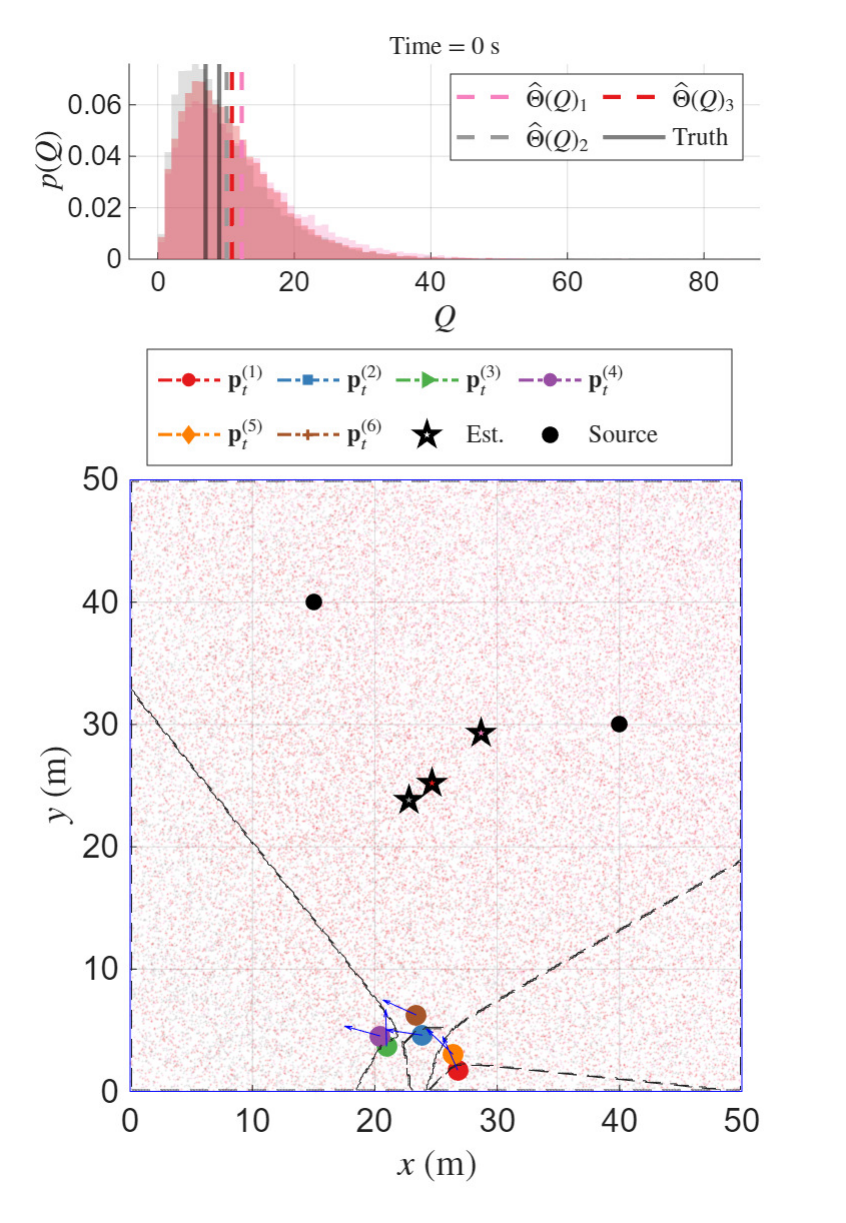}
        \caption{}
        \label{fig:Illustrativefig1}
    \end{subfigure}
    \begin{subfigure}{0.4\linewidth}
        \includegraphics[width=\linewidth]{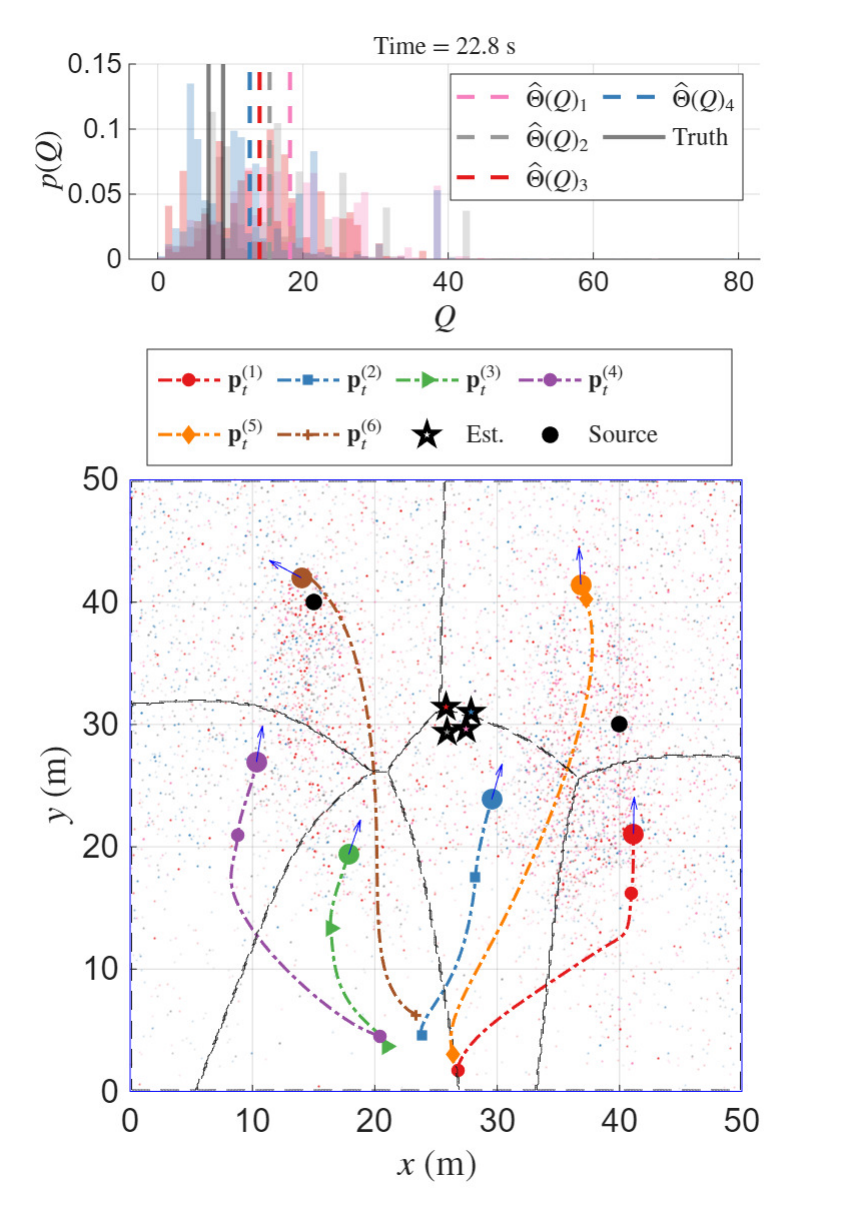}
        \caption{}
        \label{fig:Illustrativefig185}
    \end{subfigure}
    \begin{subfigure}{0.4\linewidth}
        \includegraphics[width=\linewidth]{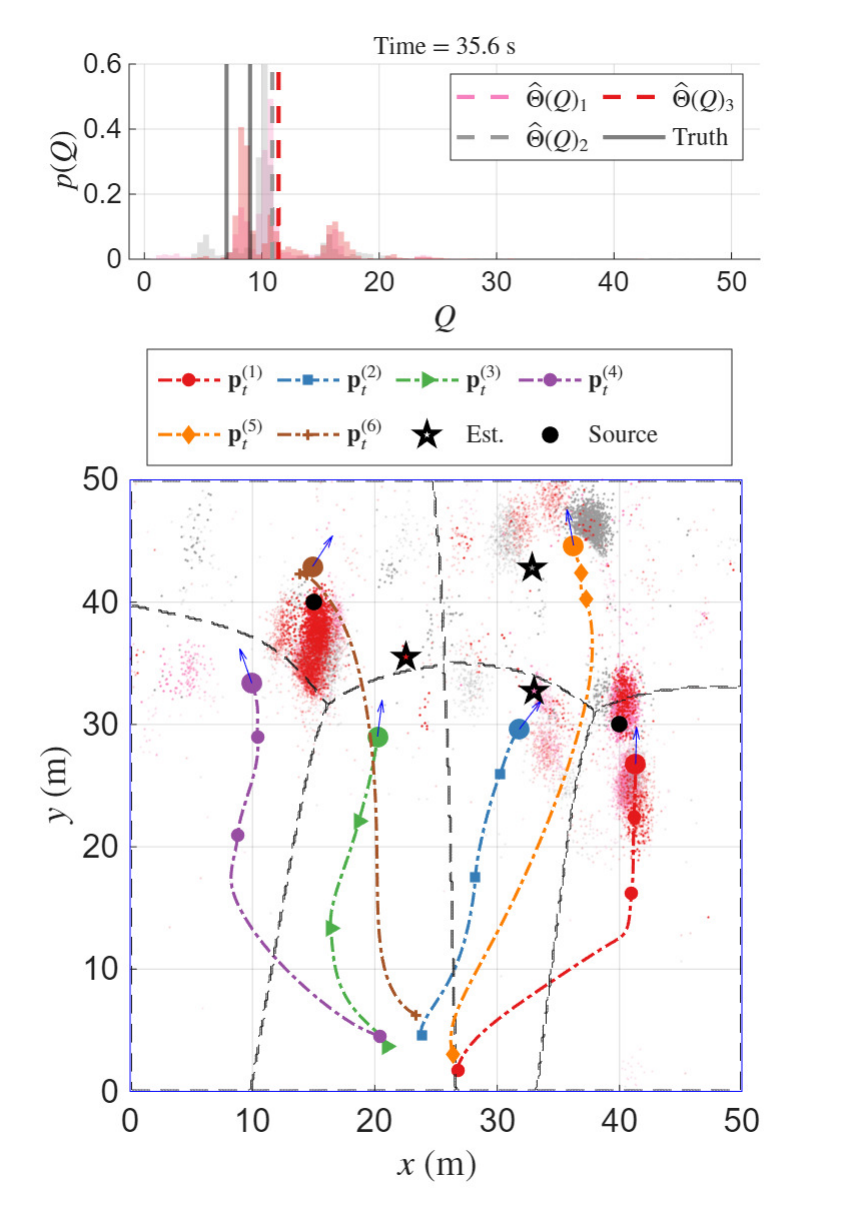}
        \caption{}
        \label{fig:Illustrativefig418}
    \end{subfigure}
    \begin{subfigure}{0.4\linewidth}
        \includegraphics[width=\linewidth]{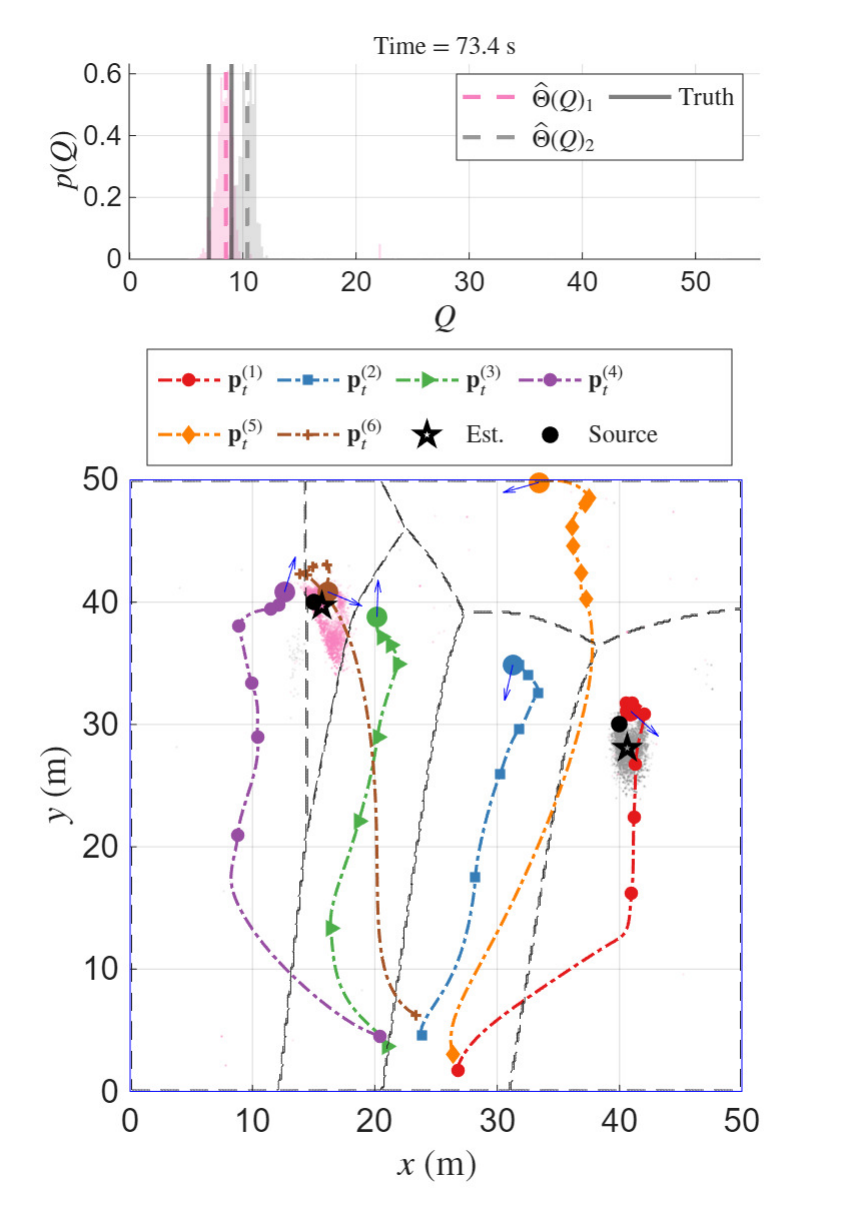}
        \caption{}
        \label{fig:Illustrativefig626}
    \end{subfigure}
    \caption{Illustrative run for the proposed wind-aware coverage control based MR-MSTE algorithm with $n=6$, $M_{\max}=4$, $\alpha=-0.75$, $v_w = 4m/s$ and $\psi=90^\circ$ and the black dashed line representing the generalized Voronoi cells corresponding to each robot.}
    \label{fig:Illustrative_Run}
\end{figure*}

\section{Simulation Study}
\tb{In this section, we first present a illustrative study followed by a comprehensive Monte Carlo simulation study to demonstrate the advantages of the proposed algorithm  over the traditional coverage control for multi-source term estimation tasks. As considered in simulation studies of \cite{Park2021,PARK202072,JANG2023120033}, it is assumed that the environment parameters are known and fixed during each run.}

\begin{figure}
    \centering
    \captionsetup[subfigure]{justification=centering}
    \begin{subfigure}{0.45\linewidth}
        \includegraphics[width=\linewidth]{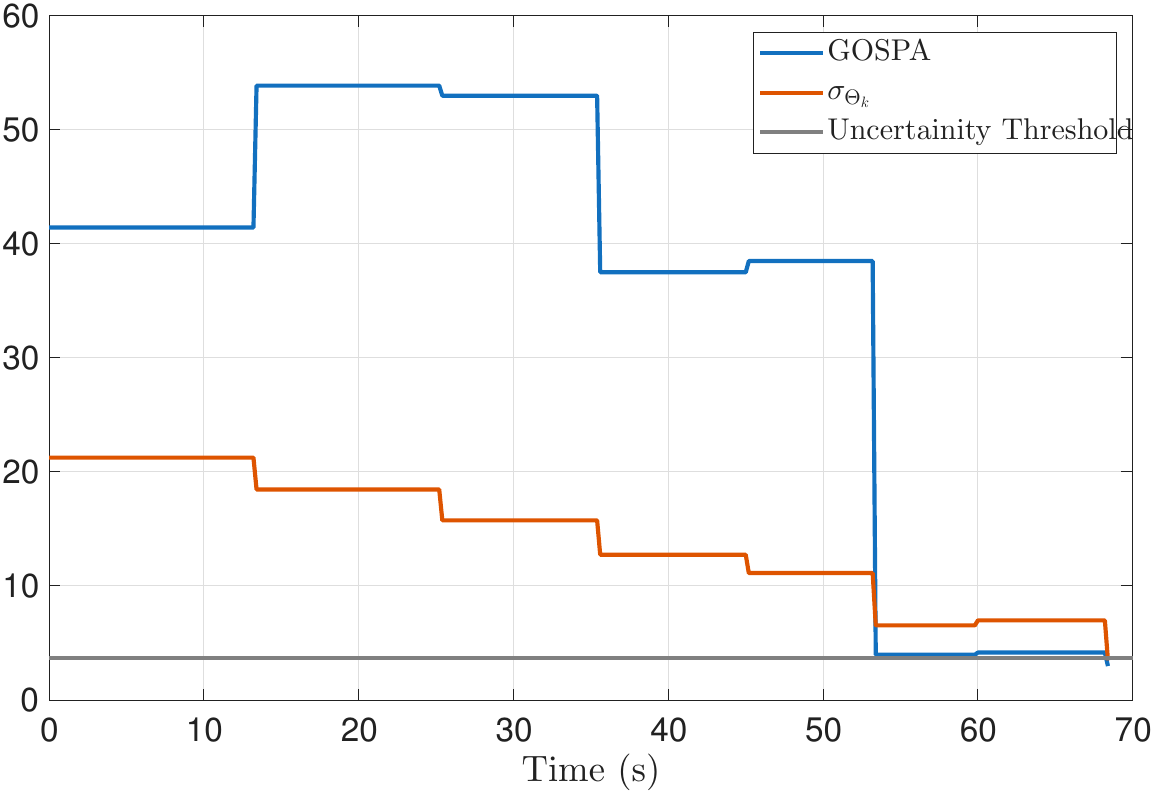}
        \caption{}
        \label{fig:rmse_uncertainity}
    \end{subfigure}
    \begin{subfigure}{0.45\linewidth}
        \includegraphics[width=\linewidth]{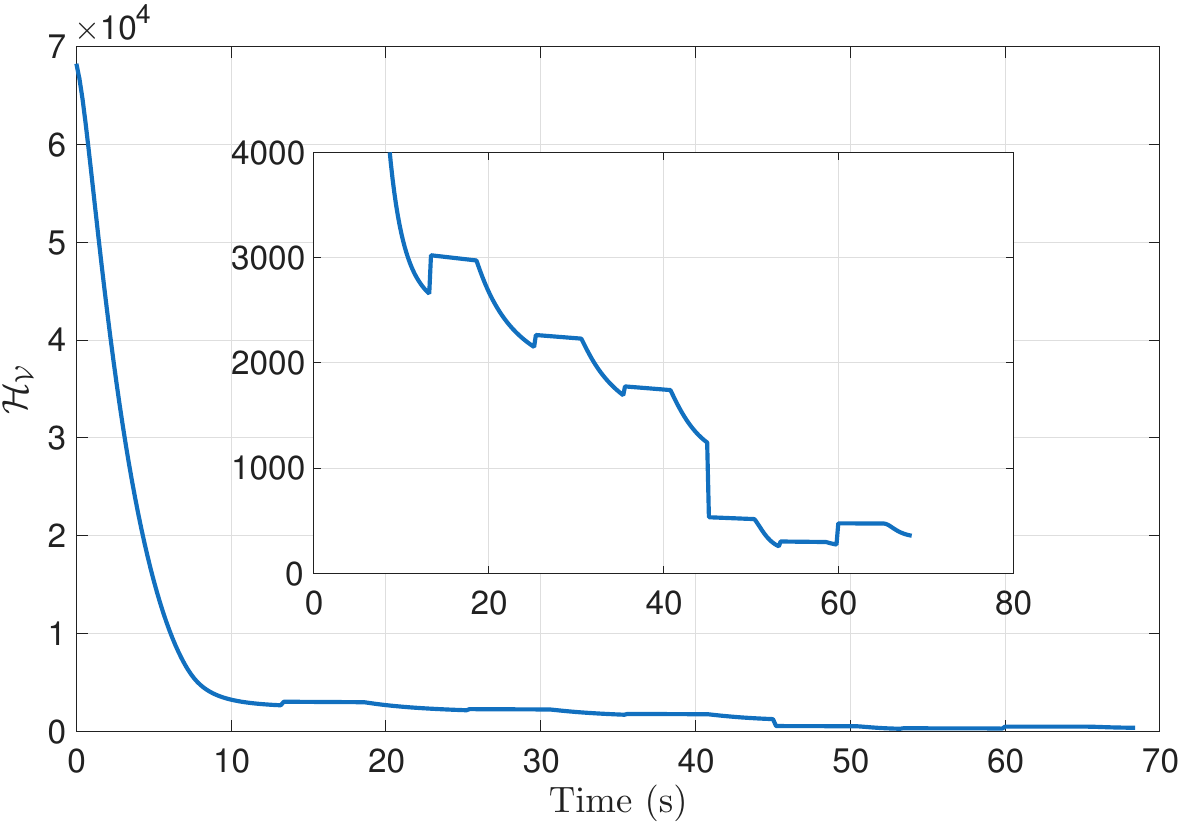}
        \caption{}
        \label{fig:ObjectiveFunction}
    \end{subfigure}
    \caption{Variation of (\subref{fig:rmse_uncertainity}) GOSPA error along with $\sigma_{\Theta_k}$, and, (\subref{fig:ObjectiveFunction}) $\mathcal{H}_{\mathcal{V}}$ for the illustrative run presented in Fig. \ref{fig:Illustrative_Run}. }
    \label{fig:performanceIndex}
\end{figure}

\subsection{Illustrative Study}
\tb{We consider an illustrative study with 6 robots deployed in the lower left corner of an open $50m\times50m$ region. The ground truth consist of two sources located at $15m$ east-$40m$ north and $40m$ east-$30m$ north of the left bottom corner (refereed to as the origin from here on). Moreover, the sources are assumed to be releasing chemical gas at $7g/s$ and $9g/s$ dispersed by a wind flow along the negative Y-axis at a speed of $4m/s$. The diffusivity and particle lifetime of the gas is chosen to be $1.2m^2/s$ and $5s$, respectively.} 

\tb{The prior belief for the individual source position is assumed to be a uniform distribution over $\mathcal{D}$, that is, $\mathcal{U}(\mathcal{D})$. The release rate prior for each source is chosen as a Gamma distribution $\Gamma(2,5)$. The number of particles is set to be $N_p=25000$. Furthermore, the robots are constrained to a maximum linear and angular speed of $4~m/s$ and $2.25~rad/s$, respectively. The sensor threshold $z_{thr}$ is chosen as 0.5$mg/m^3$. The simulation results, presented in Fig. \ref{fig:Illustrative_Run}, consist of trajectory plots for each robot, sampled locations denoted using markers, belief of the source locations, and the estimated source term belief at different time instants. The parameters $\alpha$ and $\sigma_{th}$ are chosen to be -0.75 and 4 units, respectively. Furthermore, a stop-and-sample time of 5 seconds is also considered during each sampling instance. Moreover, the birth and death probabilities are assumed to be 8\% each along with a miss-detection probability of 5\%.}

\tb{It can be seen from Figs. \ref{fig:Illustrative_Run} that the two unknown source locations and release rates are estimated successfully with sufficient accuracy. All robots required 8 collective environmental sampling instance during the search to localise the sources, with an overall mission time of 73.4 seconds. Additionally, it is important to note that the coverage control based design drives the robots to upwind regions within their Voronoi partition that are likely to contain a source. In traditional informative path planning solutions, the design may require explicitly incorporating robot-source assignment at the cost of computational and design complexity, otherwise some robots may be trapped within local plumes. In contrast, under the proposed coverage control based strategy, the robots in the network are inherently assigned to potential source locations as the search progresses. Such an feature is a consequence of the density function design choice, as robots are driven to high density areas based on JMPD. Furthermore, it can be seen from Fig. \ref{fig:ObjectiveFunction} that $\mathcal{H}_\mathcal{V}$, \eqref{eq:obj2}, is minimised as the search progresses. Upon updating the source term belief, the objective function cost may increase as seen from Fig. \ref{fig:ObjectiveFunction}. Recall that the proposed path planning policy only guarantees reducing in $\mathcal{H}_\mathcal{V}$ between sampling instance. However, as more samples are incorporated to improve the source term belief, the standard deviation of source term localisation is likely to reduce, consequently, causing the robots to converge near the estimated source locations (high density regions) towards the end of the search. Thus ensuring $\mathcal{H}_\mathcal{V}$ is reduced over the span of a mission. }

\begin{figure*}[!h]
    \centering
    \captionsetup[subfigure]{justification=centering}
    \begin{subfigure}{0.28\linewidth}\centering
        \includegraphics[width=\linewidth]{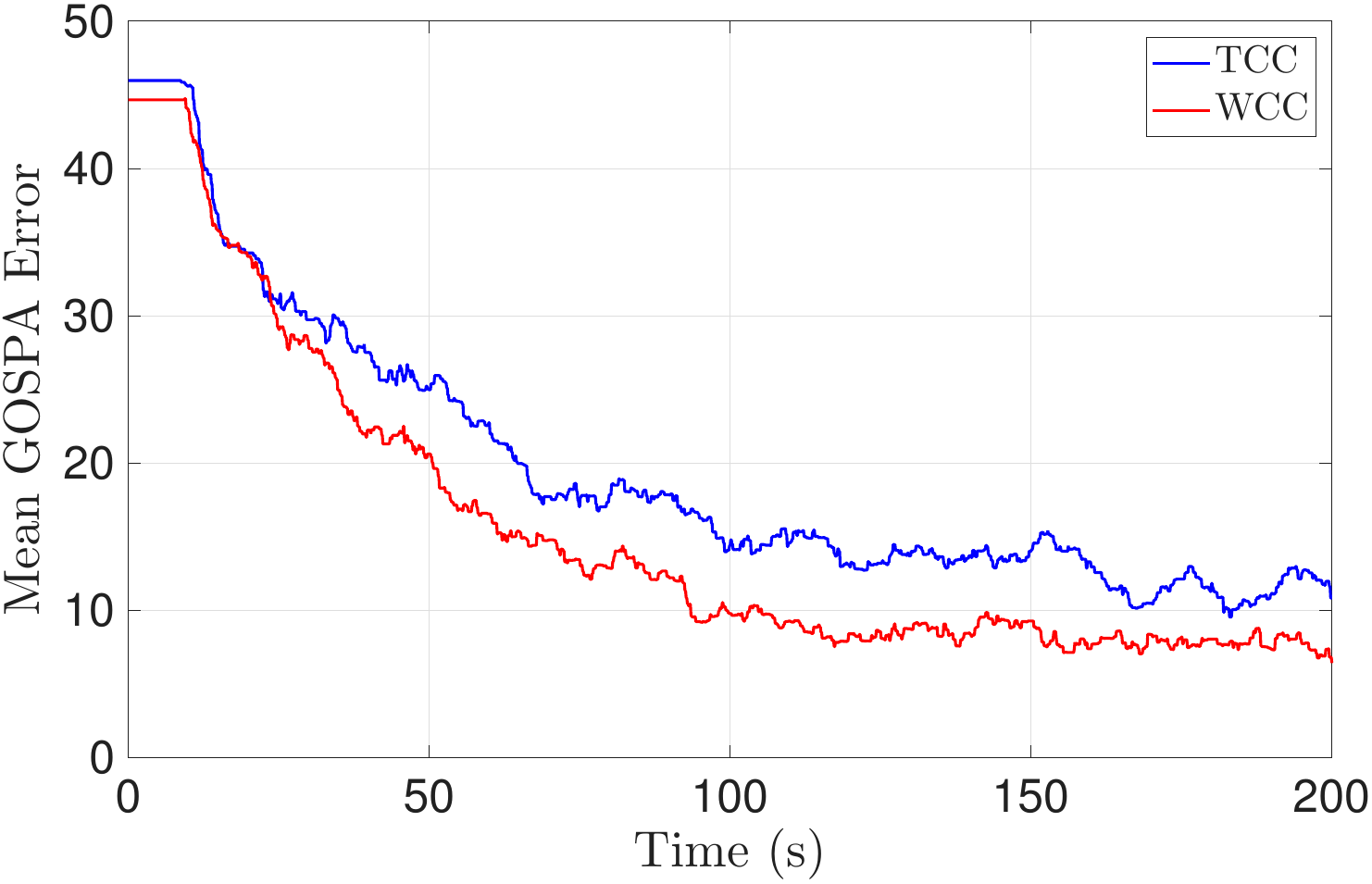}
        \caption{$n=$ 6 and $M_{\max}=4$}
        \label{fig:gospaNoA6Ns4}
    \end{subfigure}
    \begin{subfigure}{0.28\linewidth}\centering
        \includegraphics[width=\linewidth]{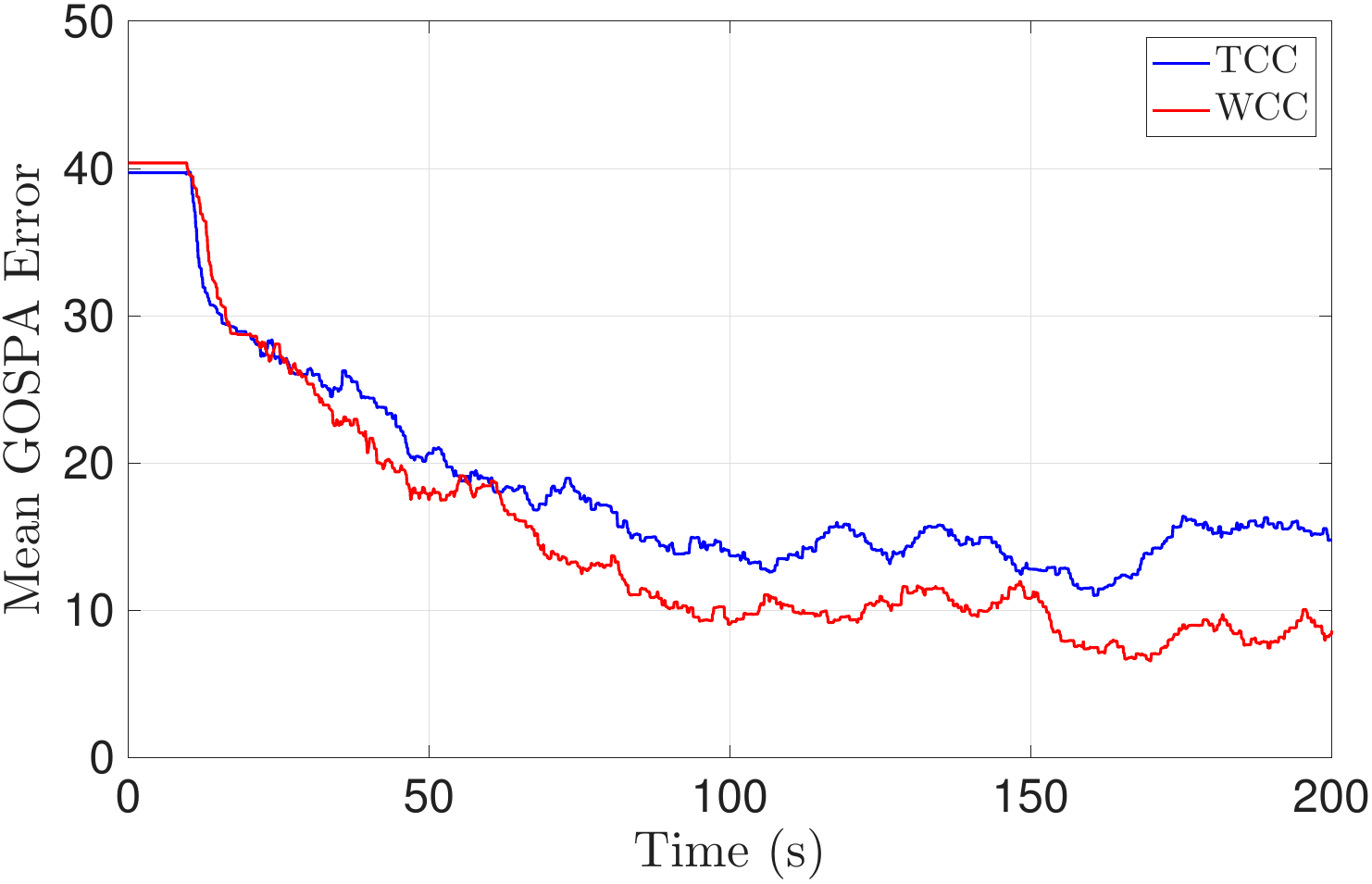}
        \caption{$n=$ 6 and $M_{\max}=3$}
        \label{fig:gospaNoA6Ns3}
    \end{subfigure}
    \begin{subfigure}{0.28\linewidth}\centering
        \includegraphics[width=\linewidth]{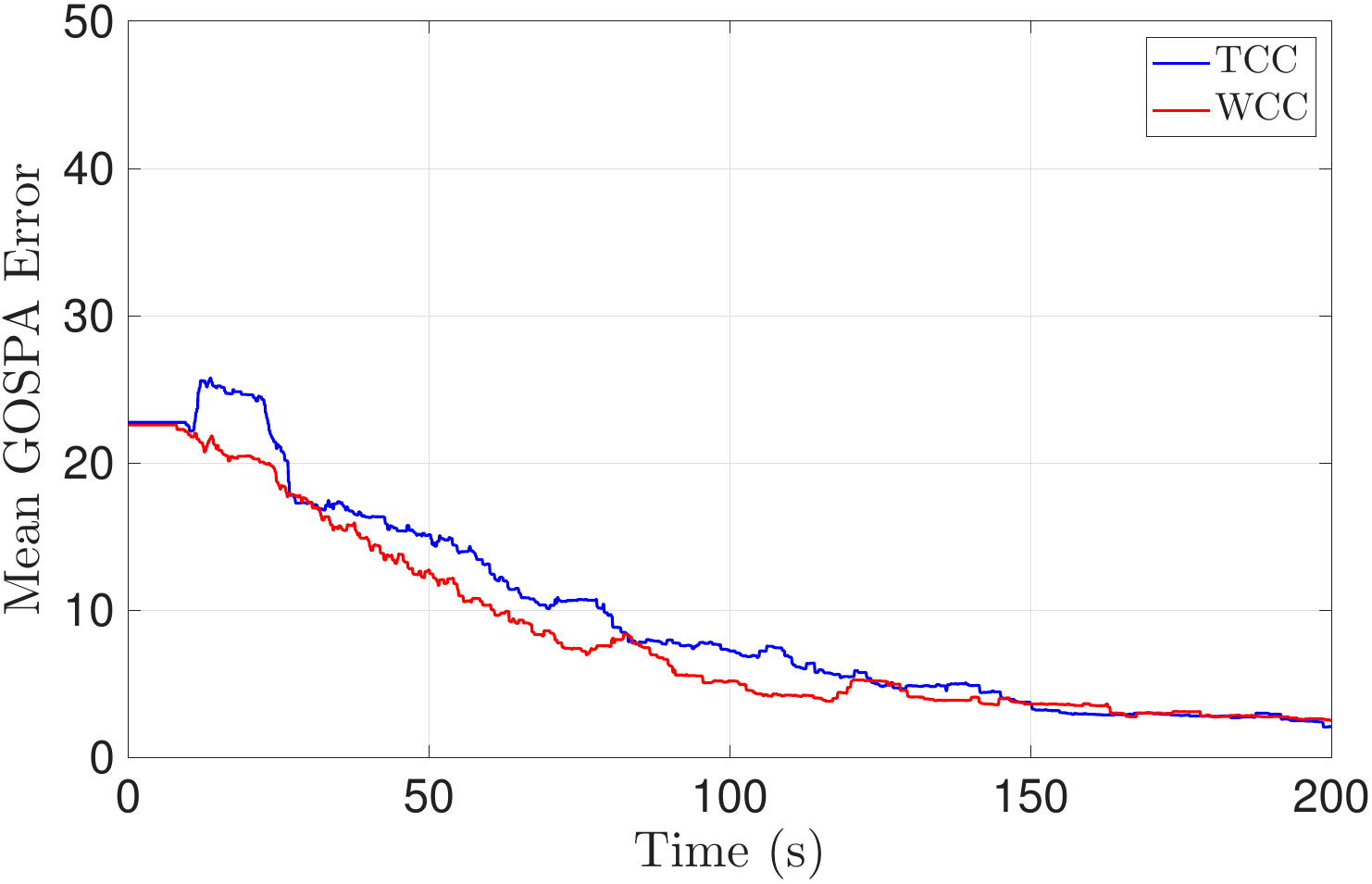}
        \caption{$n=$ 6 and $M_{\max}=2$}
        \label{fig:gospaNoA6Ns2}
    \end{subfigure}\\
    \begin{subfigure}{0.28\linewidth}\centering
        \includegraphics[width=\linewidth]{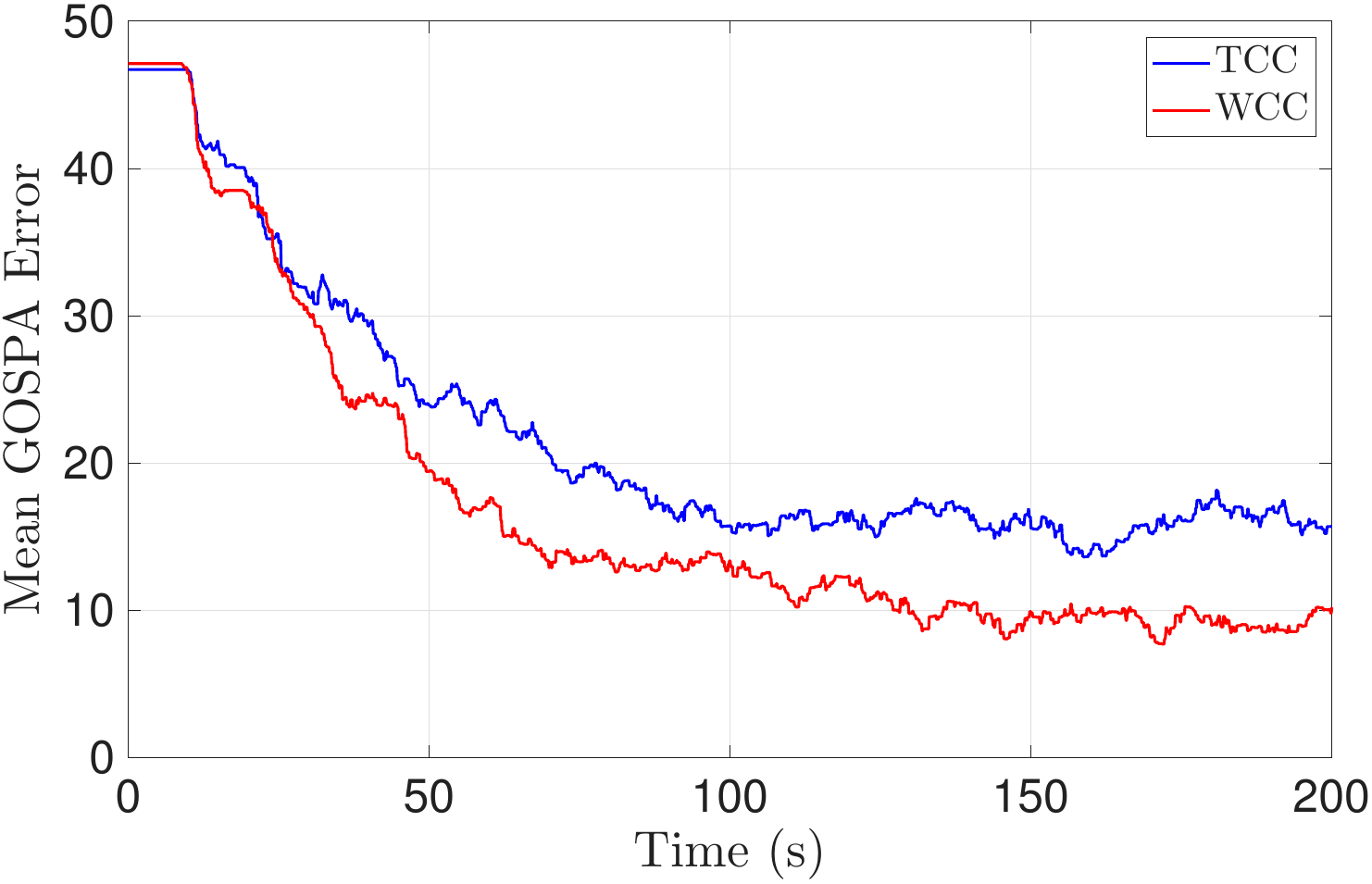}
        \caption{$n=$ 5 and $M_{\max}=4$}
        \label{fig:gospaNoA5Ns4}
    \end{subfigure}
    \begin{subfigure}{0.28\linewidth}\centering
        \includegraphics[width=\linewidth]{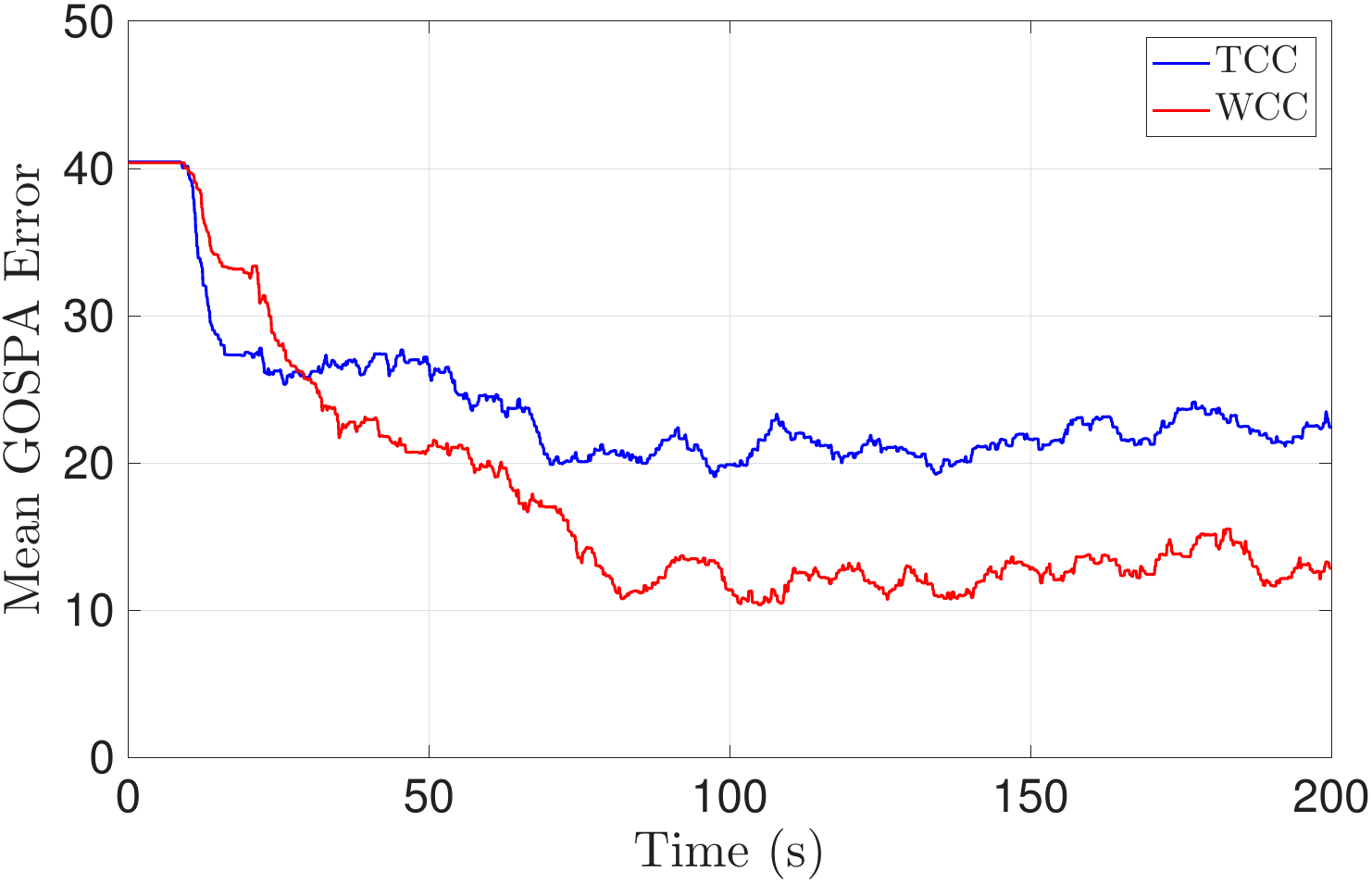}
        \caption{$n=$ 5 and $M_{\max}=3$}
        \label{fig:gospaNoA5Ns3}
    \end{subfigure}
    \begin{subfigure}{0.28\linewidth}\centering
        \includegraphics[width=\linewidth]{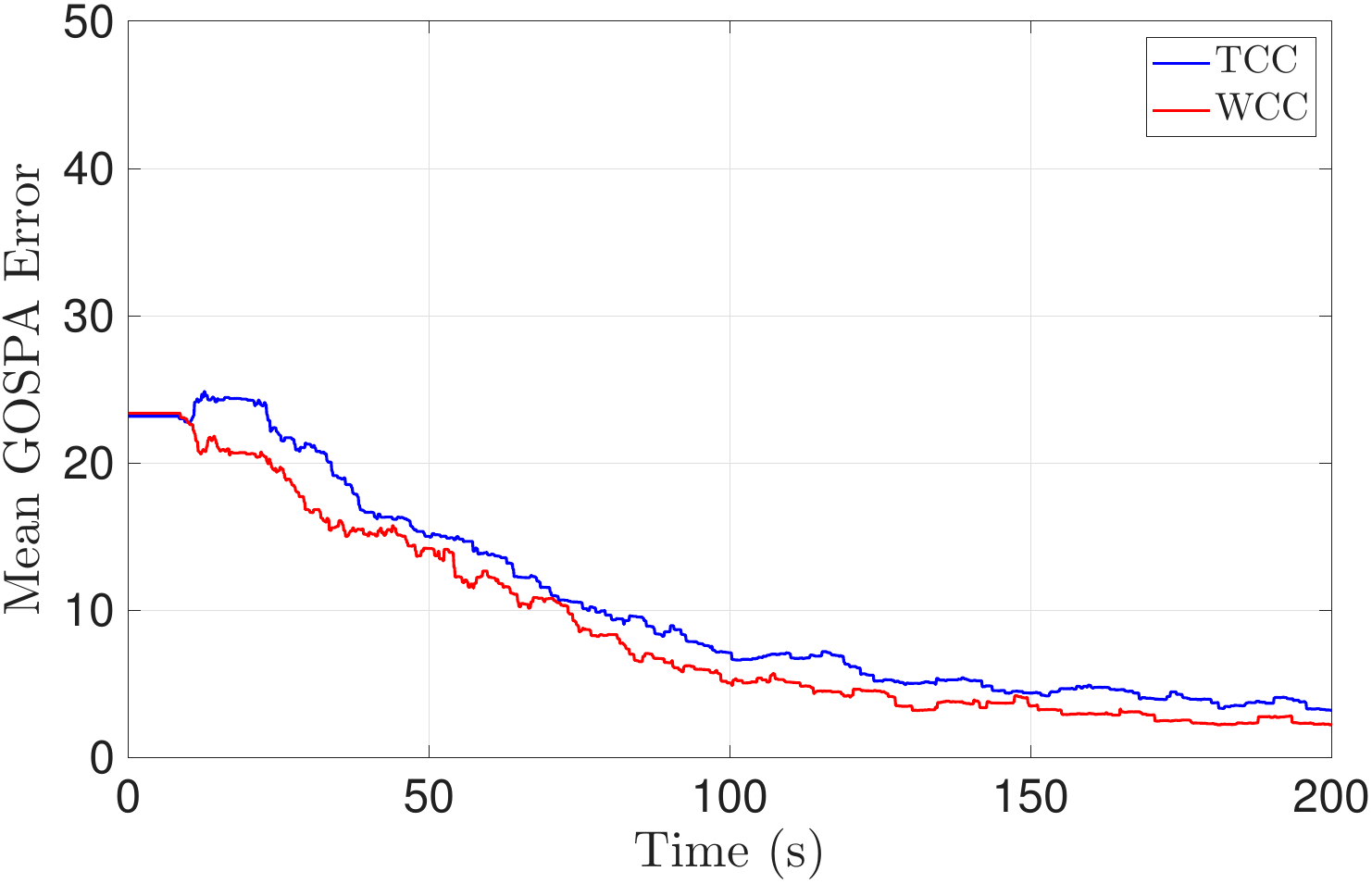}
        \caption{$n=$ 5 and $M_{\max}=2$}
        \label{fig:gospaNoA5Ns2}
    \end{subfigure}\\
    \begin{subfigure}{0.28\linewidth}\centering
        \includegraphics[width=\linewidth]{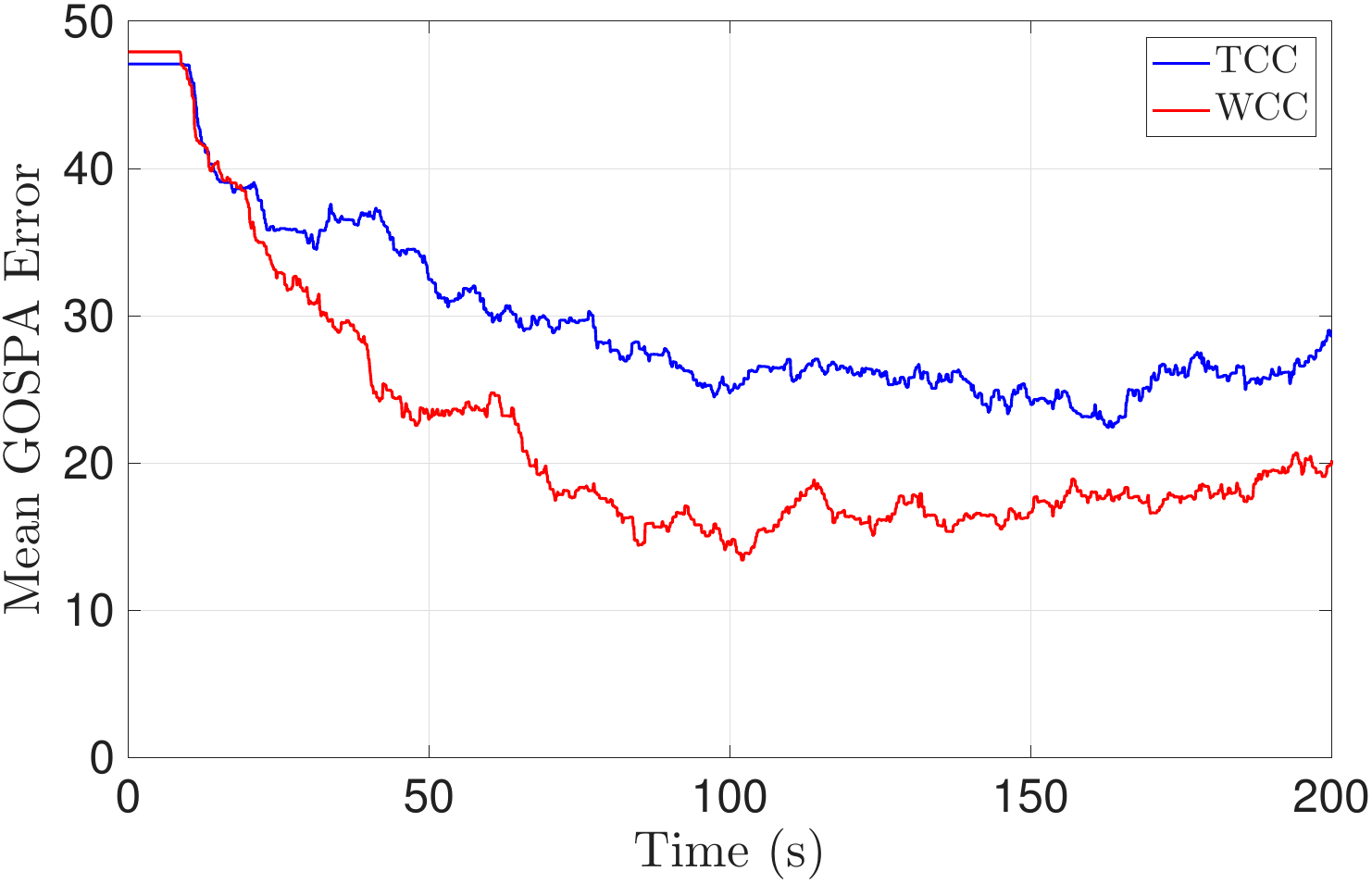}
        \caption{$n=$ 4 and $M_{\max}=4$}
        \label{fig:gospaNoA4Ns4}
    \end{subfigure}
    \begin{subfigure}{0.28\linewidth}\centering
        \includegraphics[width=\linewidth]{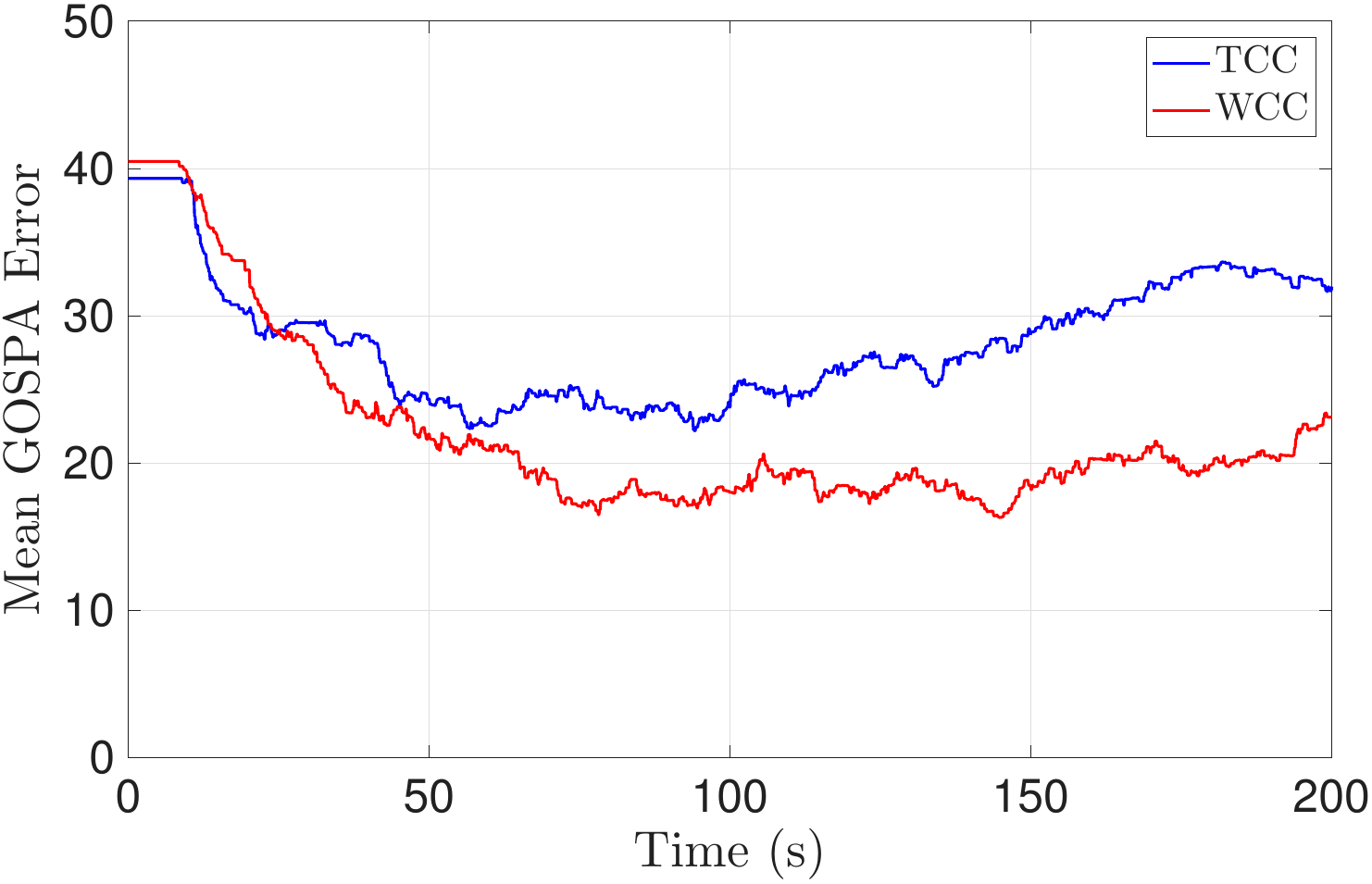}
        \caption{$n=$ 4 and $M_{\max}=3$}
        \label{fig:gospaNoA4Ns3}
    \end{subfigure}
    \begin{subfigure}{0.28\linewidth}\centering
        \includegraphics[width=\linewidth]{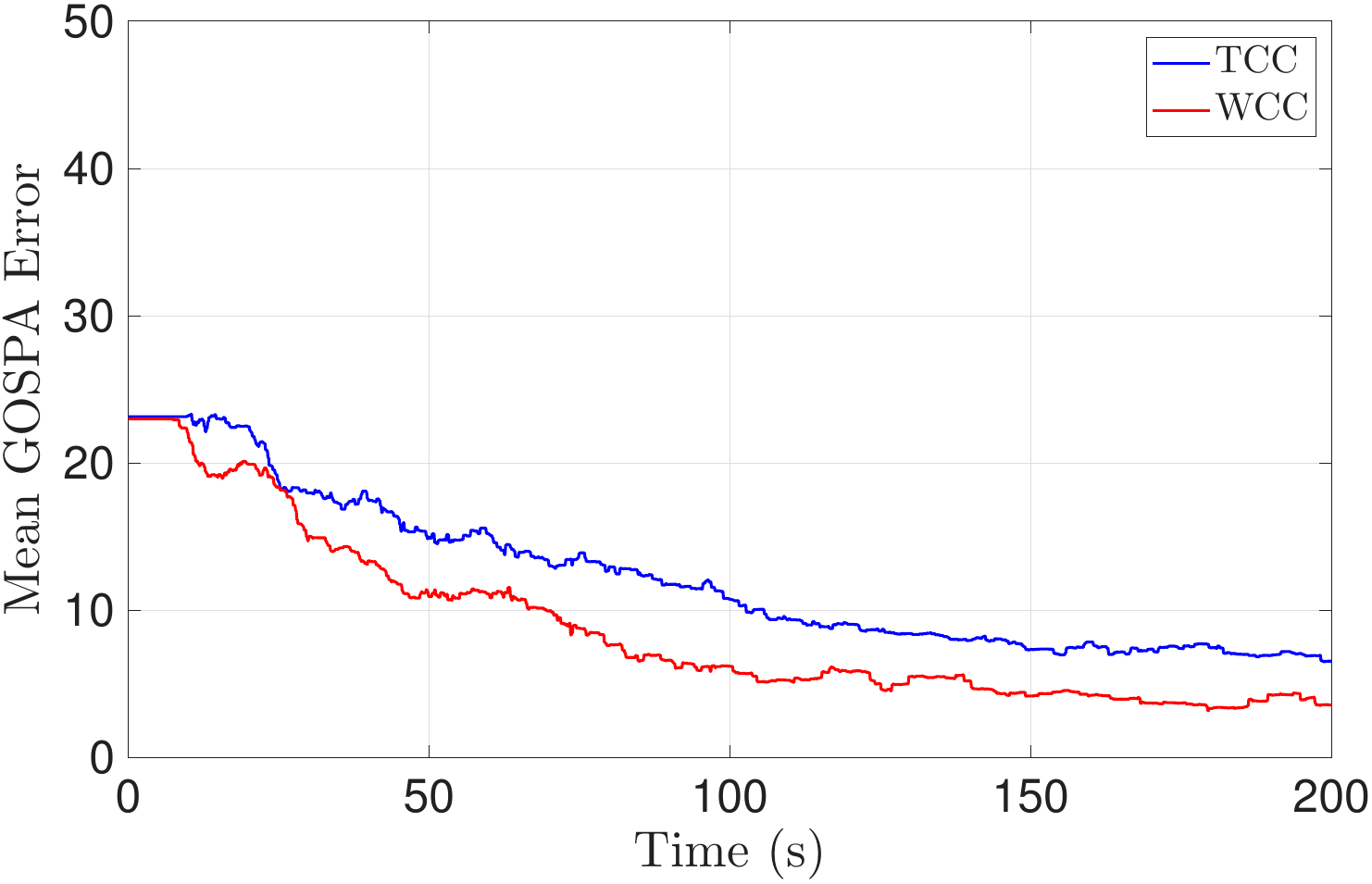}
        \caption{$n=$ 4 and $M_{\max}=2$}
        \label{fig:gospaNoA4Ns2}
    \end{subfigure}\\
    \begin{subfigure}{0.28\linewidth}\centering
        \includegraphics[width=\linewidth]{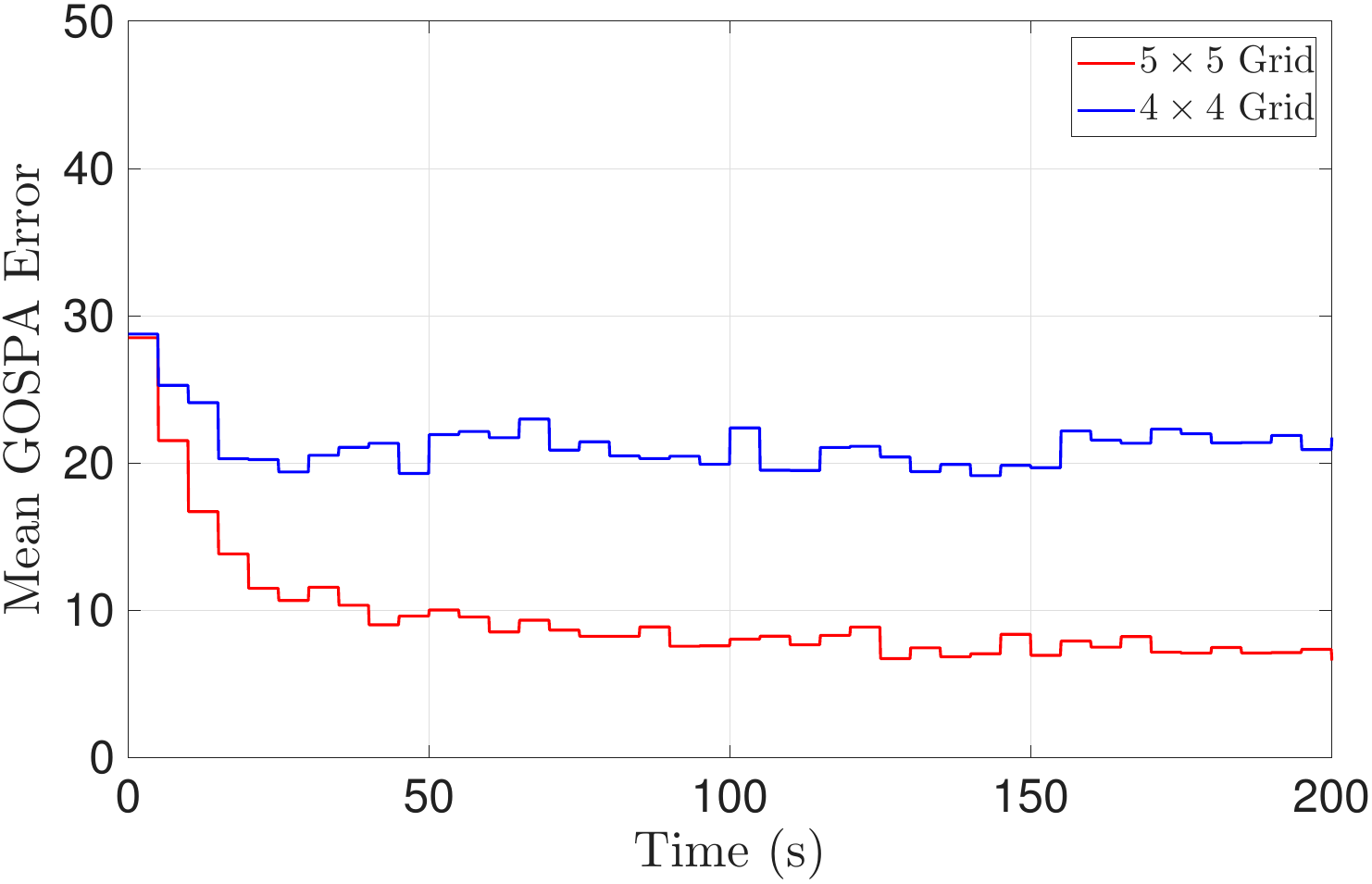}
        \caption{Static grid of sensors and $M_{\max}=4$}
        \label{fig:gospaGridNs4}
    \end{subfigure}
    \begin{subfigure}{0.28\linewidth}\centering
        \includegraphics[width=\linewidth]{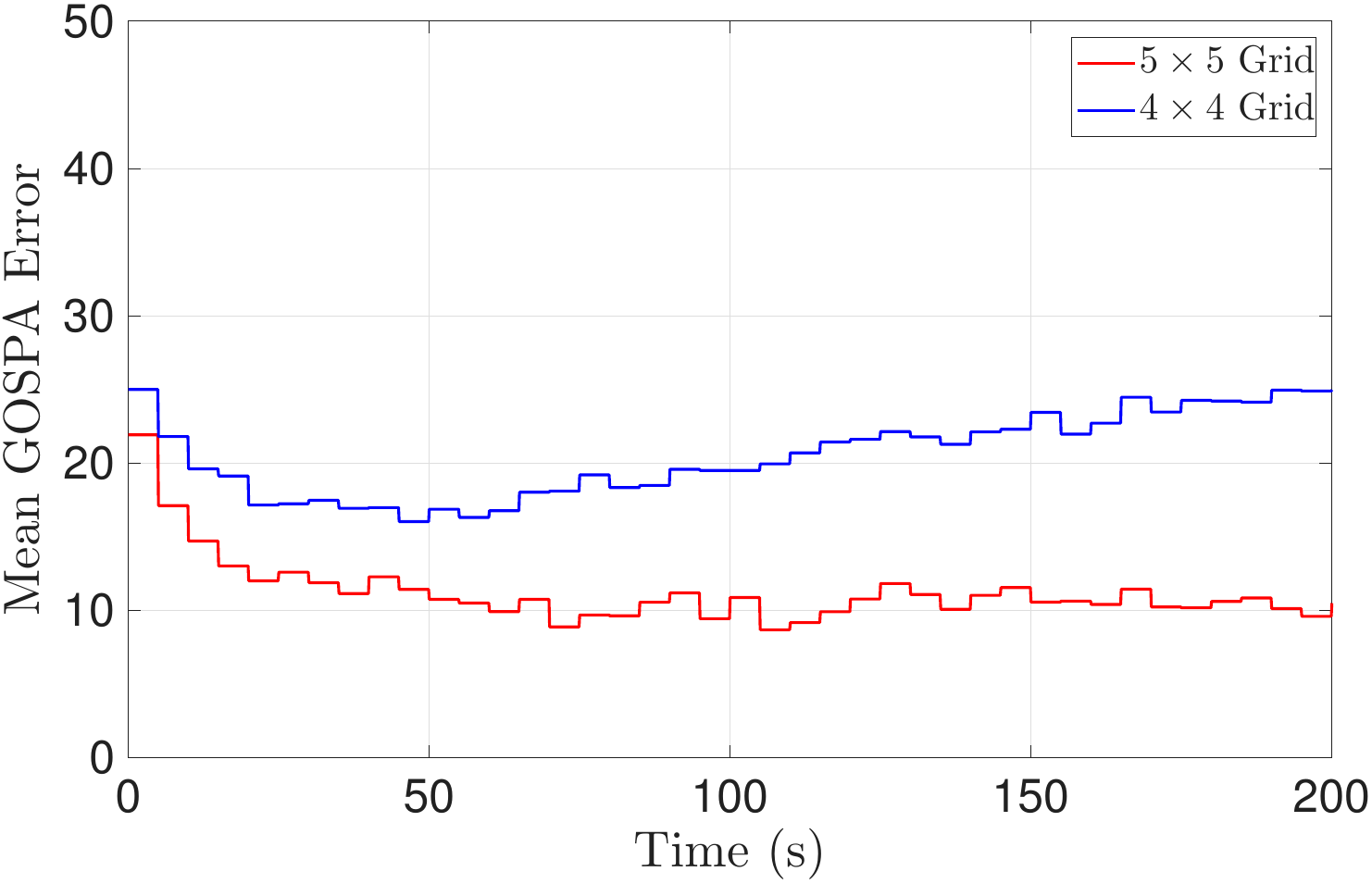}
        \caption{Static grid of sensors and $M_{\max}=3$}
        \label{fig:gospaGridNs3}
    \end{subfigure}
    \begin{subfigure}{0.28\linewidth}\centering
        \includegraphics[width=\linewidth]{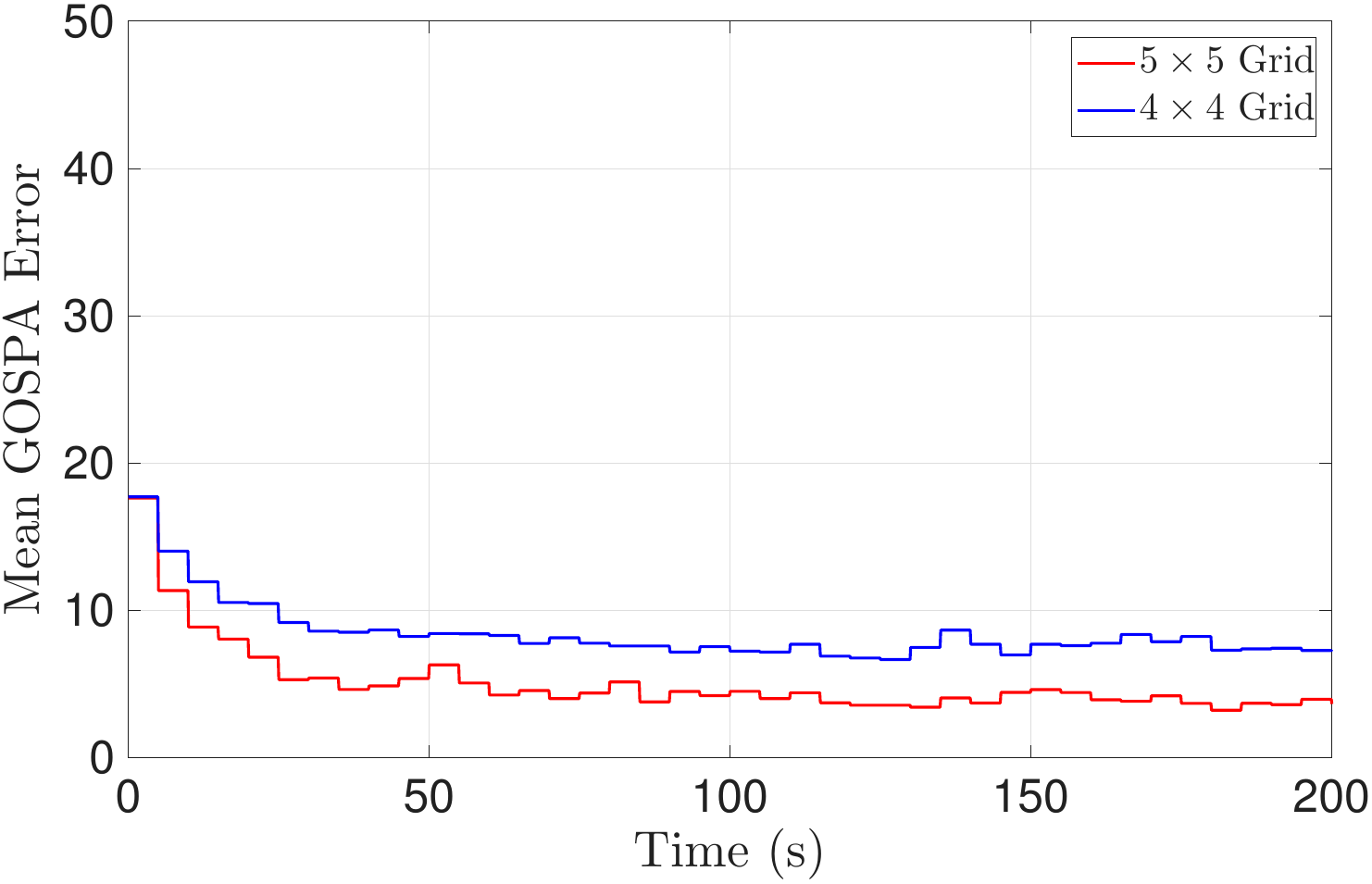}
        \caption{Static grid of sensors and $M_{\max}=2$}
        \label{fig:gospaGridNs2}
    \end{subfigure}
    \caption{Average GOSPA error across 80 Monte Carlo runs for different simulation settings.}
    \label{fig:MCrunsGOSPA}
\end{figure*}

\begin{figure*}[!h]
    \centering
    \captionsetup[subfigure]{justification=centering}
    \begin{subfigure}{0.28\linewidth}\centering
        \includegraphics[width=\linewidth]{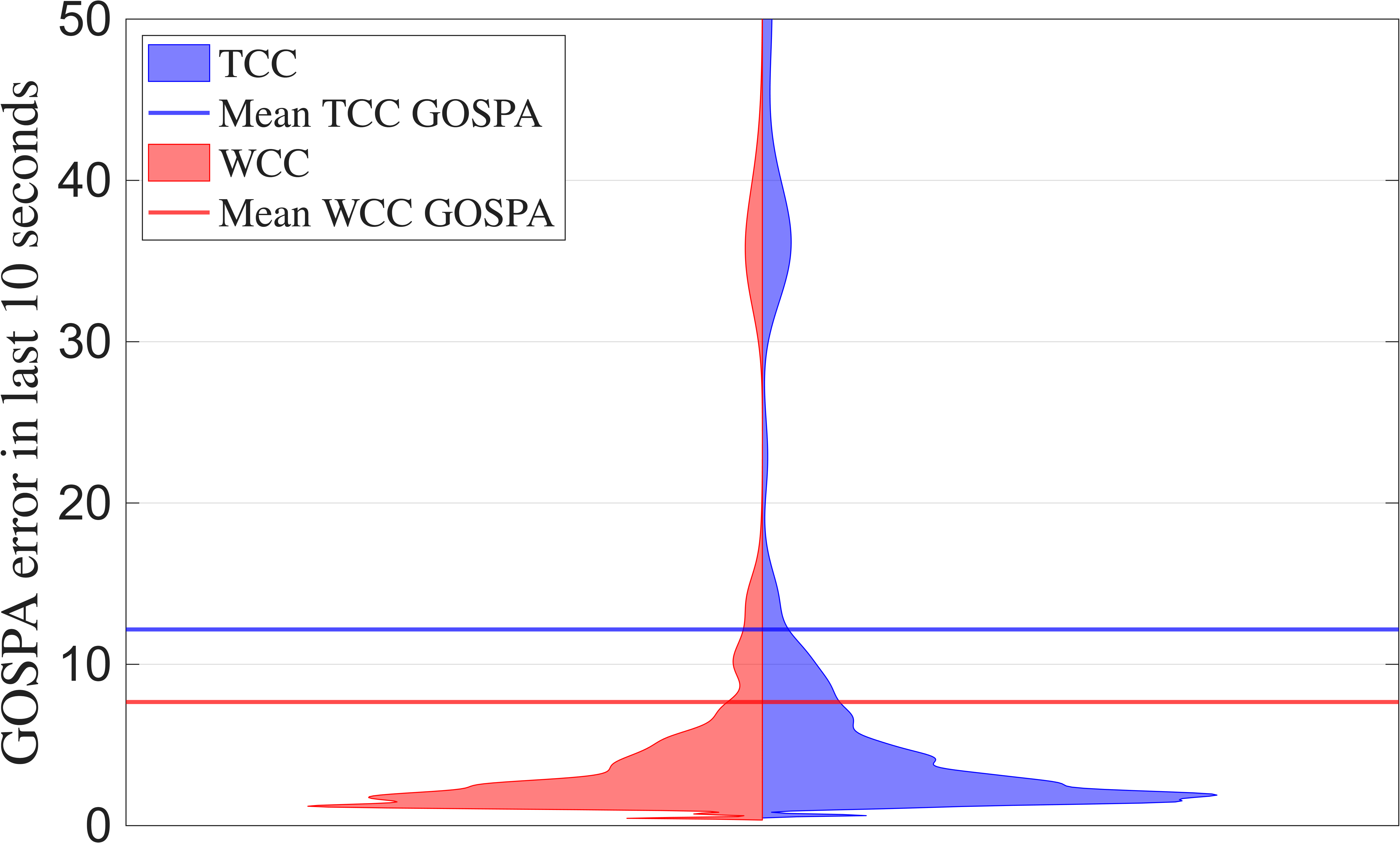}
        \caption{$n=$ 6 and $M_{\max}=4$}
        \label{fig:violin_gospaNoA6Ns4}
    \end{subfigure}
    \begin{subfigure}{0.28\linewidth}\centering
        \includegraphics[width=\linewidth]{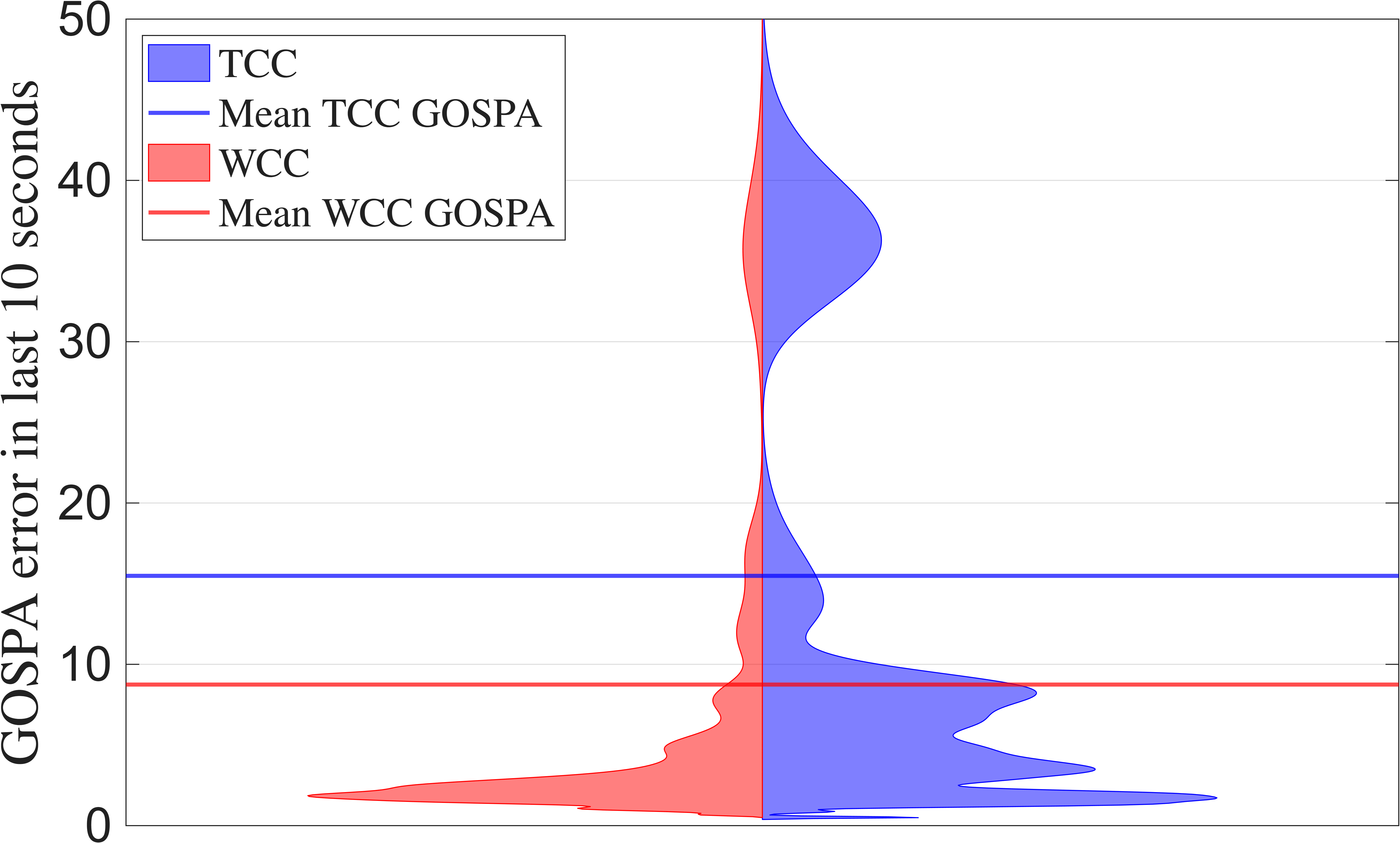}
        \caption{$n=$ 6 and $M_{\max}=3$}
        \label{fig:violin_gospaNoA6Ns3}
    \end{subfigure}
    \begin{subfigure}{0.28\linewidth}\centering
        \includegraphics[width=\linewidth]{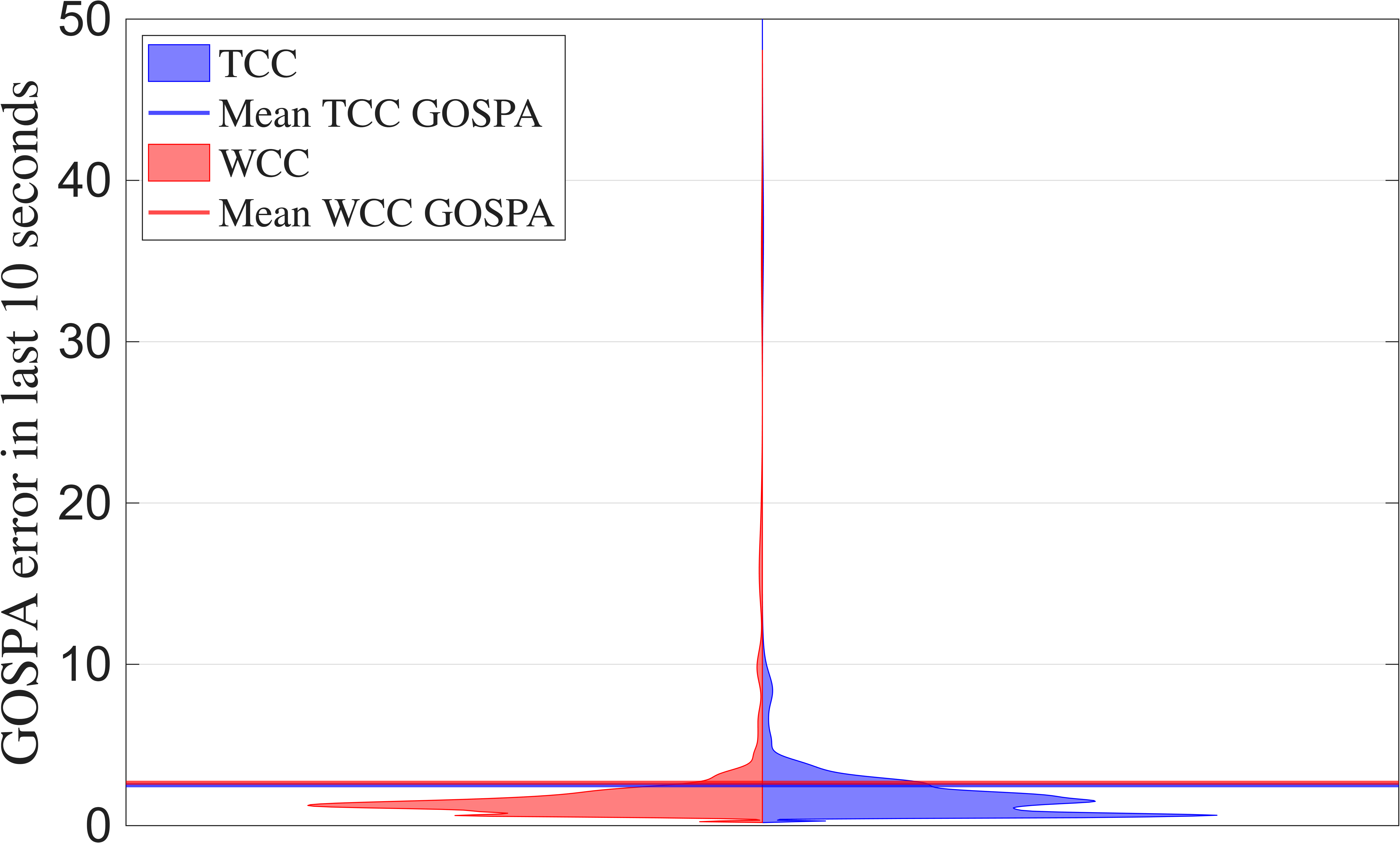}
        \caption{$n=$ 6 and $M_{\max}=2$}
        \label{fig:violin_gospaNoA6Ns2}
    \end{subfigure}\\
    \begin{subfigure}{0.28\linewidth}\centering
        \includegraphics[width=1\linewidth]{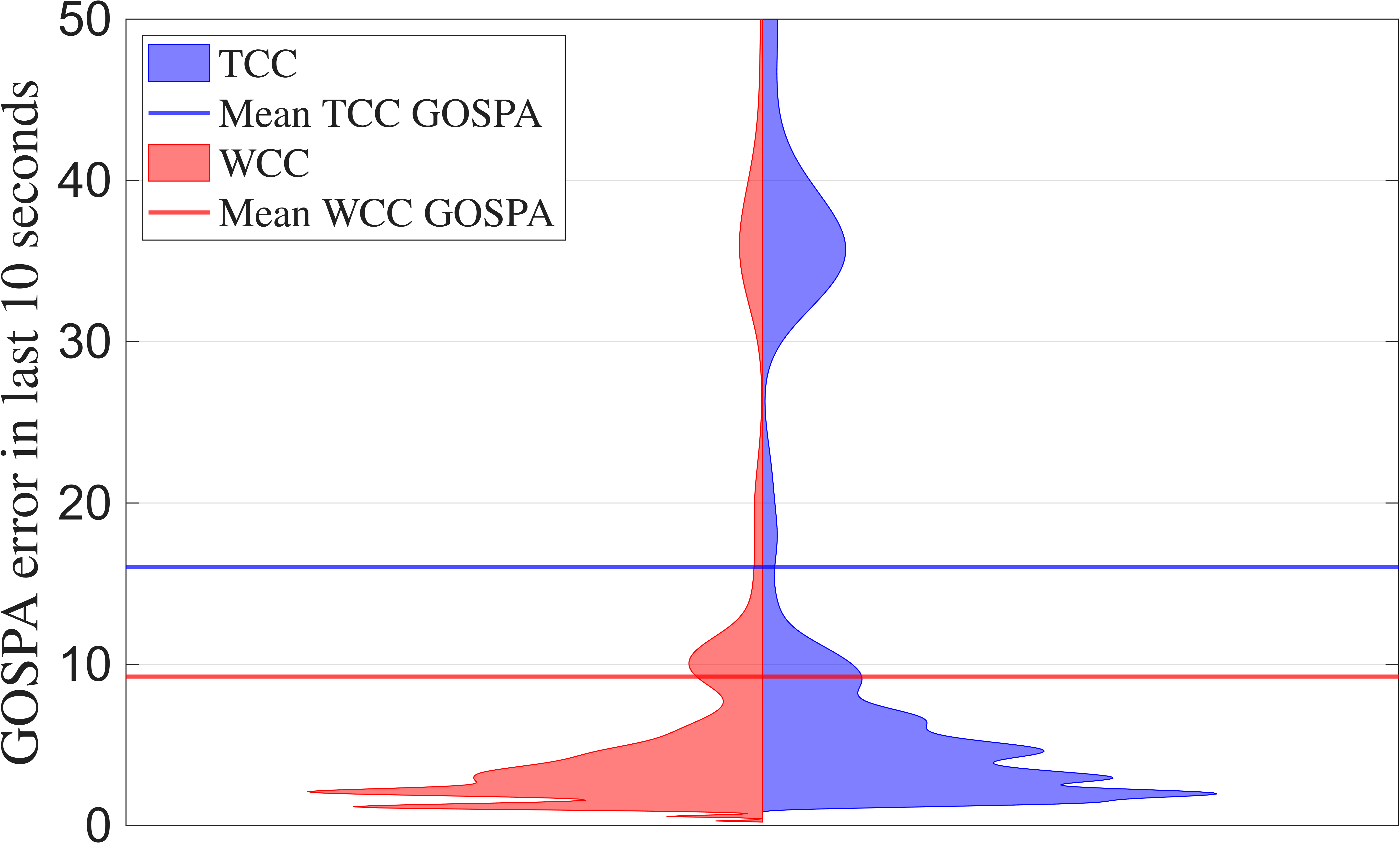}
        \caption{$n=$ 5 and $M_{\max}=4$}
        \label{fig:violin_gospaNoA5Ns4}
    \end{subfigure}
    \begin{subfigure}{0.28\linewidth}\centering
        \includegraphics[width=1\linewidth]{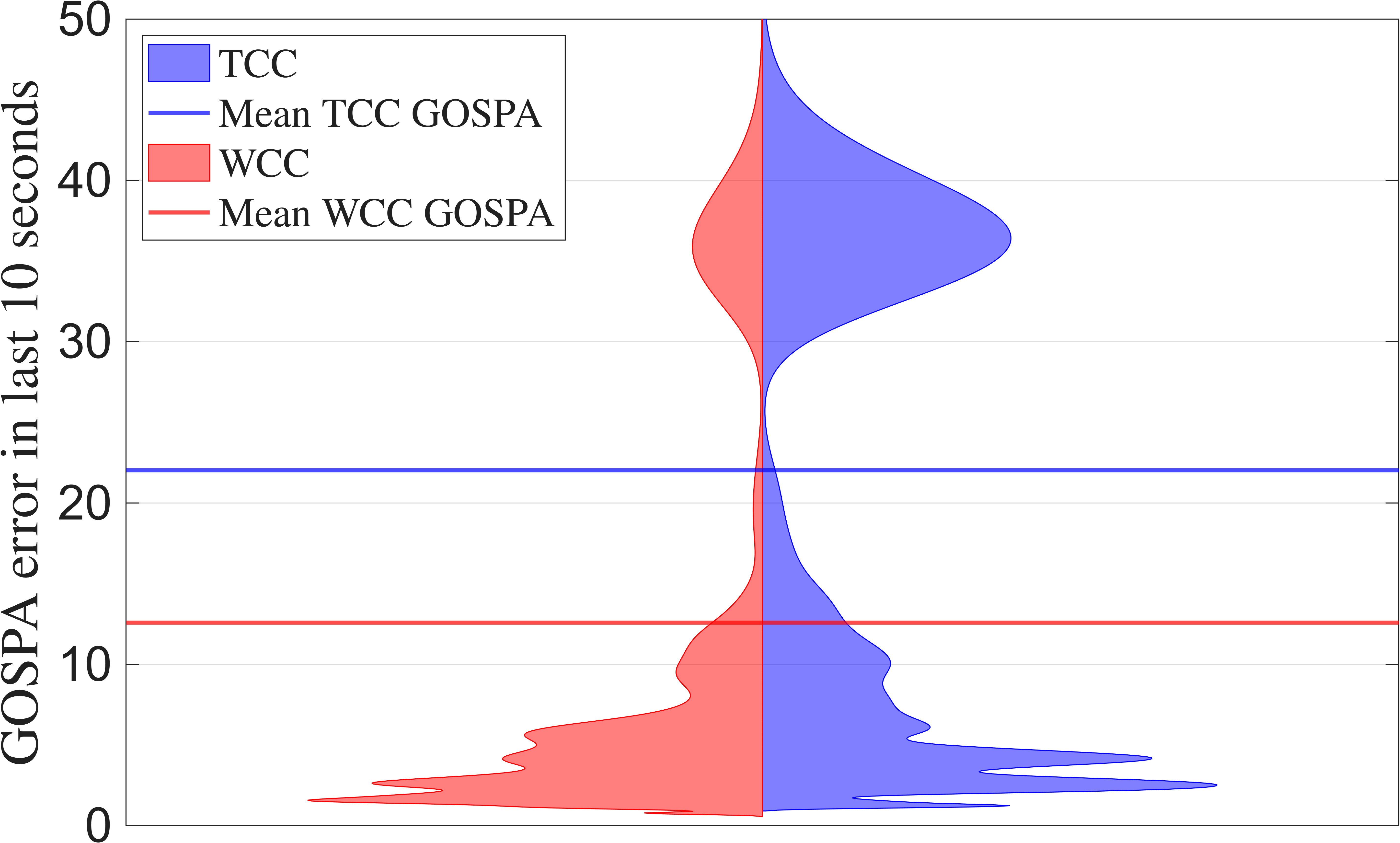}
        \caption{$n=$ 5 and $M_{\max}=3$}
        \label{fig:violin_gospaNoA5Ns3}
    \end{subfigure}
    \begin{subfigure}{0.28\linewidth}\centering
        \includegraphics[width=1\linewidth]{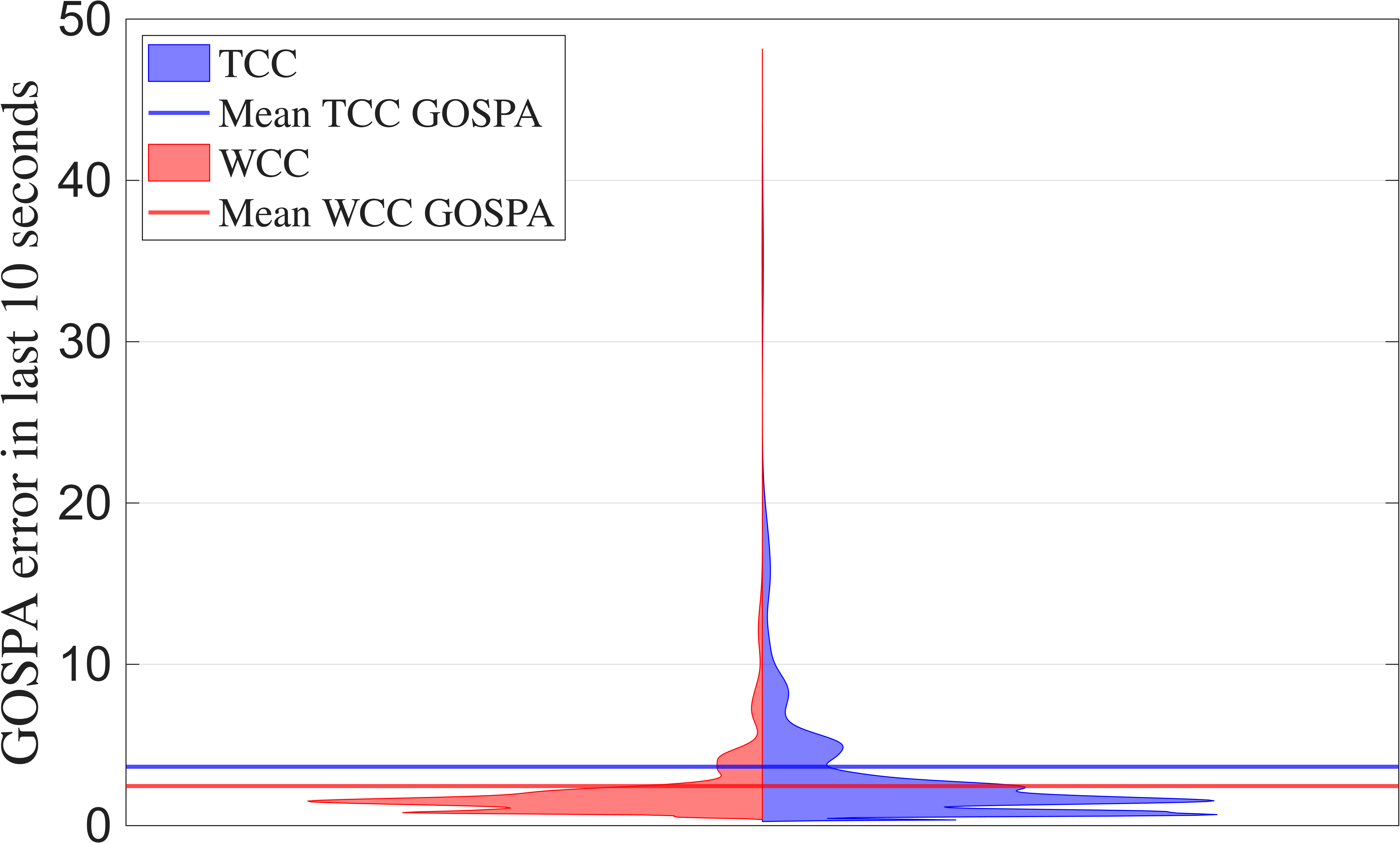}
        \caption{$n=$ 5 and $M_{\max}=2$}
        \label{fig:violin_gospaNoA5Ns2}
    \end{subfigure}\\
    \begin{subfigure}{0.28\linewidth}\centering
        \includegraphics[width=1\linewidth]{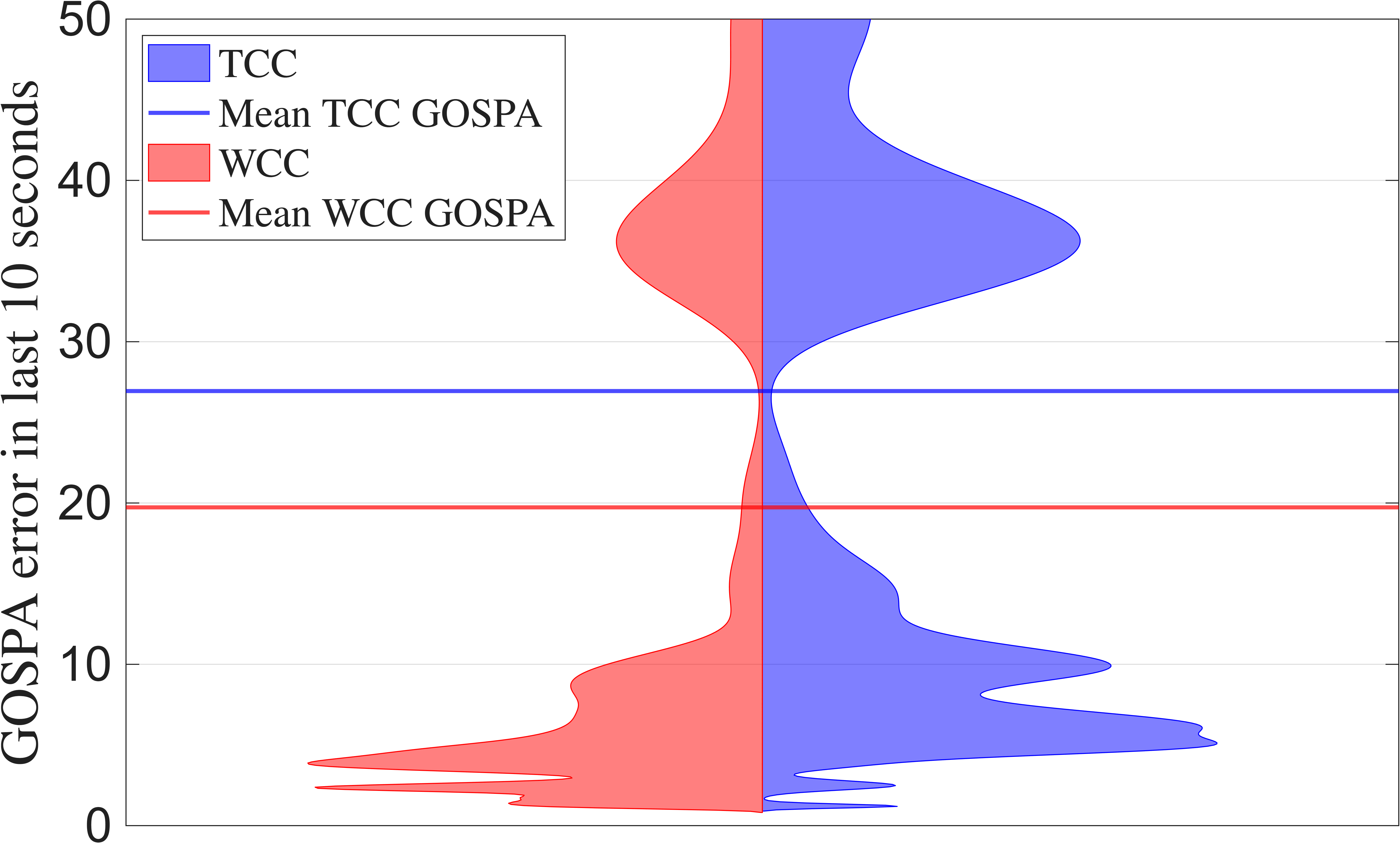}
        \caption{$n=$ 4 and $M_{\max}=4$}
        \label{fig:violin_gospaNoA4Ns4}
    \end{subfigure}
    \begin{subfigure}{0.28\linewidth}\centering
        \includegraphics[width=1\linewidth]{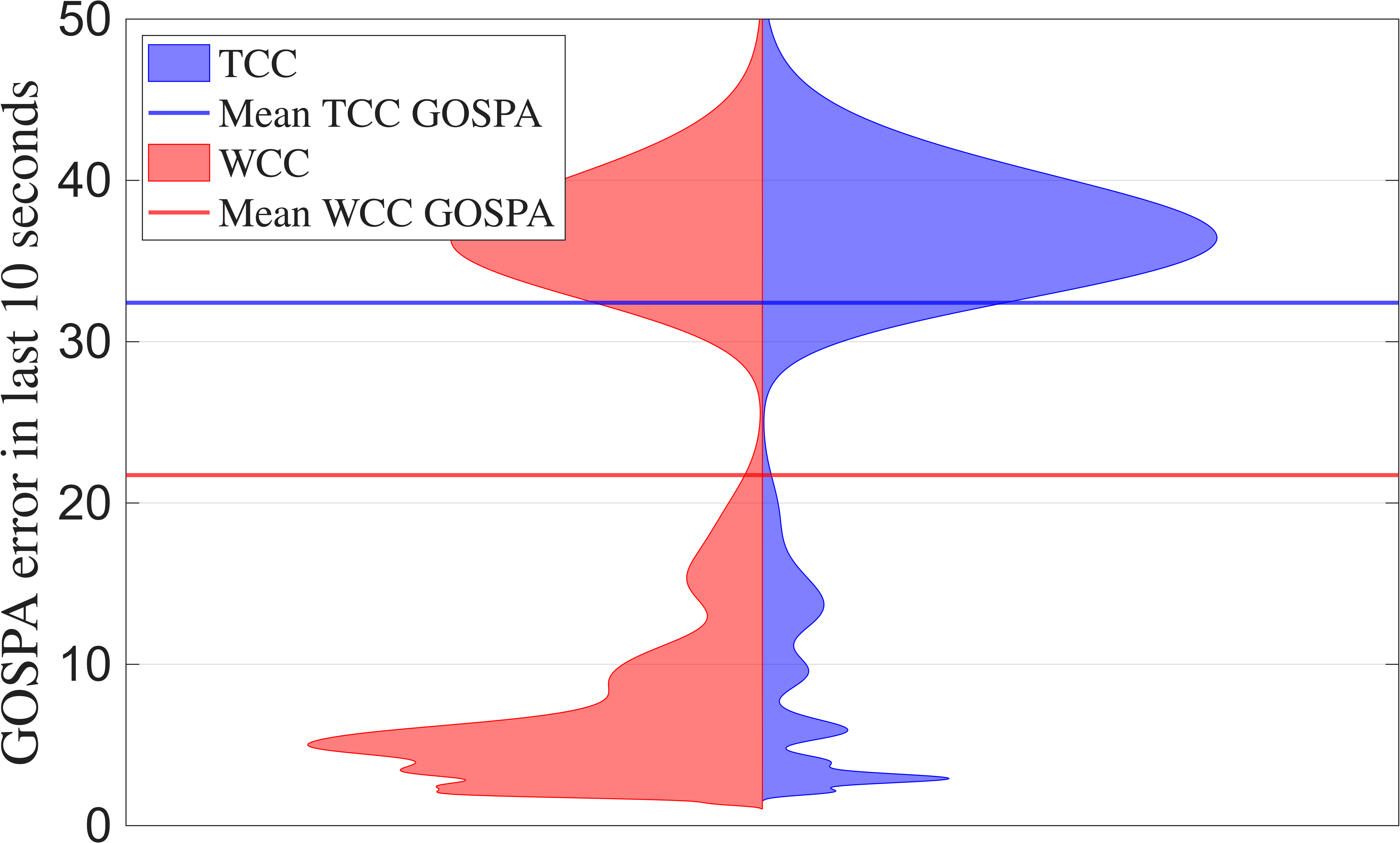}
        \caption{$n=$ 4 and $M_{\max}=3$}
        \label{fig:violin_gospaNoA4Ns3}
    \end{subfigure}
    \begin{subfigure}{0.28\linewidth}\centering
        \includegraphics[width=1\linewidth]{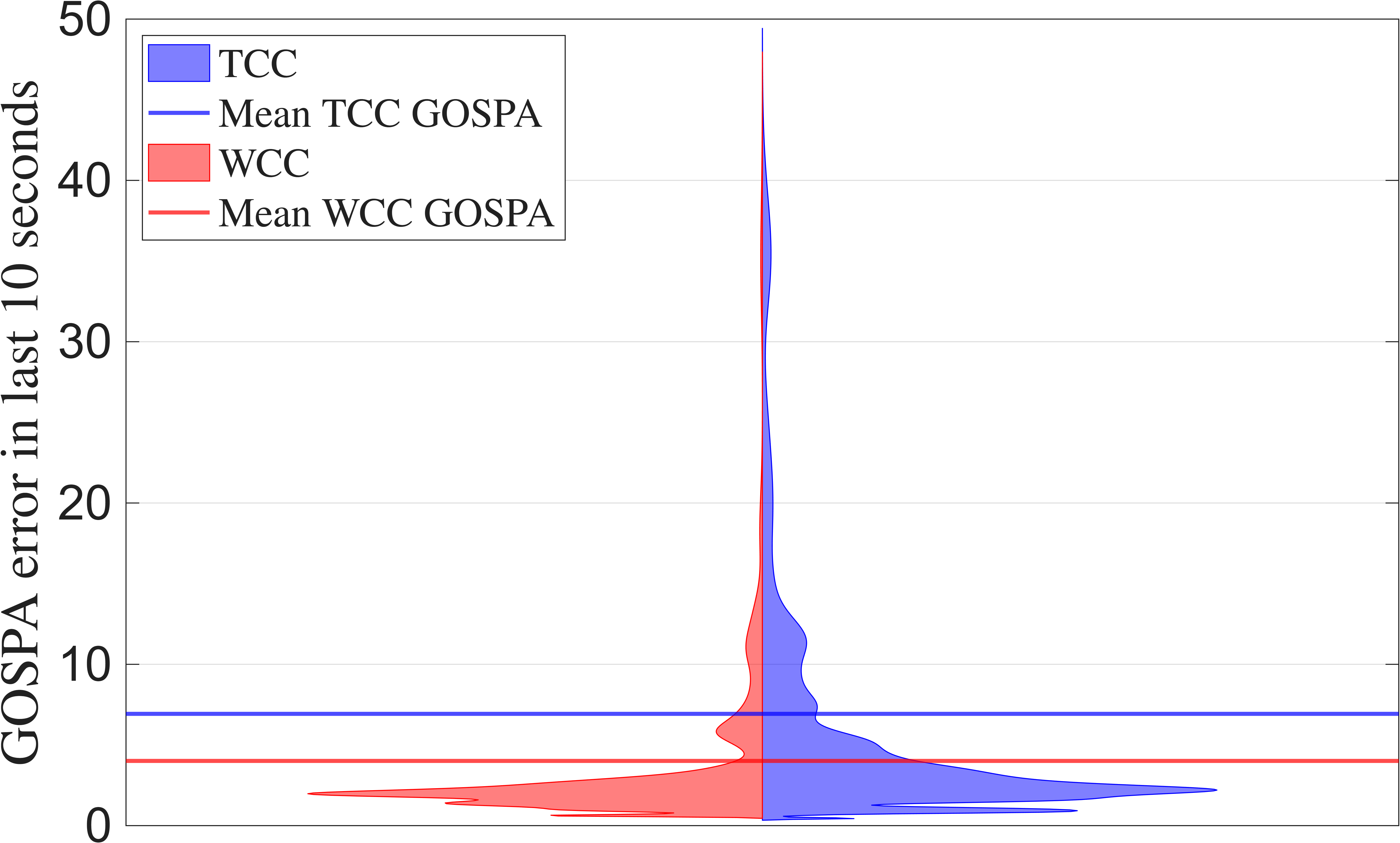}
        \caption{$n=$ 4 and $M_{\max}=2$}
        \label{fig:violin_gospaNoA4Ns2}
    \end{subfigure}\\
    \begin{subfigure}{0.28\linewidth}\centering
        \includegraphics[width=1\linewidth]{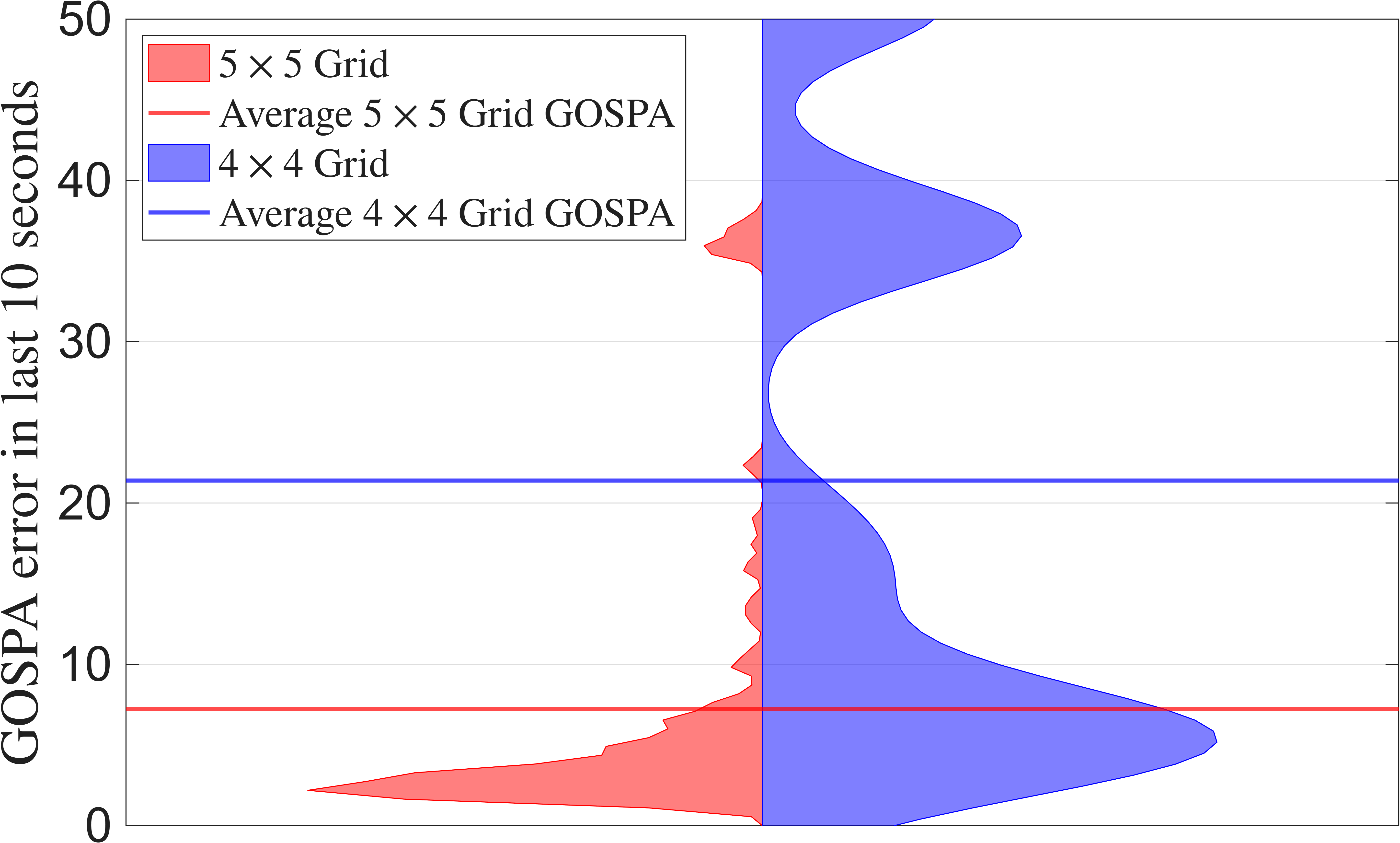}
        \caption{Static grid of sensors and $M_{\max}=4$}
        \label{fig:violin_gospaGridNs4}
    \end{subfigure}
    \begin{subfigure}{0.28\linewidth}\centering
        \includegraphics[width=1\linewidth]{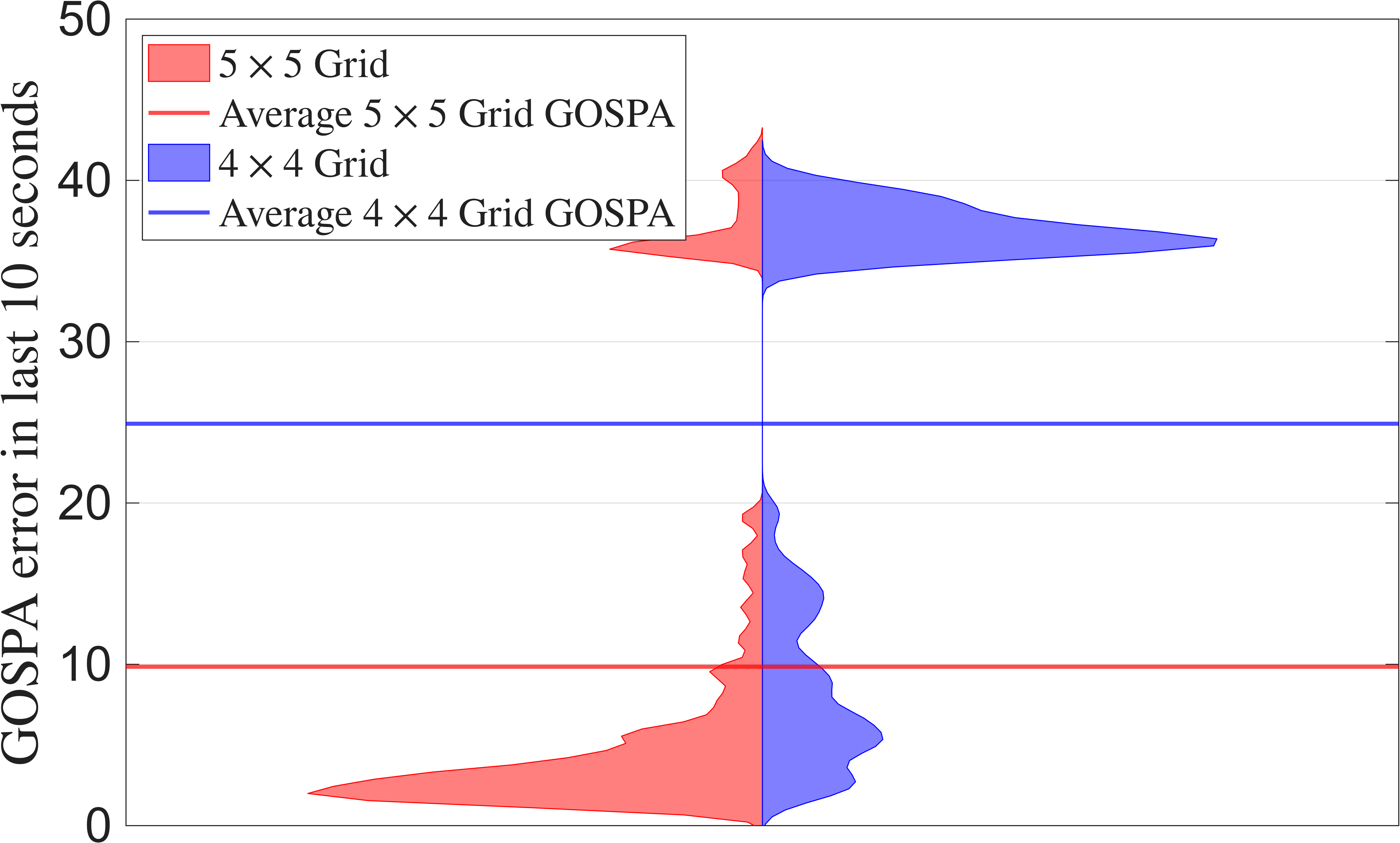}
        \caption{Static grid of sensors and $M_{\max}=3$}
        \label{fig:violin_gospaGridNs3}
    \end{subfigure}
    \begin{subfigure}{0.28\linewidth}\centering
        \includegraphics[width=1\linewidth]{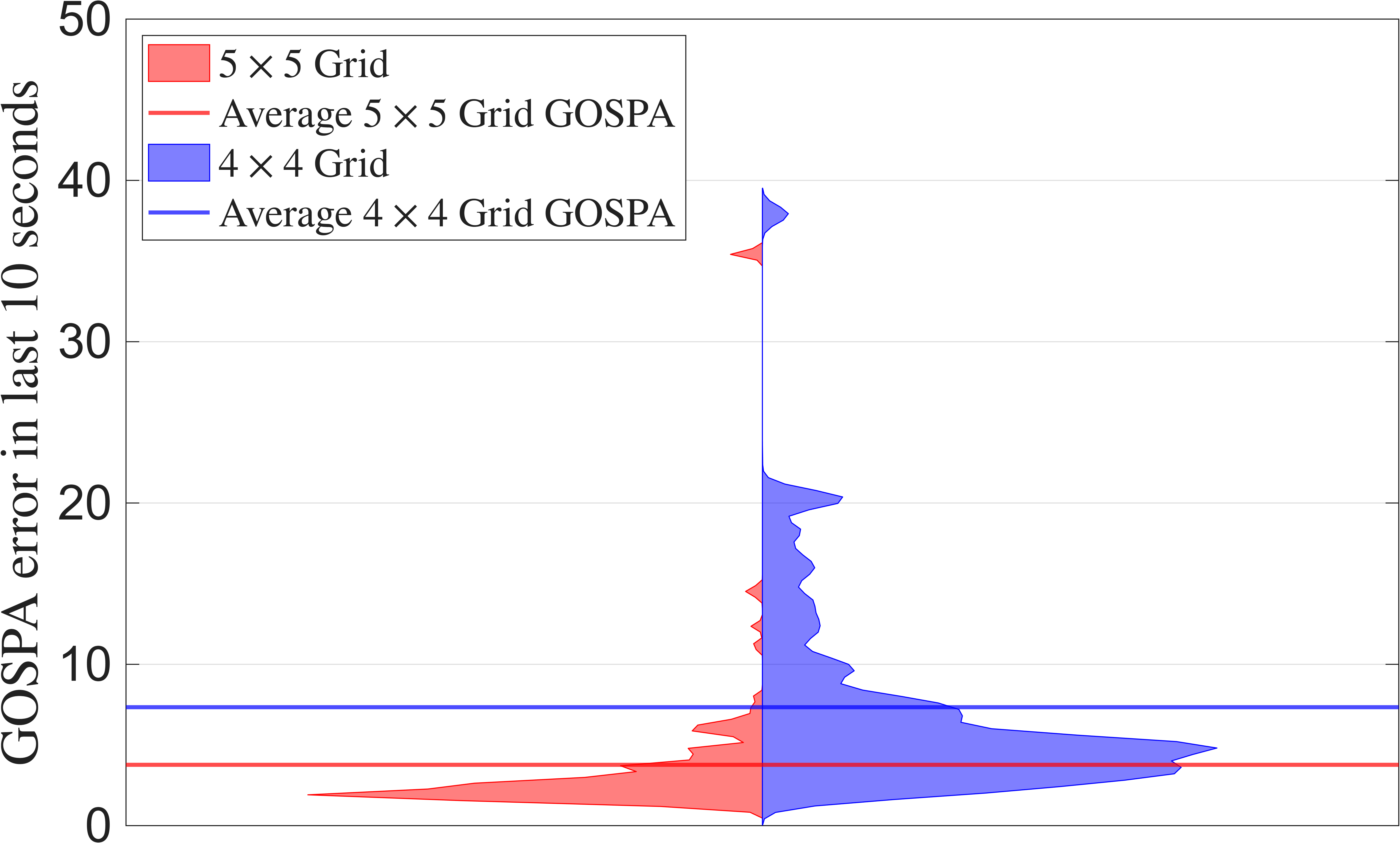}
        \caption{Static grid of sensors and $M_{\max}=2$}
        \label{fig:violin_gospaGridNs2}
    \end{subfigure}
    \caption{Voilin plot of the GOSPA error in the last 10 seconds across 80 Monte Carlo runs.}
    \label{fig:violin_MCrunsGOSPA}
\end{figure*}

\tb{\begin{table}[!h]
    \small\sf\centering
    \caption{Monte Carlo study source configurations}
    \label{tab:MCSourceConfig}
    \begin{tabular}{|c|c|c|c|c|}
        \toprule
          \textbf{Config. No.} & $\vect{s}_1$ (m) & $\vect{s}_2$ (m) & $Q_1$ ($g/s$)  & $Q_2$ ($g/s$) \\
         \midrule
         $\mathbb{GT}_1$ & (15,40) & (40,30) & 6 & 7\\
         $\mathbb{GT}_2$ & (40,15) &(30,45) & 5 & 9\\
         $\mathbb{GT}_3$ & (5,30) & (30,10) & 7 & 7\\
         $\mathbb{GT}_4$ & (10,25) & (40,25) & 9 & 4\\
         $\mathbb{GT}_5$ & (20,30) & (40,30) & 4 & 6\\
         \bottomrule
    \end{tabular}
\end{table}}

\subsection{Monte Carlo Study}
\tb{In this subsection, we present a Monte Carlo (MC) study comparing the MR-MSTE performance of the proposed wind-aware Coverage control (WCC) based coordination policy to traditional coverage control in different settings. A traditional coverage control (TCC) based MR-MSTE is a special case of the proposed strategy for $\alpha=0$. We also include a comparison with two static sensor network settings, - $4\times4$ and $5\times5$ grid.}

\tb{The MC study consist of five two-source configurations, as listed in Table \ref{tab:MCSourceConfig} and two wind directions $\psi = \{-\pi/2,\pi/2\}$. The two wind directions essentially refer to an upwind or downwind start scenarios. Each combination of source configuration and wind direction is simulated for 8 runs, resulting in an overall 80 runs for both WCC and TCC based coverage control. Moreover, $|\alpha|$ is chosen to be 0.75 and the remaining simulation parameters are the same as the illustrative case. Furthermore, each MC run is simulated for a maximum of 200 seconds. The results of the MC study, presented in Figs. \ref{fig:MCrunsGOSPA} and \ref{fig:violin_MCrunsGOSPA}, consist of 
\begin{enumerate}
    \item[(a)] 
    Estimation error profile computed using an GOSPA distance metric between the estimated source terms and the ground truth computed as per Def. \ref{def:GOSPA}. 
    \item[(b)] Aggregated GOSPA in the last 10 seconds of across all MC runs to quantify the terminal estimation error and consequently the success likelihood of the proposed strategy. 
\end{enumerate}}

\tb{The MC study is performed using three configurations of robot numbers and the maximum number of sources assumption each. Each column in  Figs. \ref{fig:MCrunsGOSPA} and \ref{fig:violin_MCrunsGOSPA} denote MC study corresponding to a different maximum source number hypothesis, that is, columns 1, 2 and 3 correspond to $M_{\max}$ of 4, 3 and 2, respectively. Furthermore, rows 1, 2 and 3 of the aforementioned figures denote MC study results for robot number $n=6,~5,$ and 4, respectively. Additionally, the final row in each figure consist of the MC study corresponding filter performance for static sensor networks. }

\tb{It can be seen from Fig. \ref{fig:MCrunsGOSPA} that the proposed WCC based MR-MSTE algorithm results in lower GOSPA-error in estimating the source term as compared to the TCC based MR-MSTE. Additionally, the proposed strategy reduces the overall GOSPA error at a quicker rate than the TCC based strategy. It is important to note that such a trend is observed for all configurations of varying number of robots and maximum source number hypothesis. Moreover, it can also be noted from the first three rows of Fig. \ref{fig:MCrunsGOSPA} that as the number of robots decrease the magnitude of overall GOSPA error with time also increases. Thus, once can conclude that as the number of robots increase, the terminal GOSPA error can be reduced as one would expect intuitively. In general, it can also be noted that when the maximum no. of source hypothesis $M_{\max}$ is chosen to be closer to the actual number of sources, the overall estimation performance improves. This can be interpreted as providing the algorithm with an well-informed prior, would result in lower chances of false detection of a source and consequently a lower GOSPA error. Moreover, it can also be noted that with a well-informed prior on the maximum number of sources in the environment, the performance advantages of the proposed strategy over the traditional coverage control reduces. Thus, one can conclude that the WCC based strategy outperformed the TCC based strategy throughout the search window. }

\tb{From fig. \ref{fig:violin_MCrunsGOSPA}, it can be observed that, under the WCC based strategy, the PDF of the GOSPA error in the last 10 seconds across all MC run was narrower and resulted in a lower average value as compared to the TCC based strategy. This metric can be used to quantify the likelihood of the terminal GOSPA error within the last 10 seconds of the mission, and consequently the likelihood of a run being successful. As the WCC based strategy resulted in a PDF weighted relative closer to the origin as compared to TCC based strategy, we can conclude that the chances of successfully estimating the number of leaks and accurately characterize their source term are larger under the proposed wind aware strategy.}

\tb{We also evaluate the performance of the proposed multi-source filter in Section \ref{sec:BayesianEstimation} using multiple static sensor networks with different spatial sparsity. The last rows of Figs. \ref{fig:MCrunsGOSPA} and \ref{fig:violin_MCrunsGOSPA} demonstrate the estimation performance with a $4\times4$ (16 sensors) and $5\times5$ (25 sensors) sensor grid. 
It is important to note from comparing their first and last rows of Figs. \ref{fig:MCrunsGOSPA} and \ref{fig:violin_MCrunsGOSPA} that the sensing performance achieved by an intelligently placed dynamic sensor network with 6 sensors performed similar to the performance with a static sensor network with 25 sensor measurements per sampling instant. This emphasises the advantage of a dynamic sensor network over a static sensor network.  }

\tb{While the numerical MC study demonstrates the effectiveness and advantage of the proposed algorithm, we further evaluate the robustness and applicability of the overall MRMSTE framework in a more realistic experimental setting as presented in the next subsection.}

\section{Experimental Study}
The experimental study was conducted in a confined space, using three Turtlebot3-burger robots with a VICON motion tracking system for robot positioning. Each robot was equipped with CozIR-A sensors in front of them, as shown in Fig. \ref{fig:mobilesetup}, to sense the concentration of CO$_2$ gas. The release mechanism consisted of a pressurised gas bottle (see fig. \ref{fig:Gas_Bottle}) connected to a multi-stage regulator (see fig. \ref{fig:Gas_Regulator}) that ensured near constant release pressure at its outlet, which was further split into two hose outlets to create two release sources in the search domain as seen in Fig. \ref{fig:expsetupImage}. The CozIR-A sensors provide an absolute concentration reading of CO$_2$ at the sampling location in parts-per-million (PPM), which is converted to milligram per cubic meter using as $\scalnot{z}{k}{:}(mg/m^3) = 1.8\times\scalnot{z}{k}{:}(\text{PPM})$. However, this sensor has a slow response time to any change of the gas concentration, which presents a challenges to the experiment study. 

\begin{figure}[ht]
    \centering
    \includegraphics[width=0.5\linewidth]{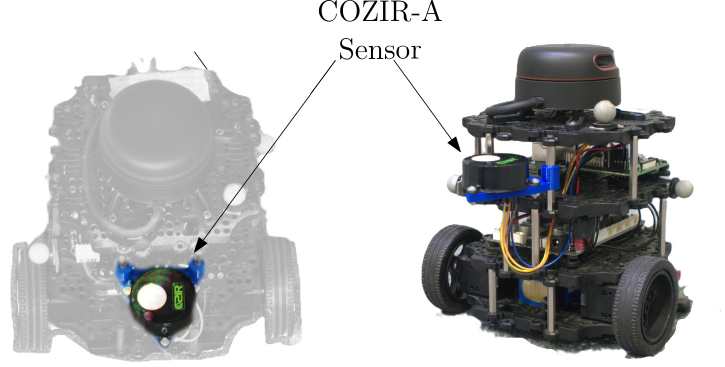}
    \caption{Turtlebot3-Burger setup with COZIR-AX-1 sensor.}
    \label{fig:mobilesetup}
\end{figure}

\begin{figure}[ht]
    \centering
    \captionsetup[subfigure]{justification=centering}
    \begin{subfigure}[t]{0.245\linewidth}\centering
        \includegraphics[width=0.9\linewidth]{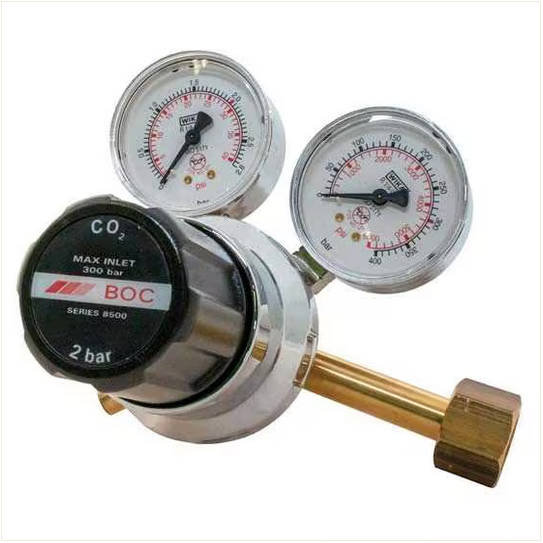}
        \caption{}
        \label{fig:Gas_Regulator}
    \end{subfigure}
    \begin{subfigure}[t]{0.245\linewidth}\centering
        \includegraphics[width=0.9\linewidth]{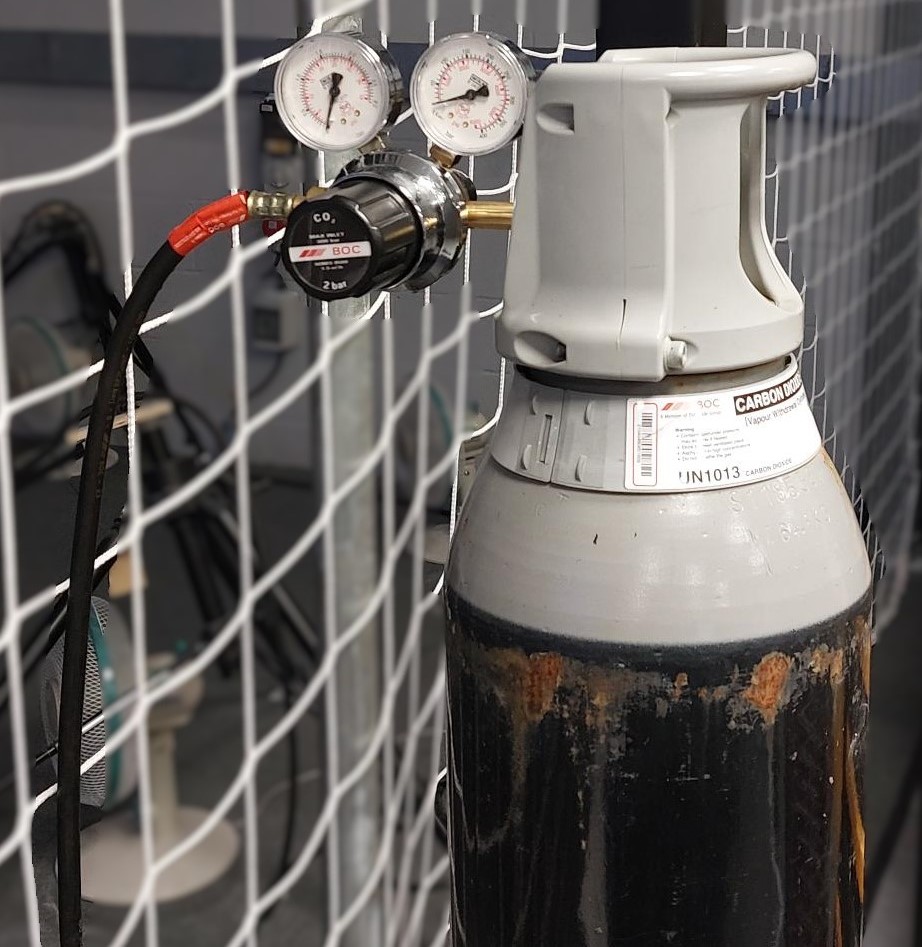}
        \caption{}
        \label{fig:Gas_Bottle}
    \end{subfigure}
    \begin{subfigure}[t]{0.45\linewidth}
        \includegraphics[width=\linewidth]{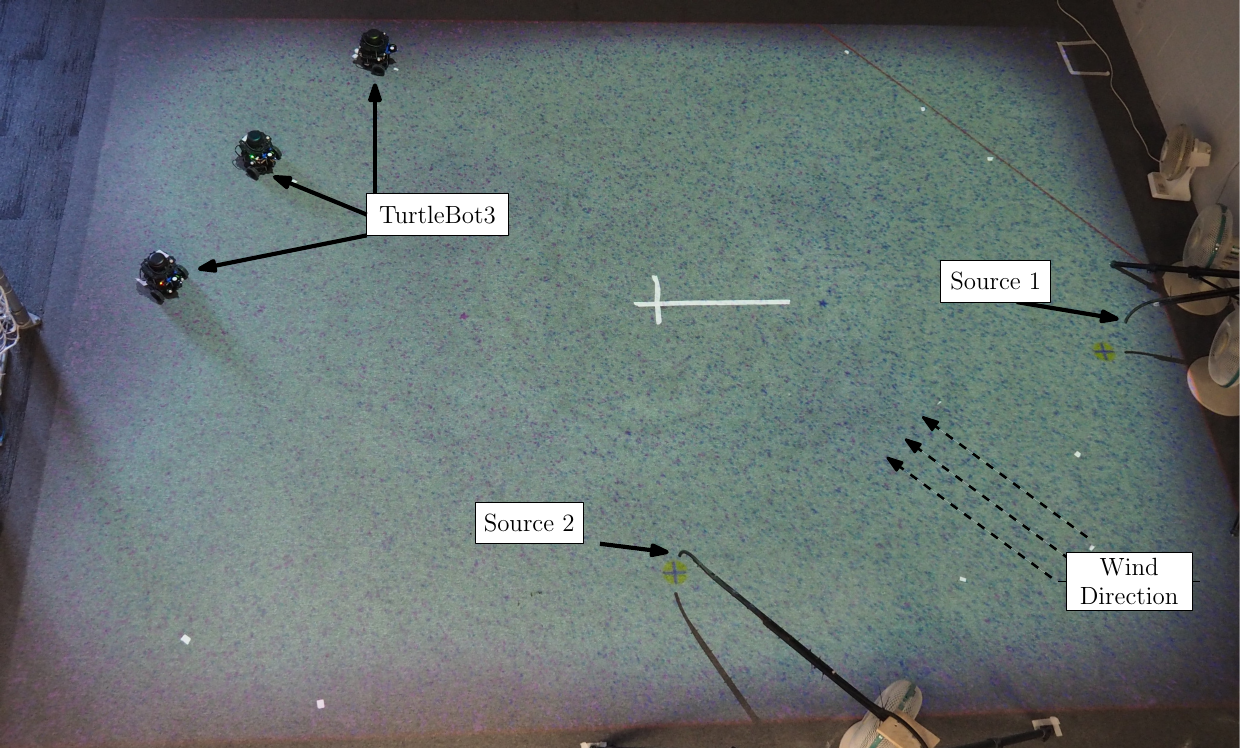}
        \caption{}
        \label{fig:expsetupImage}
    \end{subfigure}
    \caption{Multi-source gas release setup with \cite{BOCRegulator} in (\subref{fig:Gas_Regulator}) attached to a gas cylinder shown in (\subref{fig:Gas_Bottle}). The search area containing the TurtleBot3 robots and two sources is shown in (\subref{fig:expsetupImage}). }
    \label{fig:GasReleaseSetup}
\end{figure}

The experimental setup and system architecture is given Fig. \ref{fig:sysdiagram}. While the CO\textsubscript{2} gas cylinder provides the gas sources via two outlets, several small electric fans are used to encourage the plume creation in the direction from the bottom right of the environment to the top left. Each TurtleBot is equipped with a CozIR CO\textsubscript{2} sensor providing gas concentration measurements at 1 Hz, which are sent via ROS to an offboard PC running the filter and planning algorithms. A VICON motion capture system provides states for each TurtleBot3 robot, which are subsequently controlled via linear and angular speed commands issued from the offboard PC. Real-time visualisation of historical TurtleBot3 trajectories and source location estimates is provided by an overhead projector that is spatially calibrated with the VICON system, as can be seen in the figure.

\begin{figure*}[htb]
    \centering
    \includegraphics[width=0.8\linewidth]{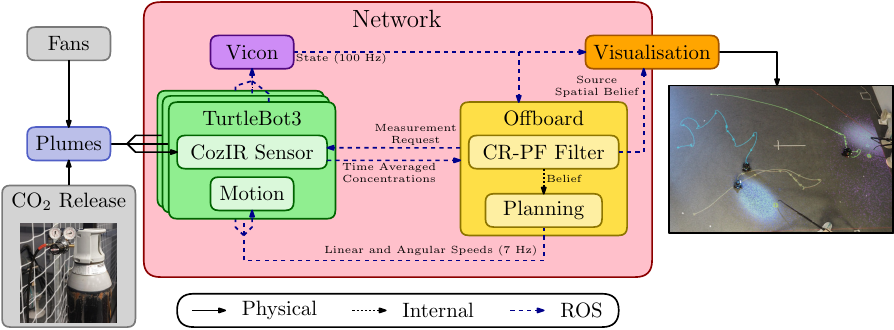}
    \caption{System diagram for the experimental setup.}
    \label{fig:sysdiagram}
\end{figure*} 
 
 For the experimental study, a strong prior is assumed about the number of sources (i.e., $M_{\mathrm{max}}=2$), the prevailing wind and diffusion parameters. An illustrative run of the multi-source search and estimation is shown in Fig. \ref{fig:sampleExpsearch}. It can be seen from Fig. \ref{fig:sampleExpsearch} that the robots are able to localise and distinguish between the two sources as the search progresses.  

 \begin{figure*}[htb]
     \centering
     \captionsetup[subfigure]{justification=centering}
     \begin{subfigure}{0.45\linewidth}
          \includegraphics[width=0.9\linewidth]{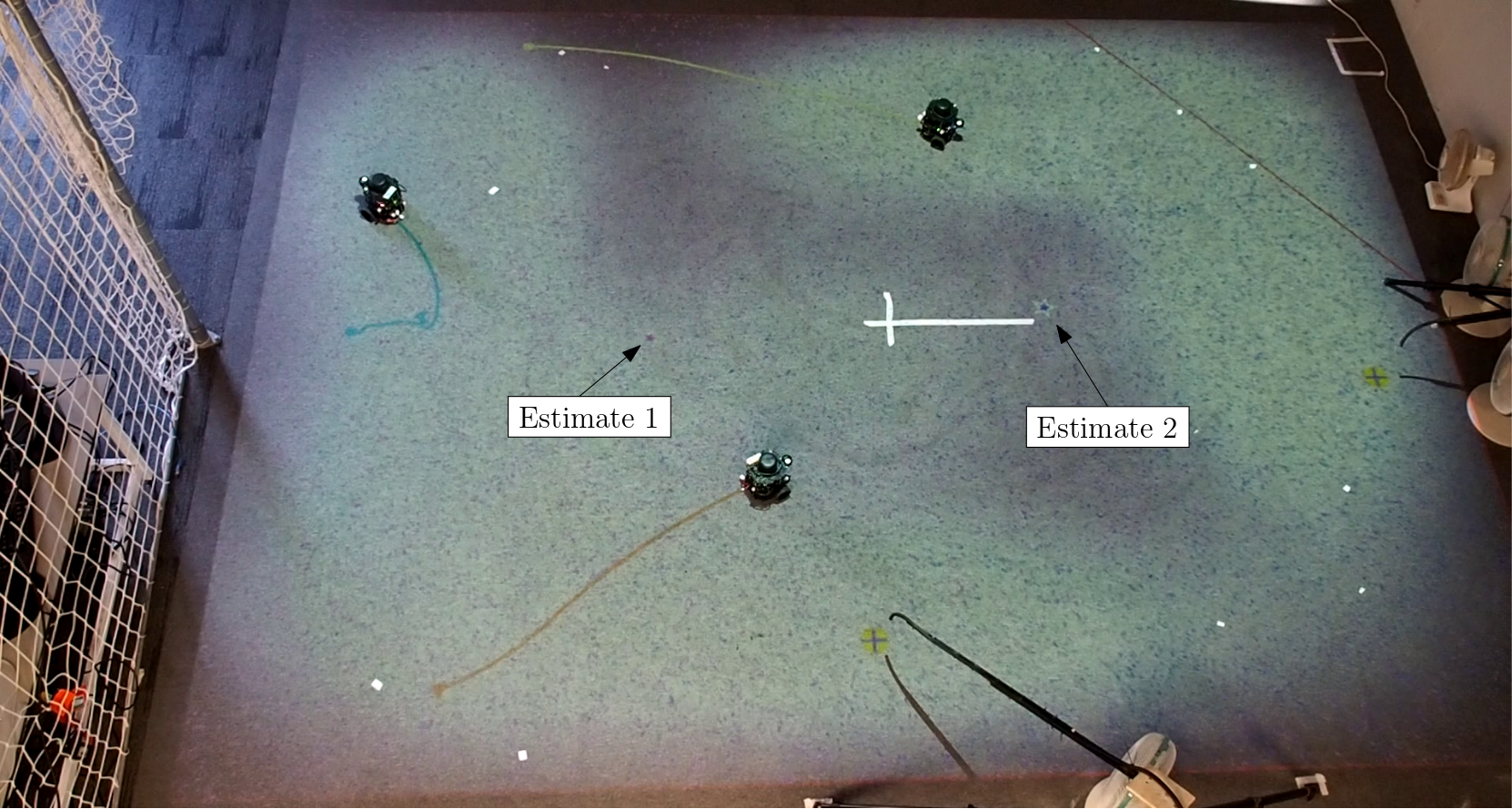}
          \caption{After 10 seconds.}
          \label{fig:ExpIllustrativeRunT1}
     \end{subfigure}
     \begin{subfigure}{0.45\linewidth}
          \includegraphics[width=0.9\linewidth]{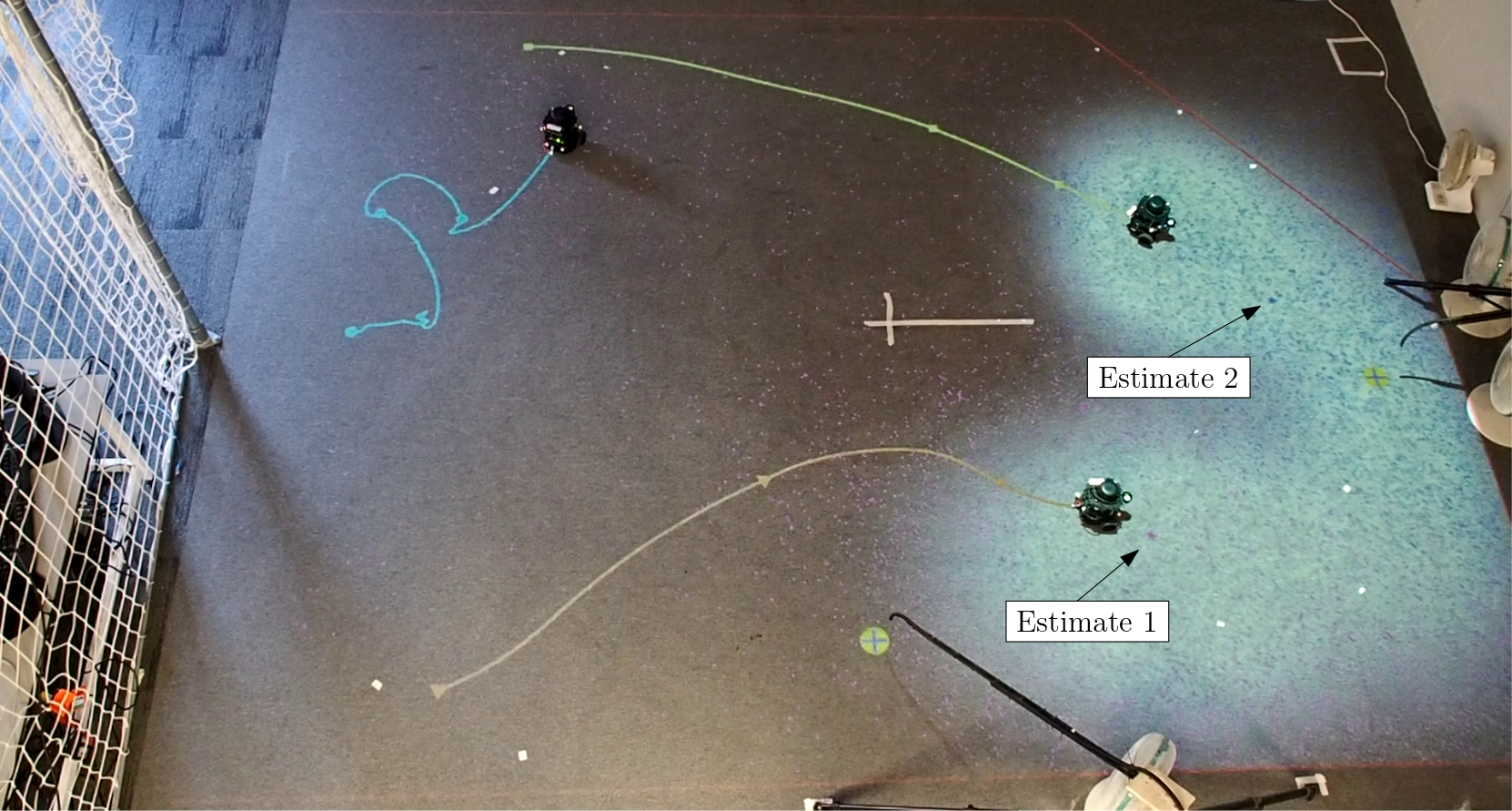}
          \caption{After 76 seconds.}
          \label{fig:ExpIllustrativeRunT2}
     \end{subfigure}
     \begin{subfigure}{0.45\linewidth}
          \includegraphics[width=0.9\linewidth]{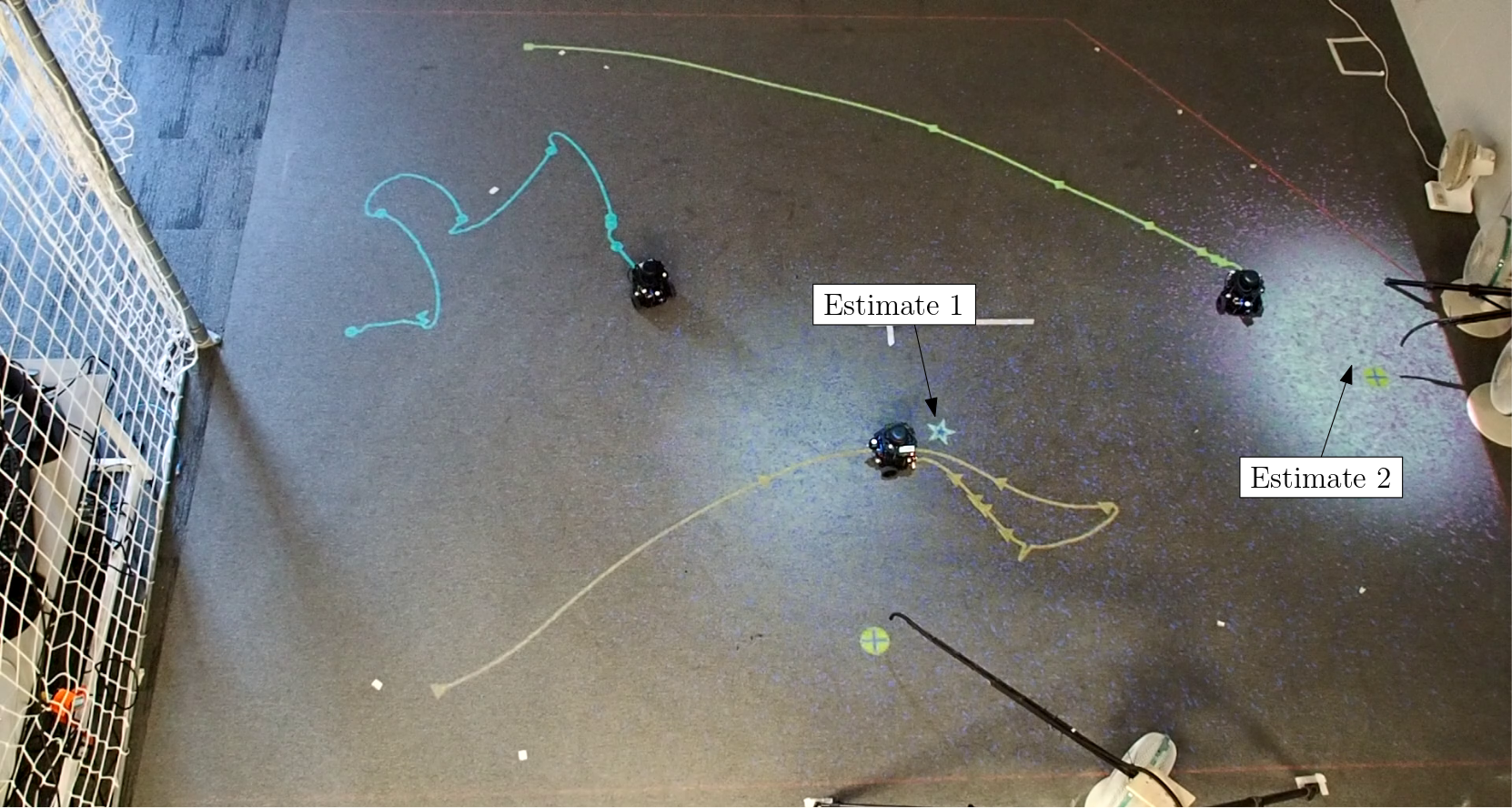}
          \caption{After 206 seconds.}
          \label{fig:ExpIllustrativeRunT3}
     \end{subfigure}
     \begin{subfigure}{0.45\linewidth}
          \includegraphics[width=0.9\linewidth]{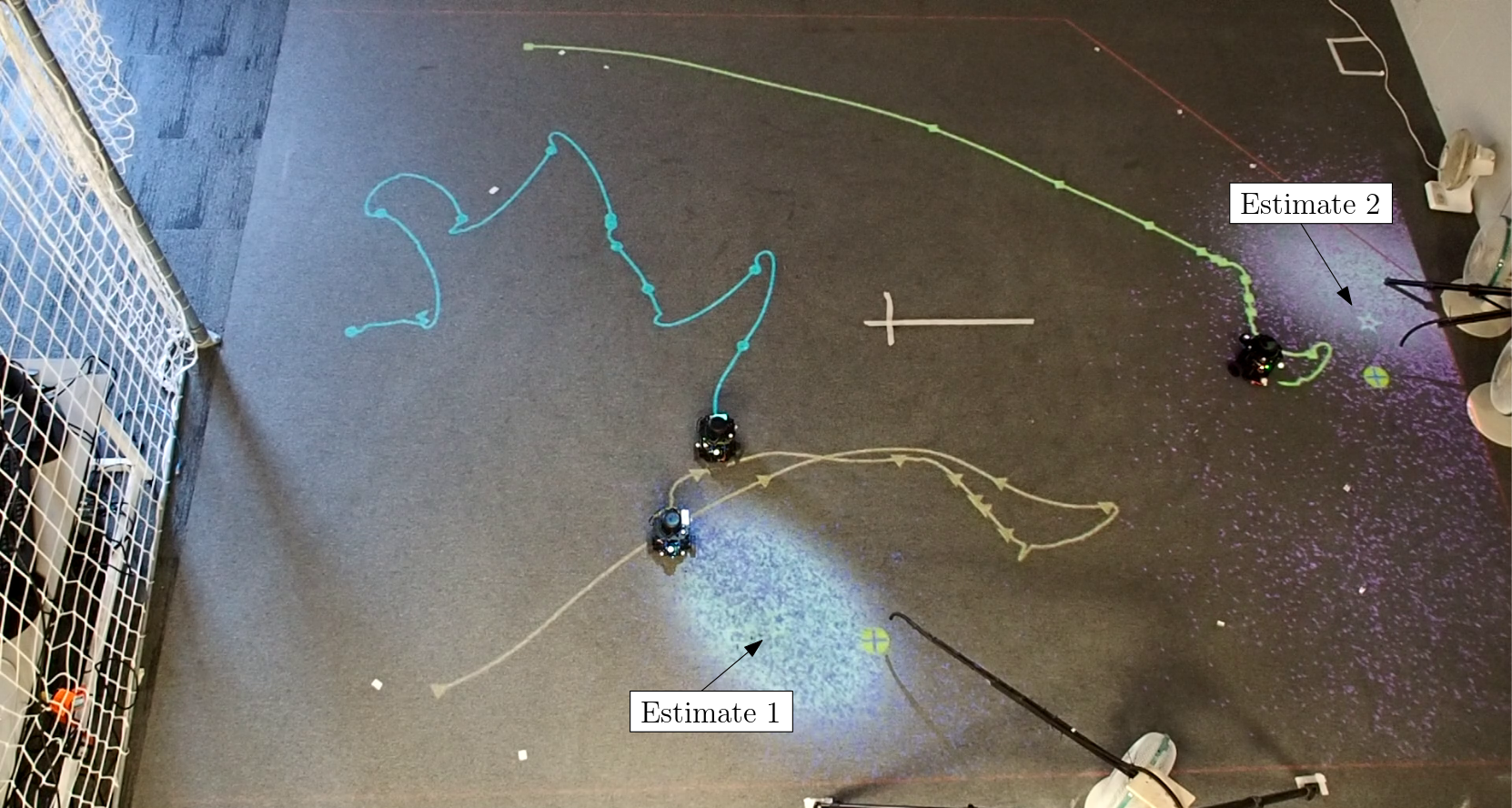}
          \caption{After 300 seconds.}
          \label{fig:ExpIllustrativeRunT4}
     \end{subfigure}
     \caption{Sample run of the proposed wind-aware coverage control based MR-MSTE algorithm with $\alpha=-0.75$. The reader can refer here for a \href{https://youtu.be/Wxf5Jsr_dxI}{video} demonstrating the search. }
     \label{fig:sampleExpsearch}
 \end{figure*}

In order to analyse the performance of the proposed path planning strategies against traditional coverage control, we conducted 5 experimental run per algorithm with three TurtleBot3 robots starting their search at the bottom left corner of the search domain. The search began 1 minute after the gas release was started to ensure sufficient dispersion of the gas in the search domain. The search for each path planner was run for a total of 300 seconds, with an average of 10 seconds per sampling instant. This stop-and-sample interval allows the robot to take a time-averaged measurement to account for the slow response of the sensor. Moreover, it was ensured that the base CO$_2$ concentrations were back to the nominal amount of 550 PPM before every run. Furthermore, the $z_{thr}$ was heuristically chosen at 950 PPM to account for sensor dynamics, noise and minimize miss-detection cases. The results from the experiments corresponding to each path planner are tabulated in Table \ref{tab:expresults}. The performance metrics to compare different path planners focus on the accuracy and precision of source localisation as the ground truth associated with release rates are difficult to obtain. 
In the experimental study, we will be focusing on three performance indexes, i.e., (a) success ratio (SR) defined as the final localisational GOSPA with $\sigma_{th} = 0.85 \mathrm{m}$, (b) final localisation error, and (c) final localisation uncertainty.  
\begin{table}[htb]
    \centering
    \caption{Results from the 5-Run experimental study.}
    \label{tab:expresults}
    \begin{tabular}{|c|c|c|c|}
        \toprule
          & Success & \multicolumn{2}{c|}{Final Localisation} \\\cline{3-4}
          & Ratio (\%) & Error (m) & Uncertainty (m) \\
         \midrule
         WCC & 80 & 0.74 & 0.43 \\
         TCC & 60 & 0.91 & 1.42 \\
         \bottomrule
    \end{tabular}
\end{table} 

The final localization error with respect to both source in each run for the WCC and TCC are plotted in Fig. \ref{fig:PolarPlot}. It can be noted from Fig. \ref{fig:PolarPlotWCC} that the average localisation error (ALE) of WCC is within a radius of 0.85m for runs 2 to 5, resulting in a 80\% SR. Whereas, for TCC, the ALE is below $\sigma_{th}$ for runs 2,4 and 5 resulting in a 60\% SR. The results are further summarised and tabulated in Table \ref{tab:expresults}. It can be seen from Table \ref{tab:expresults} that the proposed wind-aware strategy (with $\alpha=-0.75$) algorithm resulted in 17 cm lower average localisation error as compared the TCC algorithm across 5 runs. Additionally, WCC also resulted in \emph{significantly} lower estimate uncertainty as compared to TCC algorithms as can be noted from the fourth column of Table \ref{tab:expresults}. Therefore, in the light of the above discussion, it can be concluded that the proposed multi-robot multi-source term estimation strategy provides significant advantages in localisation of multiple sources as compared to the traditional coverage control.
\begin{figure}
    \centering
    \captionsetup[subfigure]{justification=centering}
    \begin{subfigure}{0.4\linewidth}
        \includegraphics[width=\linewidth]{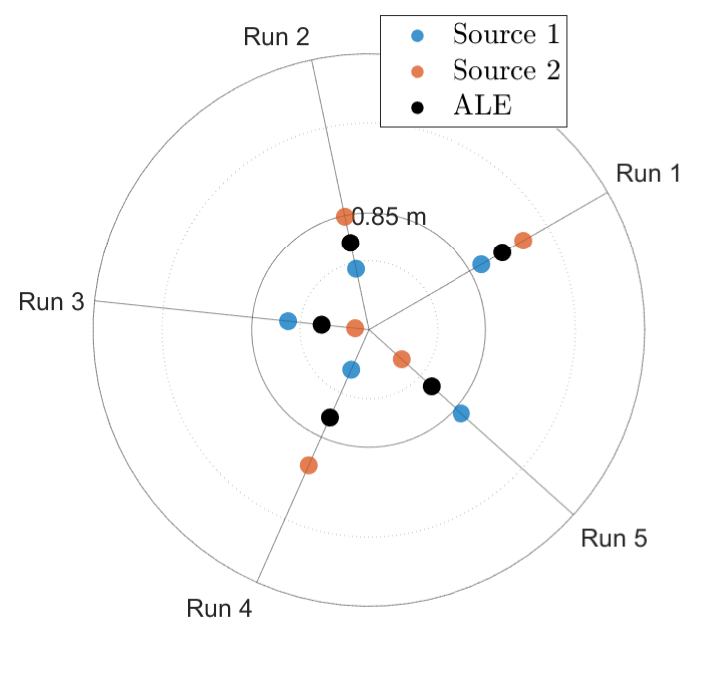}
        \caption{Runs of WCC}
        \label{fig:PolarPlotWCC}
    \end{subfigure}
    \begin{subfigure}{0.4\linewidth}
        \includegraphics[width=\linewidth]{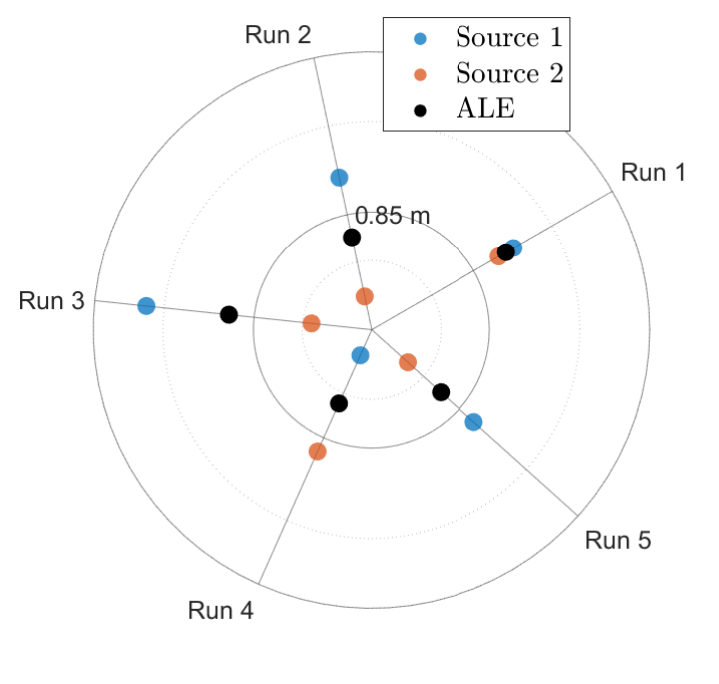}
        \caption{Runs of TCC}
        \label{fig:PolarPlotTCC}
    \end{subfigure}
    \caption{Final estimation error corresponding to both source along with the average localisation error (ALE) per run for different path planning algorithms within 300 seconds.}
    \label{fig:PolarPlot}
\end{figure}

\section{Conclusion}
\tb{In this paper, we developed, evaluated and demonstrated a multi-robot planning and estimation framework for characterizing multiple airborne release sources using sparse mobile sensing. The proposed wind-aware coverage control–based planning enables robots to collaboratively sample gas concentrations, allowing the Bayesian filter to progressively resolve multiple sources whose cardinality and parameters are initially unknown. }

\tb{A key feature of the approach is the use of a dedicated particle filter that exploits physics-informed state transitions and permutation symmetry of JMPD to extract meaningful multi-source estimates from superposition-based measurements. The proposed wind-aware coverage control strategy further enhances estimation performance by guiding sensing platforms toward upwind sampling locations while preserving domain coverage. The effectiveness of the proposed framework was demonstrated through both simulation and real-world experiments.  Comparative Monte Carlo studies showed that the proposed hybrid particle filter enables stable tracking of multiple source hypotheses while maintaining clear source separation and that the wind-aware spatial coordination leads to improved accuracy and efficiency over traditional coverage control and static sensor deployments. Experimental results using controlled CO$_2$ releases and TurtleBot3 platforms confirmed that the framework can operate reliably under realistic sensing noise and environmental uncertainty, supporting its applicability beyond purely simulated settings.}

\tb{Overall, this work demonstrates that combining physics-informed Bayesian inference with coverage-based multi-robot coordination provides a practical and scalable solution to multi-source term estimation. The results highlight the robustness and readiness of mobile robots in complex and time-critical gas release sensing scenarios, paving the way for wider adoption in environmental monitoring, emergency response, and industrial safety related applications. Future work will focus on extending the framework to larger and more dynamic environments, incorporating adaptive estimation of environmental parameters, and developing distributed implementations to support fully autonomous multi-robot deployments.}

\bibliographystyle{ieeetr}
\bibliography{References.bib}

\end{document}